\newcommand{\reals}{\mathbb{R}}
\def\by{{\bar y}}
\newcommand{\ba}{\begin{array}}
\newcommand{\ea}{\end{array}}
\newcommand{\beq}{\begin{equation}}
\newcommand{\eeq}{\end{equation}}
\newcommand{\beqa}{\begin{eqnarray}}
\newcommand{\eeqa}{\end{eqnarray}}
\newcommand{\beqas}{\begin{eqnarray*}}
\newcommand{\eeqas}{\end{eqnarray*}}
\newcommand{\bi}{\begin{itemize}}
\newcommand{\ei}{\end{itemize}}
\def\eqref#1{(\ref{#1})}
\def\bx{{\bar x}}
\def\by{{\bar y}}
\def\bu{{\bar u}}
\def\bv{{\bar v}}
\def\tx{{\tilde x}}
\def\hx{{\hat x}}
\def\hy{{\hat y}}
\def\bbx{{\mathbf x}}
\def\bby{{\mathbf y}}
\def\bbA{{\mathbf A}}
\def\bbB{{\mathbf B}}
\def\bbC{{\mathbf C}}
\def\bbD{{\mathbf D}}
\def\bbW{{\mathbf W}}
\def\bbZ{{\mathbf Z}}
\title{Doubly Stochastic Primal-Dual Coordinate Method for  Bilinear Saddle-Point Problem}
\date{}
\author{
Adams Wei Yu \\
Machine Learning Department\\
Carnegie Mellon University\\
Pittsburgh, PA 15213 \\
\texttt{weiyu@cs.cmu.edu} \\
\and
Qihang Lin \\
Tippie College of Business\\
The University of Iowa\\
Iowa City, IA 52245 \\
\texttt{qihang-lin@uiowa.edu}\\
\and
Tianbao Yang\\
Computer Science Department\\
The University of Iowa\\
Iowa City, IA 52245 \\
\texttt{tianbao-yang@uiowa.edu}
}
\begin{document}
\maketitle

\begin{abstract}%   <- trailing '%' for backward compatibility of .sty file
We propose a doubly stochastic primal-dual coordinate optimization algorithm for empirical risk minimization, which can be formulated as a bilinear saddle-point problem. In each iteration, our method randomly samples a block of coordinates of the primal and dual solutions to update. The linear convergence of our method could be established in terms of 1)
the distance from the current iterate to the optimal solution
and 2) the primal-dual objective gap. We show that the proposed method has a lower overall complexity than existing coordinate methods when either the data matrix has a factorized structure or the proximal mapping on each block is computationally expensive, e.g., involving an eigenvalue decomposition. The efficiency of the proposed method is confirmed by empirical studies on several real applications, such as the multi-task large margin nearest neighbor problem.
\end{abstract}

\section{Introduction}
We consider regularized \emph{empirical risk minimization} (ERM) problems of the following form:
%\small
\begin{eqnarray}
\label{eq:erm}
\min_{x\in\mathbb{R}^p}\left\{ P(x)\equiv\frac{1}{n}\sum_{i=1}^n\phi_i(a_i^Tx)+g(x)\right\},
\end{eqnarray}
%\normalsize
where $a_1,\dots,a_n\in\mathbb{R}^p$ are $n$ data points with $p$ features, $\phi_i:\mathbb{R}\rightarrow \mathbb{R}$ is a convex loss function of the linear predictor $a_i^Tx$, for $i=1,\dots,n$, and $g:\mathbb{R}^p\rightarrow \mathbb{R}$ is a convex regularization function for the coefficient vector $x\in\mathbb{R}^p$ in the linear predictor. We assume $g$ has a \emph{decomposable structure}, namely,
%\small
\begin{eqnarray}
\label{eq:decomp_g}
g(x)=\sum_{j=1}^pg_j(x_j),
\end{eqnarray}
%\normalsize
where $g_j:\mathbb{R}\rightarrow \mathbb{R}$ is only a function of $x_j$, the $j$-th coordinate of $x$. For simplicity, we consider a univariate $g_j$ at this moment.  In Section~\ref{sec:block}, the proposed method will be generalized for the problems having a block-wise decomposable structure with multivariate $g_j$.
We further make the following assumptions:
\begin{assumption}
\label{assume1}
For any $\alpha,\beta\in\mathbb{R}$,
\begin{itemize}
\item $g_j$ is $\lambda$-strongly convex for $j=1,2,\dots,p$, i.e.,
$g_j(\alpha)\geq g_j(\beta)+g'_j(\beta)(\alpha-\beta)+\frac{\lambda}{2}(\alpha-\beta)^2$;
\item $\phi_i$ is $(1/\gamma)$-smooth for $i=1,2,\dots,n$, i.e., $\phi_i(\alpha)\leq \phi_i(\beta)+\nabla\phi_i(\beta)(\alpha-\beta)+\frac{1}{2\gamma}(\alpha-\beta)^2$.
\end{itemize}
\end{assumption}

The problem \eqref{eq:erm} captures many applications in business analytics, statistics, machine learning and data mining, and has triggered many studies in the optimization community. Typically, for each data point $a_i$, there is an associated response value $b_i\in\mathbb{R}$, which can be continuous (in regression problems) or discrete (in classification problems). The examples of loss function %$\phi_i(a_i^Tx)$
$\phi_i(\cdot)$
associated to $(a_i,b_i)$ include:

\begin{itemize}
\setlength\itemsep{0em}
\item \textit{Square Loss}, where $a_i\in\mathbb{R}^p$, $b_i\in\mathbb{R}$ and $\phi_i(z)=\frac{1}{2}(z-b_i)^2$, which corresponds to \textit{linear regression} problem;
\item \textit{Sigmoid Loss}, where $a_i\in\mathbb{R}^p$, $b_i\in\{1,-1\}$ and $\phi_i(z)=\log(1+\exp(-b_iz))$, which corresponds to \textit{logistic regression} problem;
\item \textit{Smooth Hinge Loss}, where $a_i\in\mathbb{R}^p$, $b_i\in\{1,-1\}$ and
%\small
\begin{eqnarray}
\label{eq:ssvmloss}
\phi_i(z)=
\left\{
\begin{array}{ll}
0 & \text{if }b_iz\geq1\\
\frac{1}{2}-b_iz & \text{if }b_iz\leq0\\
\frac{1}{2}(1-b_iz)^2 & \text{otherwise.}
\end{array}
\right.
\end{eqnarray}
which corresponds to the smooth \textit{support vector machine} problem.
\end{itemize}
In fact, if appropriate reformulation is conducted, many other problems can also be reduced to \eqref{eq:erm}, for example, the \textit{multi-task large margin nearest neighbor metric learning} (MT-LMNN) problem (See Section \ref{sec:mt_lmnn}).
%Hence, we can see that the formulation \eqref{eq:erm} is general that it captures a lot of machine learning problems.
%\normalsize

The commonly used regularization terms include the $\ell_2$-regularization $g_j(x)=\frac{\lambda x^2}{2}$ with $\lambda>0$ and $\ell_2+\ell_1$-regularization $g_j(x)=\frac{\lambda_2 x^2}{2}+\lambda_1|x|$ with $\lambda_1,\lambda_2>0$.

We often call \eqref{eq:erm} the \emph{primal problem} and its conjugate \emph{dual problem} is
%\small
\begin{eqnarray}
\label{eq:erm_dual}
\max_{y\in\mathbb{R}^n} \left\{D(y)\equiv -g^*\left(-\frac{A^Ty}{n}\right)-\frac{1}{n}\sum_{i=1}^n\phi_i^*(y_i)\right\},
\end{eqnarray}
%\normalsize
where $A=[a_1,a_2,\dots,a_n]^T\in\mathbb{R}^{n\times p}$ and $\phi_i^*$ and $g^*$ are the convex conjugates of $\phi_i$ and $g$, respectively, meaning that $g^*(v)=\max_{u\in\mathbb{R}^p} \langle u,v\rangle-g(u)$ and $\phi_i^*(\alpha)=\max_{\beta\in\mathbb{R}} \alpha\beta-\phi_i(\beta)$. It is well-known in convex analysis that, under Assumption~\ref{assume1}, $g^*$ is $\frac{1}{\lambda}$-smooth and $\phi_i^*$ is $\gamma$-strongly convex.
%Formally, the convex conjugate of a function $f(u)$ is defined as
%$$f^*(v)=\max_u \langle u,v\rangle-f(u),$$
%and we have the following well-known fact:
%\begin{proposition}
%$f^*$ is $\gamma$-strongly convex if and only if $f$ is $\frac{1}{\gamma}$-smooth.
%\end{proposition}
In this paper, instead of considering purely \eqref{eq:erm} or \eqref{eq:erm_dual}, we are interested in their associated \emph{saddle-point} problem:
%\small
\begin{eqnarray}
\label{eq:sdp}
\min_{x\in\mathbb{R}^p}\max_{y\in\mathbb{R}^n}\left\{ g(x)+\frac{1}{n}y^TAx-\frac{1}{n}\sum_{i=1}^n\phi_i^*(y_i)\right\}.
\end{eqnarray}
%\normalsize
%Besides regularized ERM problem, the saddle-point problem \eqref{eq:sdp} also covers many important applications in robust optimization~\cite{Ben-tal98robustconvex}, image processing~\cite{rudin1992nonlinear,Chambolle:2011}, structured sparse regression~\cite{Chen:2012}, and Nystr\"{o}m-based kernel classification method~\cite{Yang12}.
%, which our methods can be potentially applied.
%We call the optimal solution of problem \eqref{eq:erm} the \emph{primal optimal solution} and denote it by $x^\star$ and call the optimal solution of problem \eqref{eq:erm_dual} the \emph{dual optimal solution} and denote it by $y^\star$.
Let $x^\star$ and $y^\star$ be the optimal solutions of \eqref{eq:erm} and \eqref{eq:erm_dual}, respectively. It is known that the pair $(x^\star,y^\star)$ is a \emph{saddle point} of \eqref{eq:sdp} in the sense that
%\small
\begin{eqnarray}
\label{eq:def_sp}
x^\star=\argmin_{x\in\mathbb{R}^p}\left\{g(x)+\frac{1}{n}(y^\star)^TAx-\frac{1}{n}\sum_{i=1}^n\phi_i^*(y_i^\star)\right\},\\
y^\star=\argmax_{y\in\mathbb{R}^n}\left\{g(x^\star)+\frac{1}{n}y^TAx^\star-\frac{1}{n}\sum_{i=1}^n\phi_i^*(y_i)\right\}.
\end{eqnarray}
%\normalsize

%The \emph{condition number} of problems~\eqref{eq:erm},\eqref{eq:erm_dual}, and \eqref{eq:sdp} are defined as $\kappa= \frac{R^2}{\lambda\gamma}$\footnote{In some literature on first-order methods, the condition number is also defined as ${R^2\over\lambda\gamma}+1$.}, which is an important quantity that illustrates the ill-posedness of the problem and affects the iteration complexity of many first-order optimization methods.

%In this paper, we are interested in efficient and scalable optimization algorithms for solving problem \eqref{eq:erm}, or its equivalent formulations \eqref{eq:erm_dual} and \eqref{eq:sdp}.
%under the setting of ``big data'' where both the data size $n$ and the dimension $p$ are large.

The contributions of this paper can be highlighted as follow:
\begin{itemize}
\item We propose a \emph{doubly stochastic primal-dual coordinate} (DSPDC) method for solving problem~\eqref{eq:sdp} that randomly samples $q$ out of $p$ primal and $m$ out of $n$ dual coordinates to update in each iteration.
\item We show that DSPDC method generates a sequence of primal-dual iterates that \textit{linearly} converges to $(x^\star,y^\star)$ and the primal-dual objective gap along this sequence also \textit{linearly} converges to zero.
    %Our result actually recovers those of many existing ones.
\item We generalize this approach to bilinear saddle-point problems with a block-wise decomposable structure, and show a similar iteration complexity for finding an $\epsilon$-optimal solution.
\item We show that the proposed method has a lower \textit{overall complexity} than existing coordinate methods when either the data matrix has a factorized structure or the proximal mapping on each block is computationally expensive, e.g., involving an eigenvalue decomposition.
\item Our experiments confirm the efficiency of DSPDC on both synthetic and real datasets in various scenarios. A notable application is the \emph{multi-task large margin nearest neighbor} (MT-LMNN) metric learning problem.
%We also discuss the situations where the proposed methods are superior to existing coordinate methods.

\end{itemize}

%Moreover, we show that, with an efficient implementation, DSPDC method has a lower overall complexity (defined in Subsection~\ref{relatedwork}) for finding an $\epsilon$-optimal solution than existing stochastic coordinate methods when the data matrix $A$ has a factorized structure. In addition, we generalize DSPDC to a bilinear saddle-point problem with a block-wise decomposable structure and establish a theoretical lower bound for the number of iterations needed by a family of primal-dual block-coordinate algorithms in order to find an $\epsilon$-optimal solution.

%To find an \emph{$\epsilon$-optimal solution}, i.e., a pair $(\hat x,\hat y)$ that either has an $\epsilon$-distance to $(x^\star,y^\star)$ or has an $\epsilon$-primal-dual objective gap, DSPDC needs $O\big((\sqrt{\frac{\Lambda}{\lambda\gamma n}}\frac{np}{mq}+\max\{\frac{n}{m},\frac{p}{q}\})\log(1/\epsilon)\right)$ iterations. Moreover, we show the this method has a lower overall complexity (%%
%defined in Subsection~\ref{relatedwork}) than existing stochastic coordinate methods when the data matrix $A$ has a factorized structure.

\paragraph{Notation}
\label{sec:notation}
Before presenting our approach, we first introduce the notations that will be used throughout the paper.
%In this paper, we will use the following standard notations.
Let $[d]$ represent the set $\{1,2,...,d\}$.
For $v\in\mathbb{R}^d$, let $v_i$ be its $i$-th coordinate for $i\in[d]$ and $v_I$ be a sub-vector of $v$ that consists of the coordinates of $v$ indexed by a set $I\subset[d]$. Given an $n\times p$ matrix $W$, we denote its $i$-th row and $j$-th column by $W_i$ and $W^j$, respectively. For $I\subset[n]$ and $J\subset[p]$, the matrices $W_I$ and $W^J$ represent sub-matrices of $W$ that consist of the rows indexed by $I$ and columns indexed by $J$, respectively. We denote the entry of $W$ in $i$-th row and $j$-th column by $W_i^j$ and let $W_I^J$ be sub-matrix of $W$ where the rows indexed by $I$ intersect with the columns indexed by $J$.

%According to this notation, we have $A_i=a_i$ for $i\in[n]$ in the ERM problem.

%$\|\cdot\|_\infty$ be $\infty$-norm,
%\footnote{$\|W\|=\frac{v\in\mathbb{R}^p;\|v\|\leq 1}{\sqrt{\|Wv\|}}$}
%Let $\|W\|_F$ and $\|W\|$ be the Frobenius norm and the spectral norm of a matrix $W$, respectively. Given a set $S$, let $|S|$ represent the number of elements in $S$.

Let $\left\langle\cdot,\cdot\right\rangle$ be the inner product in a Euclidean space, $\|\cdot\|$ be the $\ell_2$-norm of a vector and $\|\cdot\|_2$ and $\|\cdot\|_F$ be the spectral norm and the Frobenius norm of a matrix, respectively. For integers $q\in[p]$ and $m\in[n]$, we define $\Lambda_{q,m}$ as a \emph{scale constant} of the data as follows
%\small
\begin{eqnarray}
\label{eq:Lambda}
\Lambda_{q,m}\equiv\max_{I\subset[n],J\subset[p],|I|=m,|J|=q}\|A_I^J\|_2^2.
\end{eqnarray}
The maximum $\ell_2$ norm of data points is therefore $\sqrt{\Lambda_{p,1}}$.
%\begin{eqnarray}
%\label{eq:R}
%R=\max_{i\in[n]}\|A_i\|.
%\quad
%%\label{eq:r}
%r=\max_{i\in[n];~ j=\in[p]}|A_i^j|.
%\end{eqnarray}
The \emph{condition number} of problems~\eqref{eq:erm},\eqref{eq:erm_dual}, and \eqref{eq:sdp} is usually defined as
%\footnote{In some literature on first-order methods, the condition number is also defined as ${R^2\over\lambda\gamma}+1$.}
\begin{equation}\label{eq:kappa}
\kappa\equiv\frac{\Lambda_{p,1}}{\lambda\gamma},
\end{equation}
which affects the iteration complexity of many first-order methods.
%\normalsize
%where $\lambda_{\max}(W)$ represents the largest eigenvalue of a matrix $W$.
%We will simply denote $\Lambda_{q,m}$ by $\Lambda$ when the values of $q$ and $m$ are clear from the context. The tightest choice for $\Lambda$ is the right hand side of \eqref{eq:Lambda} which is not always easy to compute. A relatively loose but computationally easy choice for $\Lambda$ is $mR^2$, which satisfies \eqref{eq:Lambda} because,
%\small
%\begin{eqnarray*}
%\label{eq:LambdaR}
%\|A_I^Jv\|^2\leq \sum_{i\in I}(A_i^Jv)^2\leq\sum_{i\in I}\|A_i^J\|^2\|v\|^2
%%\leq\sum_{i\in I}\|A_i\|^2
%\leq mR^2,
%\end{eqnarray*}
%\normalsize
%for any $v\in\mathbb{R}^q$ with $\|v\|=1$.

\paragraph{Paper Outline} The rest of this paper is organized as follows. Section~\ref{relatedwork} briefly introduces the existing work that are related to ours. Section~\ref{sec:summary} immediately summarizes the results of this paper, along with rough comparisons against the existing methods. In Section~\ref{sec:DSPDC}, we propose the DSPDC algorithm, followed by its theoretical convergence analysis and an efficient implementation for factorized data. The algorithm is extended to the block coordinate update scheme in Section~\ref{sec:block}, which can be applied to two important problems including the multi-task large margin nearest neighbor. We conduct empirical studies in Section~\ref{sec:numerical} to confirm the efficiency of the proposed method, and conclude our paper in Section~\ref{sec:conclusion}. All the proofs are deferred 
to appendix.
\section{Related Work}
\label{relatedwork}

%In the past ten years,
%Deterministic first-order methods have achieved great success in solving various optimization problems including \eqref{eq:erm} due to its good scalability and easy implementation.

To find an $\epsilon$-optimal solution of problem~\eqref{eq:erm}, \eqref{eq:erm_dual} or \eqref{eq:sdp}, the \emph{overall complexity} of an iterative method is defined as the per-iteration computational cost multiplied by the total number of required iterations (called \emph{iteration complexity}). Deterministic first-order methods such as~\citet{Nesterov04book,Nes:05,Nemirovski:04,Chambolle:2011,YuKC14} have to compute a full gradient in each iteration by going through all $p$ features of all $n$ instances at a per-iteration cost of $O(np)$, which can be inefficient for big data. Therefore, stochastic optimization methods that sample one instance or one feature in each iteration become more popular. There are two major categories of stochastic optimization algorithms that are studied actively in recent years: stochastic gradient methods and stochastic coordinate methods. The DSPDC method we propose belongs to the second category.

%Various stochastic gradient methods have been developed for solving \eqref{eq:erm}.

%SVRG consists of multiple stages. At the beginning of each stage, it computes a full gradient at a reference point, and within each stage, it performs iterative updates using a stochastic gradient that is constructed with both the aforementioned full gradient and one sampled instance. Since the full gradient is updated only once in each stage, SVRG has a lower per-iteration cost than deterministic gradient methods.

Recently, there have been increasing interests in \emph{stochastic variance reduced gradient} (SVRG) methods~\citep{JohnsonZhang13,XiaoZhang14,Nitanda:14,Konecny:15b,allen2016katyusha}. SVRG runs in multiple stages. At each stage, it computes a full gradient and then performs $O(\kappa)$ iterative updates with stochastic gradients constructed by sampled instances. Since the full gradient is computed only once in each stage, SVRG has a per-iteration cost of $O(p)$, which is lower than deterministic gradient methods, and it needs $O((n+\kappa)\log(1/\epsilon))$ iterations
%\footnote{This is just an equivalent statement. Technically, SVRG needs $O(\log(1/\epsilon))$ stages with $O(\kappa)$ stochastic gradient steps in each stage. However, the full gradient computed in each stage is equivalent in cost to $O(n)$ steps of stochastic gradient updates.}
to find an $\epsilon$-optimal solution for problem \eqref{eq:erm}, so that the overall complexity of SVRG is $O((np+\kappa p)\log(1/\epsilon))$.
%, which improves the typical $O(1/\epsilon)$ iteration complexity of traditional stochastic gradient decent methods.
Recently, an accelerated SVRG method, named Katyusha~\citep{allen2016katyusha}, further reduces the iteration complexity of SVRG to $O((n+\sqrt{n\kappa})\log(1/\epsilon))$ while maintains the $O(p)$ per-iteration cost so that it achieves an overall complexity of $O((np+\sqrt{n\kappa}p)\log(1/\epsilon))$. The aforementioned overall complexities are obtained when a \emph{uniform sampling} scheme is applied in the construction of stochastic gradient. One can further reduce the $\kappa$ term in these complexities by using a \emph{non-uniform sampling} scheme as pointed out, for example, by~\citet{XiaoZhang14}. However, in this paper, the complexity of each algorithm we present and compare is based on a uniform sampling scheme unless stated otherwise. After the earlier version of our draft was posted online\footnote{\url{https://arxiv.org/pdf/1508.03390v2.pdf}}, \citet{palaniappan2016stochastic} developed an accelerated SVRG method (ASVRG-SP) for solving the saddle-point formulation \eqref{eq:sdp}, which has a complexity\footnote{This complexity is achieved by the individual-split version of ASVRG-SP.} of $\tilde O((np+np\sqrt{\frac{\max\{\Lambda_{p,1},\Lambda_{1,n}\}}{\lambda\gamma}})\log(1/\epsilon))$ by uniform sampling and $\tilde O((np+\sqrt{np}\sqrt{\frac{\max\{n,p\}\|A\|_F^2}{\lambda\gamma}})\log(1/\epsilon))$ by non-uniform sampling. Here and in the rest of the paper, $\tilde O$ contains some logarithmic factors.

\emph{Stochastic incremental gradient} methods~\citep{schmidt2013minimizing,LeRouxSchmidtBach12,Defazio:14a,FINITO,MISO:15,Lan:15a,palaniappan2016stochastic} is also widely studied in recent literature. Different from SVRG, stochastic incremental gradient method computes a full gradient only once at the beginning, but maintains and updates the average of historical stochastic gradients using one sampled instance per iteration. Standard stochastic incremental gradient methods~\citep{schmidt2013minimizing,LeRouxSchmidtBach12,Defazio:14a,FINITO,MISO:15} have a per-iteration cost of $O(p)$ just as SVRG and need $O((n+\kappa)\log(1/\epsilon))$ iterations to find an $\epsilon$-optimal solution so that their overall complexity is the same as SVRG. %, i.e., $O((np+\kappa p)\log(1/\epsilon))$.
Moreover, an accelerated stochastic incremental gradients method, named RPDG~\citep{Lan:15a}, achieves an iteration complexity of only $O((n+\sqrt{n\kappa})\log(1/\epsilon))$ and a per-iteration cost of $O(p)$ so that its overall complexity is the same as Katyusha. The iteration complexity of RPDF and Katyusha is proved to be optimal by~\citet{Lan:15a}.

%Requiring more memory space than SVRG,
 %SDCA~\cite{SSZhang13SDCA}, SVRG~\cite{JohnsonZhang13}
 %The key property of SVRG is that the variance of its stochastic gradient diminishes as the solutions converging to optimality.

In contrast to stochastic gradient methods, \emph{stochastic coordinate} method works by updating randomly sampled coordinates of decision variables~\citep{Nesterov12rcdm,RichtarikTakac12,conf/icml/Shalev-ShwartzT09,journals/corr/FercoqR13a,LuXiao13analysis,Dang:14,Lin:15a,Deng:15,allen2016even,nesterov2016efficiency,qu2016coordinate,Qu:2016:CDA:2996036.2996038,richtarik2016optimal}. %Depending on it is applied to the primal \eqref{eq:erm} or the dual \eqref{eq:erm_dual} formulation, stochastic coordinate method can sample over either features or instances to obtain a low per-iteration cost.
\citet{SSZhang13SDCA,NIPS2013_4938,SSZhang13SDCA} proposed a stochastic dual coordinate ascent (SDCA) method to solve the dual formulation~\eqref{eq:erm_dual}. SDCA has an iteration complexity of $O((n+\kappa)\log(1/\epsilon))$ and has been further improved to the accelerated SDCA (ASDCA) method~\citep{NIPS2013_4938} that achieves an iteration complexity of $\tilde O((n+\sqrt{n\kappa})\log(1/\epsilon))$. The optimal iteration complexity $O((n+\sqrt{n\kappa})\log(1/\epsilon))$ is obtained by the accelerated proximal coordinate gradient (APCG) method~\citep{Lin:15a} when it is applied to the dual problem \eqref{eq:erm_dual}. Extending the deterministic algorithm by~\citet{Chambolle:2011} for saddle-point problems,
\citet{ZhangXiao:14} recently proposed a stochastic primal-dual coordinate (SPDC) method for~\eqref{eq:sdp}, which alternates between maximizing over a randomly chosen dual variable and minimizing over all primal variables and also achieves the optimal $O((n+\sqrt{n\kappa})\log(1/\epsilon))$ iteration complexity. The per-iteration cost is $O(p)$ in all of these coordinate methods.
%Under a non-uniform sampling scheme, the total complexity of SPDC becomes
Note that,  when applied to the primal problem \eqref{eq:erm}, APCG samples a feature of data in each iterative update and find an $\epsilon$-optimal solution with a per-iteration cost of $O(n)$ in $O((p+p\sqrt{\frac{\Lambda_{1,n}}{n\lambda\gamma}})\log(1/\epsilon))$ iterations,
%\footnote{Here, we assume the Lipschitz constant $L_j$ for each partial gradient $\nabla_j P(x)$ is the same as the Lipschitz constant $L$ of $\nabla P(x)$. Please see~\cite{Lin:15a} for details.}
which is also optimal according to~\cite{Lan:15a}.
%Moreover, in~\cite{journals/corr/FercoqR13a} the authors showed that randomized (block) coordinate descent methods can be accelerated by parallelization when applied to the problem of minimizing the sum of a partially separable smooth convex function and a simple separable convex function.
%Although there have been several deterministic first-order methods~\cite{Nemirovski:04,Nesterov:2005:EGT,Tse:08,Chambolle:2011,Chambolle:2014} for solving saddle-point problems, the stochastic approaches designed specifically for problem~\eqref{eq:sdp} are relatively rare.

Some recent works~\citep{conf/nips/ZhaoYWAL14,conf/nips/DaiXHLRBS14,KonecnyRichtarik14,Matsushima:14,Dang13,Deng:15} made attempts in combining stochastic gradient and stochastic coordinate. \citet{conf/nips/ZhaoYWAL14,Matsushima:14,Dang13} proposed randomized block coordinate methods, which utilize stochastic partial gradient of the selected block based on randomly sampled instances and features in each iteration. However, these methods face a constant variance of stochastic partial gradient so that they need $O(1/\epsilon)$ iterations. These techniques are further improved in~\citet{KonecnyRichtarik14,conf/nips/ZhaoYWAL14} with the stochastic variance reduced partial gradient but only obtain the sub-optimal $O((n+\kappa)\log(1/\epsilon))$ iteration complexity.
%, which is not yet optimal.
%However, this is not optimal since it depends on the condition number as $O(\kappa)$ instead of $O(\sqrt{\kappa})$.

%Moreover, these methods face the similar challenge our DSPDC faces when applied to general data matrix, that is, the per-iteration cost is not necessarily reduced while the number of iterations needed has been increased. Our results suggest that such a combination of stochastic gradient and stochastic coordinate method is promising when the data matrix is factorized.

%SPDC achieves the same optimal $O((n+\sqrt{n\kappa})\log(1/\epsilon))$ iteration complexity as APCG, accelerated SDCA and the incremental gradient method by~\cite{Lan:15a}.

%Although the aforementioned stochastic algorithms have achieved great performances for ERM problems, they either only sample over features ($A^j$'s) or only sample over instances ($A_i$'s) in each iteration. It is natural to ask the following questions.

\section{Summary of Results}\label{sec:summary}

Although the aforementioned stochastic coordinate methods have achieved great performances on ERM problem~\eqref{eq:sdp}, they either only sample over primal coordinates or only sample over dual coordinates to update in each iteration. Therefore, it is natural to ask the following questions.

\begin{itemize}
\item \emph{What is the iteration complexity of a coordinate method for problem~\eqref{eq:sdp} that samples both primal and dual coordinates to update in each iteration?}
\item \emph{When is this type of algorithm has a lower overall complexity than purely primal and purely dual coordinate methods?}
%\item \emph{What is the limit of power of this type of algorithm?}
\end{itemize}

%all primal or all dual coordinates, which can still have a high computational cost in each iteration when data has both a large size and a high dimension. By sampling both primal and dual variables to update, DSPDC is shown to have a lower per-iteration cost than these approaches when data matrix is factorized.

%Lin et al.~\cite{DBLP:conf/nips/LinLX14} proposed  an accelerated randomized proximal coordinate gradient (APCG) method for minimizing the sum of two convex functions: one is smooth  and the other is separable over blocks of coordinates and has a simple known structure over each block.

To contribute to the answers to these questions, we propose the DSPDC method in Section~\ref{sec:DSPDC} that samples over both features and instances of dataset by randomly choosing the associated primal and dual coordinates to update in each iteration.
%This method generalizes the SPDC method which samples the dual coordinates only.

%We analyze the theoretical iteration complexity of DSPDC for finding a pair of primal and dual solutions with an $\epsilon$-distance to optimality or with an $\epsilon$-primal-dual objective gap.

To answer the first question, we show in Theorem~\ref{thm:conv_strong_astrsym_switch} and \ref{thm:conv_strong_astrsym_switch_gap} that, if $q$ primal and $m$ dual coordinates are uniformly sampled and updated in each iteration, the number of iterations DSPDC needs to find an $\epsilon$-optimal solution for~\eqref{eq:sdp} is $O((\sqrt{\frac{\Lambda_{q,m}}{\lambda\gamma n}}\frac{np}{mq}+\max\{\frac{n}{m},\frac{p}{q}\})\log(1/\epsilon))$. This iteration complexity is interesting since it matches the optimal $O((n+\sqrt{n\kappa})\log(1/\epsilon))$ iteration complexity of dual coordinate methods~\citep{NIPS2013_4938,Lin:15a,ZhangXiao:14} when $(q,m)=(p,1)$, and also matches the optimal $O((p+p\sqrt{\frac{\Lambda_{1,n}}{n\lambda\gamma}})\log(1/\epsilon))$ iteration complexity of primal coordinate methods~\citep{LuXiao13analysis,Lin:15a} when $(q,m)=(1,n)$.
%When $m=1$ and $q=1$, DSPDC needs $O((\max\{n,p\}+p\sqrt{n\kappa})\log(1/\epsilon))$ iterations which is more than those of all aforementioned methods. This is reasonable since DSPDC updates fewer primal or dual coordinates per-iteration.
%These results shed light on the complexity of solving a bilinear saddle-point problem like~\eqref{eq:sdp} using coordinate optimization methods.
In Section~\ref{sec:block}, we further generalize DSPDC and its complexity to a bilinear saddle-point problem with a block-wise decomposable structure.
%providing an upper bound on the iteration complexity for solving a class of problems using coordinate methods.

To study the second question, we compare different coordinate algorithms based on the overall complexity for finding an $\epsilon$-optimal solution. For most ERM problems, the per-iteration cost of SPDC is $O(p)$ and its the overall complexity is $O(np+p\sqrt{n\kappa}\log(1/\epsilon))$. When $(q,m)=(1,1)$ and without any assumptions on the sparsity of data, the per-iteration cost of DSPDC is $O(\min\{n,p\})$ due to a full-dimensional inner product in the algorithm. If $n\geq p$, which is true for most ERM problems, the overall complexity of DSPDC becomes $O((np+\sqrt{\frac{n\Lambda_{1,1}}{\lambda\gamma }}p^2)\log(1/\epsilon))$, which is not lower than that of SPDC in general\footnote{\label{note1}Note that $\Lambda_{p,1}\geq\Lambda_{1,1}\geq\frac{\Lambda_{p,1}}{p}$ and $\kappa\geq\frac{\Lambda_{1,1}}{\lambda\gamma}\geq\frac{\kappa}{p}$.}.
%\makeatletter
%\newcommand\footnoteref[1]{\protected@xdef\@thefnmark{\ref{#1}}\@footnotemark}
%\makeatother
Nevertheless, we identify two important cases where DSPDC has a lower overall complexity than SPDC and other existing coordinate methods.

The first case is when data $A$ has a \emph{factorized structure}, namely, $A=UV$ with $U\in\mathbb R^{n\times d}$, $V\in\mathbb R^{d\times p}$ and $d< \min\{n, p\}$. The ERM problem with factorized data arises when (random) dimension/instance reduction or matrix sketching/factorization techniques are applied to $A$ in order to reduce the storage and computational cost. More examples are provided in Section~\ref{sec:implementation}. In this case, choosing $(q,m)=(1,1)$ and using an efficient implementation, our DSPDC has an overall complexity of $O((nd+\sqrt{\frac{n\Lambda_{1,1}}{\lambda\gamma }}pd) \log (1/\epsilon))$, better than the $O((npd+\sqrt{\kappa n}pd) \log (1/\epsilon))$ complexity of SPDC with the same efficient implementation. See Table~\ref{table:complexity} for comparisons with more existing techniques for this class of problems.
%\footnoteref{note1}
%\footnote{Note that $\kappa\geq\frac{\Lambda_{1,1}}{\lambda\gamma}$}
%Such factorized data appears in a variety of tasks in  machine learning and data mining and can be obtained as a result of non-negative matrix factorization~\cite{Lee00}, singular value decomposition~\cite{Deerwester90}, randomized matrix approximation~\cite{Frieze:2004}. The factorized data is also well motivated by reducing the costs of solving ERM problems using low-rank approximation or randomized reduction, which recently receives a surge of interests~\cite{Pham:15,COLT13:Zhang,mahoney-2011-randomized}.
%Our DSPDC works with any factorized data matrix, and thus, provides the flexibility when working with different types of data preprocessing approaches that lead to such a factorized structure.

The second case is when solving a block-wise decomposable bilinear saddle-point problem where the proximal mapping on each block is computationally expensive. The applications with this property include trace regression~\citep{Slawski:15} and distance metric learning~\citep{Weinberger:2008,Weinberger:2009,ParameswaranW10}, where each block of variables needs to be a $d\times d$ positive semi-definite matrix so that the proximal mapping involves an eigenvalue decomposition with a complexity of $O(d^3)$. When $(q,m)=(1,1)$ and $n\geq p$, DSPDC requires solving eigenvalue decomposition only for one block of variables so that its overall complexity is $O((d^3+pd^2)(n+\sqrt{\frac{n\Lambda_{1,1}}{\lambda\gamma}}p)
\log(1/\epsilon))$ as shown in Section~\ref{sec:block}, which is lower than the $O((npd^3+\sqrt{n\kappa}pd^3)\log(1/\epsilon))$ overall complexity of SPDC when $\sqrt{\Lambda_{1,1}}p\leq\sqrt{\Lambda_{p,1}}d$. See Table~\ref{table:complexity_matrix} for comparisons with more existing techniques for this class of problems.

Although it is not our main focus, we note that applying a non-uniform sampling on the primal and dual coordinates can further reduce the overall complexity of our DSPDC just as other coordinate methods~\citep{ZhangXiao:14,csiba2016importance,richtarik2016optimal,csiba2015stochastic,qu2016coordinate,Qu:2016:CDA:2996036.2996038,allen2016even} .

%In the following sections, we will first introduce our algorithm and give its convergence properties. Based on our main results, we make important extensions,

%Last but not least, we contribute an answer to the third question by providing a theoretical lower bound on the number of iterations for a family of primal-dual block coordinate methods for strongly convex bilinear saddle-point problems. We show that, when $n=p$, any coordinate algorithm that randomly updates one primal and one dual coordinates per iteration, i.e., $(q,m)=(1,1)$, will need $O((n+\sqrt{\frac{n\Lambda_{1,1}}{\lambda\gamma }}) \log (1/\epsilon))$ iterations in order to find an $\epsilon$-optimal solution. This indicates that the $n\log (1/\epsilon)$ component in the complexity of DSPDC method is optimal.
%optimal if $\frac{\Lambda_{1,1}}{\lambda\gamma }$ is small and different from optimality by a factor of $O(n)$ (or $O(p)$) if $\frac{\Lambda_{1,1}}{\lambda\gamma }$ is large.
%We leave it as a future work to further narrow the gap between the upper and lower bounds of the complexity of primal-dual coordinate algorithms in the latter case.
%(or one out of $n$ blocks of primal and dual coordinates)

\section{Doubly Stochastic Primal-Dual Coordinate Method}\label{sec:DSPDC}
\subsection{Algorithm and Convergence Properties}

\begin{algorithm}[t]
\caption{Doubly Stochastic Primal-Dual Coordinate (DSPDC) Method}
\label{alg:aspdc_general}
\begin{algorithmic}[1]
		\REQUIRE $x^{(-1)}=x^{(0)}=\bx^{(0)}\in\mathbb{R}^p$, $y^{(-1)}=y^{(0)}=\by^{(0)}\in\mathbb{R}^n$, and parameters $(\theta,\tau,\sigma)$.
%and the number of iterations $T$.
		\ENSURE $x^{(T)}$ and $y^{(T)}$
        \FOR{$t=0,1,2,\ldots,T-1$}
		\STATE Uniformly and randomly choose $I\subset[n]$ and $J\subset[p]$ of sizes $m$ and $q$, respectively.
\STATE Update the primal and dual coordinates
\begin{eqnarray}
\label{eq:y_update}
y_i^{(t+1)}&=&\left\{\begin{array}{ll}
\argmax_{\beta\in\mathbb{R}}\left\{\frac{1}{n}\langle A_i,\bx^{(t)}\rangle\beta-\frac{\phi^*_i(\beta)}{n}-\frac{1}{2\sigma}(\beta-y_i^{(t)})^2\right\}&\text{if }i\in I,\\
y_i^{(t)}&\text{if }i\notin I,
\end{array}
\right.\\\label{eq:by_update}
\by^{(t+1)}&=&y^{(t)}+\frac{n}{m}(y^{(t+1)}-y^{(t)}),\\\label{eq:x_update}
x_j^{(t+1)}&=&\left\{\begin{array}{ll}
\argmin_{\alpha\in\mathbb{R}}\left\{\frac{1}{n}\langle A^j,\by^{(t+1)}\rangle\alpha+g_j(\alpha)+\frac{1}{2\tau}(\alpha-x_j^{(t)})^2\right\}&\text{if }j\in J,\\
x_j^{(t)}&\text{if }j\notin J,
\end{array}
\right.\\\label{eq:bx_update}
\bx^{(t+1)}&=&x^{(t)}+(\theta+1)(x^{(t+1)}-x^{(t)}).
\end{eqnarray}
        \ENDFOR
	\end{algorithmic}
\end{algorithm}

In this section, we propose the doubly stochastic primal-dual coordinate (DSPDC) method in Algorithm~\ref{alg:aspdc_general} for  problem \eqref{eq:sdp}.
%This algorithm can be applied to problem \eqref{eq:sdp} when the following assumptions are satisfied.
%\begin{enumerate}
%\item[A.1]The function $g_j$ is $\lambda$-strongly convex for $j=1,2,\dots,p$ and the function $\phi_i$ is $(1/\gamma)$-smooth  for $i=1,2,\dots,n$.
%\end{enumerate}
%When $\phi_i$ is a $(1/\gamma)$-smooth and $g$ is $\lambda$-strongly convex, the saddle-point problem~\eqref{eq:sdp} has a unique solution denoted by $(x^{\star},y^{\star})$ with $x^{\star}$ and $y^{\star}$ being the optimal primal and dual solutions for~\eqref{eq:erm} and~\eqref{eq:erm_dual}, respectively.
In Algorithm~\ref{alg:aspdc_general}, the primal and dual solutions $(x^{(t+1)},y^{(t+1)})$ are updated as~\eqref{eq:x_update} and~\eqref{eq:y_update} in the randomly selected $q$ and $m$ coordinates indexed by $J$ and $I$, respectively\footnote{Here, we hide the dependency of $I$ and $J$ on $t$ to simplify the notation.}. These updates utilize the first-order information provided by the vectors $A\bx^{(t)}$ and $A^T\by^{(t+1)}$ where  $(\bx^{(t)},\by^{(t+1)})$ are updated using the momentum steps~\eqref{eq:bx_update} and~\eqref{eq:by_update} which are commonly used to accelerate gradient (AG) methods~\citep{Nesterov04book,Nes:05}. Algorithm~\ref{alg:aspdc_general} requires three control parameters $\theta$, $\tau$ and $\sigma$ and its convergence is ensured after a proper choice of these parameters as shown in Theorem~\ref{thm:conv_strong_astrsym_switch}. The proofs of all  theorems are deferred to the Appendix.
%to Section~\ref{sec:proofs}.

\begin{theorem}
\label{thm:conv_strong_astrsym_switch}
Suppose $\theta$, $\tau$ and $\sigma$ in Algorithm~\ref{alg:aspdc_general} are chosen so that
%\small
\begin{eqnarray}
\label{eq:thetatausigma}
%\hspace{-0.2cm}
\theta=\frac{p}{q}-\frac{p/q}{\sqrt{\frac{\Lambda}{\lambda\gamma n}}\frac{np}{mq}+\max\{\frac{n}{m},\frac{p}{q}\}},
~
\tau\sigma=\frac{nmq}{4p\Lambda},
~
\frac{p}{2q\lambda\tau}+\frac{p}{q}=\frac{n^2}{2m\gamma\sigma}+\frac{n}{m}
\end{eqnarray}
\normalsize
where $\Lambda$ is any constant such that $\Lambda\geq\Lambda_{q,m}$.
For each $t\geq0$, Algorithm~\ref{alg:aspdc_general} guarantees
\small
\begin{eqnarray*}
&&\left(\frac{p}{2q\tau}+\frac{p\lambda}{q}\right)\mathbb{E}\|x^\star-x^{(t)}\|^2
+\left(\frac{n}{4m\sigma}+\frac{\gamma}{m}\right)\mathbb{E}\|y^\star-y^{(t)}\|^2\\
&\leq&\left(1-\frac{1}{\max\left\{\frac{p}{q},\frac{n}{m}\right\}+\sqrt{\frac{\Lambda}{\lambda\gamma n}}\frac{np}{mq}}\right)^t\bigg[\left(\frac{p}{2q\tau}+\frac{p\lambda}{q}\right)\|x^\star-x^{(0)}\|^2+\left(\frac{n}{2m\sigma}+\frac{\gamma}{m}\right)\|y^\star-y^{(0)}\|^2\bigg].
\end{eqnarray*}
\normalsize
\end{theorem}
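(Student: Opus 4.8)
The plan is to exhibit a Lyapunov (potential) function equal to the left-hand side of the claimed bound and to show it contracts by the stated factor each iteration. First I would record the first-order optimality characterizations of the two proximal updates \eqref{eq:y_update} and \eqref{eq:x_update}. Since $\phi_i^*$ is $\gamma$-strongly convex (dual to the $(1/\gamma)$-smoothness of $\phi_i$) and each $g_j$ is $\lambda$-strongly convex, the minimizer property of a strongly convex proximal subproblem gives, for every sampled dual index $i\in I$ and every $\beta$,
\[
\tfrac1n\langle A_i,\bx^{(t)}\rangle(\beta - y_i^{(t+1)}) - \tfrac1n\bigl(\phi_i^*(\beta)-\phi_i^*(y_i^{(t+1)})\bigr)
\le \tfrac{1}{2\sigma}\bigl[(\beta-y_i^{(t)})^2-(\beta-y_i^{(t+1)})^2-(y_i^{(t+1)}-y_i^{(t)})^2\bigr]-\tfrac{\gamma}{2n}(\beta-y_i^{(t+1)})^2,
\]
and an analogous three-point inequality holds for each primal index $j\in J$ with $\bx,\tau,\lambda$ in place of $\bx$-data and $\sigma,\gamma$. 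Evaluating these at $\beta=y_i^\star$ and $\alpha=x_j^\star$ introduces the distance-to-optimum terms that will drive the contraction.

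Next I would sum the dual inequalities over $i\in I$ and the primal ones over $j\in J$ and take the conditional expectation over the uniform choice of $I,J$. Because each coordinate lies in its sampled set with probability $m/n$ (dual) or $q/p$ (primal), the expected sums over the sampled blocks become full sums scaled by $m/n$ and $q/p$, and the standard coordinate identity $\mathbb{E}_J[(x_j^\star-x_j^{(t+1)})^2]=(1-\tfrac qp)(x_j^\star-x_j^{(t)})^2+\tfrac qp(x_j^\star-\tilde x_j^{(t+1)})^2$ (and its dual analogue) is what produces the inverse-probability weights $p/(2q\tau)+p\lambda/q$ and $n/(4m\sigma)+\gamma/m$ in the potential. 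A key preliminary is that the rescaled momentum step \eqref{eq:by_update} is unbiased: since $\by^{(t+1)}_i = y_i^{(t)} + \tfrac nm\,\mathbf{1}[i\in I]\,(\tilde y_i^{(t+1)}-y_i^{(t)})$, one has $\mathbb{E}_I[\by^{(t+1)}] = \tilde y^{(t+1)}$, the full dual proximal update; the factor $n/m$ is chosen precisely to cancel the sampling probability, and $(\theta+1)$ plays the analogous role in \eqref{eq:bx_update}.

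The heart of the argument is controlling the bilinear coupling. Writing the extrapolated iterate as $\bx^{(t)} = (\theta+1)x^{(t)}-\theta x^{(t-1)}$, so that $\bx^{(t)}-x^\star = (x^{(t)}-x^\star)+\theta(x^{(t)}-x^{(t-1)})$, the cross terms split into a part handled by the saddle-point optimality \eqref{eq:def_sp} (which lets the $g$ and $\phi^*$ value differences cancel against the linear coupling evaluated at $(x^\star,y^\star)$) and a momentum part $\theta\langle A(x^{(t)}-x^{(t-1)}),\,\cdot\,\rangle$. I would split this inner product as $\langle A(x^{(t)}-x^{(t-1)}),y^{(t+1)}-y^\star\rangle=\langle A(x^{(t)}-x^{(t-1)}),y^{(t)}-y^\star\rangle+\langle A(x^{(t)}-x^{(t-1)}),y^{(t+1)}-y^{(t)}\rangle$, the first piece telescoping across consecutive iterations (carried with weight $\theta$) and the second bounded by Young's inequality. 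It is exactly here that the scale constant enters: restricting to the sampled rows and columns, $\|A_I^J\|_2^2\le \Lambda_{q,m}\le\Lambda$ by \eqref{eq:Lambda}, converting the residual into a controlled multiple of $\|x^{(t)}-x^{(t-1)}\|^2$ and $\|y^{(t+1)}-y^{(t)}\|^2$. The step-size product constraint $\tau\sigma = nmq/(4p\Lambda)$ is what makes these residual quadratics absorbable into the negative $-(y^{(t+1)}-y^{(t)})^2$ and $-(x^{(t+1)}-x^{(t)})^2$ terms supplied by the prox inequalities, while the balance condition $\tfrac{p}{2q\lambda\tau}+\tfrac pq = \tfrac{n^2}{2m\gamma\sigma}+\tfrac nm$ equalizes the primal and dual contraction coefficients so that a single geometric factor governs both.

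Assembling these pieces yields a one-step recursion $\mathbb{E}[V^{(t+1)}+C^{(t+1)}]\le\rho\,\mathbb{E}[V^{(t)}+C^{(t)}]$ with contraction factor $\rho = 1-\bigl(\max\{p/q,n/m\}+\sqrt{\Lambda/(\lambda\gamma n)}\,np/(mq)\bigr)^{-1}$, where $V^{(t)}$ is the weighted squared-distance potential and $C^{(t)}$ the telescoped momentum coupling; matching $\rho$ to the advertised factor is precisely what pins down the choice of $\theta$ in \eqref{eq:thetatausigma}. Iterating from $t=0$, using that the momentum difference vanishes initially so that $C^{(0)}=0$ by $x^{(-1)}=x^{(0)}$ and $y^{(-1)}=y^{(0)}$, and discarding the nonnegative remainder on the left gives the stated bound; the asymmetry between the left coefficient $n/(4m\sigma)$ and the right coefficient $n/(2m\sigma)$ reflects that the Young bound on the coupling consumes half of the dual-distance weight at every general step, whereas at $t=0$ no such correction is incurred. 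I expect the main obstacle to be the bookkeeping in the coupling step: one must verify that after conditioning on the past and taking expectation over the \emph{jointly} sampled $(I,J)$ — recalling that $\by^{(t+1)}$, and hence the primal update, already depends on the dual sample $I$ — the momentum cross terms telescope cleanly rather than leaving uncontrolled cross-sampling correlations, and that the three relations in \eqref{eq:thetatausigma} are simultaneously exactly what is needed to close the recursion at the claimed rate.
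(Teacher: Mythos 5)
Your proposal is correct and follows essentially the same route as the paper's proof: three-point proximal inequalities with sampling-probability reweighting (the paper's Lemmas~\ref{thm:ZhangXiaolemma} and~\ref{thm:ZhangXiaolemma2}), decomposition of the bilinear coupling via the extrapolation identities with a telescoping momentum term and a Young-inequality bound through $\Lambda_{q,m}$ and $\tau\sigma=\frac{nmq}{4p\Lambda}$ (the paper's Proposition~\ref{thm:mainprop_strong_asym}), and a contracting potential with $C^{(0)}=0$ from $x^{(-1)}=x^{(0)}$, $y^{(-1)}=y^{(0)}$. You also correctly identify the origin of the $\frac{n}{4m\sigma}$ versus $\frac{n}{2m\sigma}$ asymmetry as the final Young bound absorbing the leftover coupling term, which is exactly how the paper closes the argument.
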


%\begin{remark}
\noindent\textbf{Remark 1}
\emph{
For a given $\Lambda$, the values of $\tau$ and $\sigma$ can be solved from the last two equations of \eqref{eq:thetatausigma} in closed forms:
%\small
\begin{eqnarray}
\label{eq:opttau2}
\tau&=&
\frac{p}{q\lambda}\left(\left(\frac{n}{m}-\frac{p}{q}\right)+\sqrt{\left(\frac{n}{m}-\frac{p}{q}\right)^2+\frac{4(np)^2\Lambda}{(mq)^2n\lambda\gamma}}\right)^{-1},\\
\label{eq:optsigma2}
\sigma&=&
\frac{n^2}{m\gamma}\left(\left(\frac{p}{q}-\frac{n}{m}\right)+\sqrt{\left(\frac{n}{m}-\frac{p}{q}\right)^2+\frac{4(np)^2\Lambda}{(mq)^2n\lambda\gamma}}\right)^{-1},
\end{eqnarray}
which are referred to as the primal and dual step size, respectively.
If both primal and dual coordinates are sampled at the same ratio, i.e., ${q\over p}={m\over n}$, then we have the following simplified version:
\begin{eqnarray}
\label{eq:opttau3}
\tau= {m\over 2} \sqrt{\gamma\over \Lambda n\lambda},\quad
\sigma = {m\over 2} \sqrt{n\lambda\over \Lambda\gamma}.
\end{eqnarray}
\normalsize
According to the convergence rate above, the best choice of $\Lambda$ is $\Lambda_{q,m}$. Although the exact computation of $\Lambda_{q,m}$ by definition \eqref{eq:Lambda} may be costly, for instance, when $q\approx \frac{p}{2}$ or $m\approx \frac{n}{2}$, it is tractable when $q$ and $m$ are close to 1 or close to $p$ and $n$. In practice, we suggest choosing $\Lambda=\frac{mqR^2\Lambda_{p,1}}{p}$ as an approximation of $\Lambda_{q,m}$, which provides reasonably good empirical performance (see Section~\ref{sec:numerical}).
}

Besides the distance to the saddle-point $(x^\star,y^\star)$, a useful quality measure for the solution $(x^{(t)},y^{(t)})$ is its primal-dual objective gap, $P(x^{(t)})-D(y^{(t)})$, because it can be evaluated in each iteration and used as a stopping criterion in practice. The next theorem establishes the convergence rate of the primal-dual objective gap ensured by DSPDC.

\begin{theorem}
\label{thm:conv_strong_astrsym_switch_gap}
Suppose $\tau$ and $\sigma$ are chosen as~\eqref{eq:thetatausigma} while $\theta$ is replaced by
%\small
\begin{eqnarray}
\label{eq:thetagap}
\theta=\frac{p}{q}-\frac{p/q}{2\sqrt{\frac{\Lambda}{\lambda\gamma n}}\frac{np}{mq}+2\max\{\frac{n}{m},\frac{p}{q}\}}
\end{eqnarray}
\normalsize
in Algorithm~\ref{alg:aspdc_general}.
For each $t\geq0$, Algorithm~\ref{alg:aspdc_general} guarantees
\small
\begin{eqnarray*}
&&\mathbb{E}\left[P(x^{(t)})-D(y^{(t)})\right]\\
&\leq&
\left(1-\frac{1}{2\sqrt{\frac{\Lambda}{\lambda\gamma n}}\frac{np}{mq}+2\max\{\frac{n}{m},\frac{p}{q}\}}\right)^t\times
\left\{\frac{1}{\min\left\{\frac{p}{q},\frac{n}{m}\right\}}+\frac{\max\left\{\frac{\|A\|^2}{n\gamma},\frac{\|A\|^2}{\lambda n^2}\right\}}{\min\left\{\frac{\lambda p}{q},\frac{\gamma }{m}\right\}}\right\}\times\\
&&\bigg[\left(\frac{p}{2q\tau}+\frac{p\lambda}{2q}\right)\|x^{(0)}-x^\star\|^2+\left(\frac{n}{2m\sigma}+\frac{\gamma}{2m}\right)\|y^{(0)}-y^\star\|^2+\max\left\{\frac{p}{q},\frac{n}{m}\right\}\left(P(x^{(0)})-D(y^{(0)})\right)\bigg].
\end{eqnarray*}
\normalsize
\end{theorem}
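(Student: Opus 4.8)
The plan is to reuse the per-iteration estimate already developed for Theorem~\ref{thm:conv_strong_astrsym_switch}, but to retain a nonnegative \emph{restricted gap} that is simply discarded there. Writing $K(x,y)=g(x)+\frac1n y^TAx-\frac1n\sum_i\phi_i^*(y_i)$ for the saddle function in \eqref{eq:sdp} and $V=K(x^\star,y^\star)$, define $\mathcal{G}_t\equiv K(x^{(t)},y^\star)-K(x^\star,y^{(t)})\ge0$. Because the rate in \eqref{eq:thetagap} uses \emph{twice} the denominator of the one in \eqref{eq:thetatausigma}, the momentum $\theta$ is smaller, and this is precisely the extra slack one needs to carry $\mathcal{G}_t$ through the recursion while still contracting geometrically. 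The argument then separates into two independent parts: (i) bound the \emph{full} gap $P(x^{(t)})-D(y^{(t)})$ by the squared distances to $(x^\star,y^\star)$ plus $\mathcal{G}_t$, and (ii) contract a joint potential built from those distances and $\mathcal{G}_t$.

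For step (i) I would use $P(x)=\max_y K(x,y)$, $D(y)=\min_x K(x,y)$ and split $P(x^{(t)})-D(y^{(t)})$ through $V$. The inner maximization over $y$ contributes $K(x^{(t)},\hat y^{(t)})-K(x^{(t)},y^\star)$ (with $\hat y^{(t)}$ the maximizer); by the $\frac{\gamma}{n}$-strong concavity of $K(x^{(t)},\cdot)$ this is at most $\frac{n}{2\gamma}\|v\|^2$ for \emph{any} supergradient $v$ at $y^\star$, and the saddle condition $\frac1n Ax^\star\in\partial(\frac1n\sum_i\phi_i^*)(y^\star)$ supplies $v=\frac1n A(x^{(t)}-x^\star)$, giving the bound $\frac{\|A\|^2}{2n\gamma}\|x^{(t)}-x^\star\|^2$. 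Symmetrically, the minimization over $x$ yields $\frac{\|A\|^2}{2\lambda n^2}\|y^{(t)}-y^\star\|^2$ from $\lambda$-strong convexity of $K(\cdot,y^{(t)})$ and $-\frac1n A^Ty^\star\in\partial g(x^\star)$, while the two leftover cross terms collapse exactly into $\mathcal{G}_t$. The crucial point is that this step uses \emph{no} smoothness of $g$ or $\phi_i^*$, only the fixed saddle subgradients. This produces
\[
P(x^{(t)})-D(y^{(t)})\le \frac{\|A\|^2}{2n\gamma}\|x^{(t)}-x^\star\|^2+\frac{\|A\|^2}{2\lambda n^2}\|y^{(t)}-y^\star\|^2+\mathcal{G}_t .
\]

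For step (ii) I would show that the joint potential $\Psi_t=\Phi_t+\min\{\tfrac pq,\tfrac nm\}\,\mathcal{G}_t$, where $\Phi_t$ is the weighted-distance potential appearing in the theorem's bracket, contracts as $\mathbb{E}[\Psi_t]\le\rho^t\Psi_0$ with $\rho=1-\left(2\sqrt{\frac{\Lambda}{\lambda\gamma n}}\frac{np}{mq}+2\max\{\frac nm,\frac pq\}\right)^{-1}$. Feeding this into the display of step (i): each coupling-distance term is at most $\Psi_t$ divided by the corresponding coefficient of $\Phi_t$, which is $\ge\frac12\min\{\frac{\lambda p}{q},\frac{\gamma}{m}\}$, and the two $\frac12$ factors average the two coupling terms into a single maximum, giving the second summand of $C_1$; while $\mathcal{G}_t\le\Psi_t/\min\{\frac pq,\frac nm\}$ gives the first summand $\frac{1}{\min\{p/q,n/m\}}$. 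Finally $\Psi_0=\Phi_0+\min\{\frac pq,\frac nm\}\mathcal{G}_0\le\Phi_0+\max\{\frac pq,\frac nm\}\big(P(x^{(0)})-D(y^{(0)})\big)$, using $\mathcal{G}_0\le P(x^{(0)})-D(y^{(0)})$ and $\min\le\max$, which reproduces the computable bracketed initial quantity in the statement.

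The main obstacle is establishing the contraction of $\Psi_t$ in step (ii), i.e.\ carrying the extra $\mathcal{G}_{t+1}$ through the doubly-stochastic one-step analysis of Theorem~\ref{thm:conv_strong_astrsym_switch}. After conditioning on the sampled blocks $I,J$, the prox-optimality of the updates \eqref{eq:y_update},\eqref{eq:x_update} must be re-expressed as full-dimensional progress, with the rescalings $\frac nm$ and $\theta+1$ in \eqref{eq:by_update},\eqref{eq:bx_update} accounting for the selection probabilities, and the primal--dual cross terms must cancel against the bilinear part of $\mathcal{G}_{t+1}$ using exactly the identities $\tau\sigma=\frac{nmq}{4p\Lambda}$ and the balance equation in \eqref{eq:thetatausigma}. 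Verifying that the halved step in \eqref{eq:thetagap} delivers just enough slack to keep the coefficient of $\mathcal{G}_t$ positive while not pushing $\rho$ below the claimed value is the delicate part; the remainder is the same bookkeeping as in Theorem~\ref{thm:conv_strong_astrsym_switch}.
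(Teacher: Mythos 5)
Your plan is essentially the paper's own proof: the paper's $\tilde P(x)=K(x,y^\star)-K(x^\star,y^\star)$ and $\tilde D(y)=K(x^\star,y)-K(x^\star,y^\star)$ make your $\mathcal{G}_t$ exactly $\tilde P(x^{(t)})-\tilde D(y^{(t)})$; the restricted gap is carried through the one-step estimate (Proposition~\ref{thm:mainprop_nostrong}, built from the first conclusions of Lemmas~\ref{thm:ZhangXiaolemma} and~\ref{thm:ZhangXiaolemma2}), the halved step in \eqref{eq:thetagap} supplies precisely the slack needed for the extra conditions $\frac{\theta q}{p}\geq\frac{p-q}{p}$ and $\frac{\theta q}{p}\geq\frac{n-m}{n}$, and the conversion $P-D\leq\tilde P-\tilde D+\frac{\|A\|^2}{2n\gamma}\|x-x^\star\|^2+\frac{\|A\|^2}{2\lambda n^2}\|y-y^\star\|^2$ is the paper's \eqref{eq:thm2eq4} (obtained there via smoothness of the marginal functions $\hat P,\hat D$; your direct strong-convexity/supergradient argument yields the same constants and is if anything more elementary). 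One correction to your step (ii): the quantity that actually contracts carries the asymmetric weights $\frac{p}{q}\tilde P(x^{(t)})-\frac{n}{m}\tilde D(y^{(t)})$, not $\min\{\frac{p}{q},\frac{n}{m}\}\mathcal{G}_t$ — contracting the latter would require $\frac{p-q}{q}\leq\rho\min\{\frac{p}{q},\frac{n}{m}\}$, which fails when $\frac{p}{q}\gg\frac{n}{m}$; the $\min$ enters only at the very end, when the asymmetrically weighted sum is lower-bounded by $\min\{\frac{p}{q},\frac{n}{m}\}(\tilde P-\tilde D)$ using $\tilde P\geq 0$ and $\tilde D\leq 0$, after which your extraction of the final bound is exactly the paper's.
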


According to Theorem~\ref{thm:conv_strong_astrsym_switch} and \ref{thm:conv_strong_astrsym_switch_gap}, in order to obtain a pair of primal and dual solutions with an expected $\epsilon$ distance to $(x^\star,y^\star)$, i.e., $\mathbb{E}[\|x^{(t)}-x^\star\|^2]\leq\epsilon$ and $\mathbb{E}[\|y^{(t)}-y^\star\|^2]\leq\epsilon$, or with an expected $\epsilon$ objective gap, Algorithm~\ref{alg:aspdc_general} needs
\begin{eqnarray*}
%\textstyle
\label{eq:xyepsilont}
t=O\left(\Big(\max\left\{\frac{p}{q},\frac{n}{m}\right\}+\sqrt{\frac{\Lambda_{q,m}}{n\lambda\gamma}}\frac{pn}{qm}\Big)
\log\big(\frac{1}{\epsilon}\big)\right)
\end{eqnarray*}
iterations when $\Lambda=\Lambda_{q,m}$. %When $(q,m)=(p,1)$, the iteration complexity of DSPDC becomes $O\left(\left(n+\sqrt{\kappa n}\right)\log(\frac{1}{\epsilon})\right)$, which is the same as SPDC.
%If we take $\kappa\approx \frac{p\Lambda_{q,m}}{qm\lambda\gamma}$, then
This iteration complexity is interesting since it matches the optimal $O((n+\sqrt{n\kappa})\log\left(\frac{1}{\epsilon}\right))$ iteration complexity of dual coordinate methods such as SPDC~\citep{ZhangXiao:14} and others~\citep{NIPS2013_4938,Lin:15a} when $(q,m)=(p,1)$, and also matches the optimal $O((p+p\sqrt{\frac{\Lambda_{1,n}}{n\lambda\gamma}})\log\left(\frac{1}{\epsilon}\right))$ iteration complexity of primal coordinate methods~\citep{LuXiao13analysis,Lin:15a} when $(q,m)=(1,n)$.

To efficiently implement Algorithm~\ref{alg:aspdc_general}, we just need to maintain and efficiently update either $A\bx^{(t)}$ or $A^T\by^{(t)}$, depending on whether $\frac{n}{m}$ or $\frac{p}{q}$ is larger. If $\frac{n}{m}\geq\frac{p}{q}$, we should maintain $A^T\by^{(t)}$ during the algorithm, which is used in~\eqref{eq:x_update} and can be updated in $O(mp)$ time. We will then directly compute $\langle A_i,\bx^{(t)}\rangle$ for $i\in I$ in~\eqref{eq:y_update} in $O(mp)$ time. In fact, this is how SPDC is implemented in~\cite{ZhangXiao:14} where $q=p$.
%~\footnote{The vector $u^{(t)}$ in SPDC by~\cite{ZhangXiao:14} corresponds to the vector $A^T\by^{(t)}$.}
On the other hand, if $\frac{n}{m}\leq\frac{p}{q}$, it is more efficient to maintain $A\bx^{(t)}$ and update it in $O(qn)$ time and compute $\langle A^j,\by^{(t+1)}\rangle$ for $j\in J$ in \eqref{eq:x_update} in $O(qn)$ time. Hence, the overall complexity for DSPDC to find an $\epsilon$-optimal solution
%, defined as the number of iterations multiplied by per-iteration cost,
is $O((np+\sqrt{\frac{n\Lambda_{q,m}}{\lambda\gamma} }\frac{p^2}{q})\log\left(\frac{1}{\epsilon}\right))$ when $\frac{n}{m}\geq\frac{p}{q}$ and $O((np+\sqrt{\frac{n\Lambda_{q,m}}{\lambda\gamma} }\frac{np}{m})\log\left(\frac{1}{\epsilon}\right))$ when $\frac{n}{m}\leq\frac{p}{q}$. Since the overall complexity of SPDC is $O\left(\left(np+\sqrt{\kappa nm}p\right)\log\left(\frac{1}{\epsilon}\right)\right)$ when $\frac{n}{m}\geq\frac{p}{q}$, DSPDC method is not more efficient for general data matrix. However, in the next section, we show that DSPDC has an efficient implementation for factorized data matrix which leads to a lower overall complexity than SPDC with the same implementation.
%, especially, when the data size is large.
%in the rest of this section
%$O(mp)\times O\left(\left(\frac{n}{m}+\frac{R}{\sqrt{\lambda\gamma}}\sqrt{\frac{n}{m}}\frac{p}{q}\right)\log\left(\frac{1}{\epsilon}\right)\right)$
%by matrix-vector multiplication

\subsection{Efficient Implementation for Factorized Data Matrix}
\label{sec:implementation}

%When $\frac{n}{m}\geq\frac{p}{q}$, the overall complexity for DSPDC to find an $\epsilon$-optimal solution, defined as the number of iterations multiplied by per-iteration cost, is $O\left(\left(np+\sqrt{\kappa nm}p^2/q\right)\log\left(\frac{1}{\epsilon}\right)\right)$ while it is $O\left(\left(np+\sqrt{\kappa nm}p\right)\log\left(\frac{1}{\epsilon}\right)\right)$ for SPDC. Hence, DSPDC method is less efficient than SPDC for general data matrix. However, in the next section, we show that DSPDC has an efficient implementation for factorized data matrix which leads to a lower overall complexity than SPDC, especially, when the data size is large.

In this section, we assume that the data matrix $A$ in~\eqref{eq:sdp} has a factorized structure $A=UV$ where $U\in\mathbb{R}^{n\times d}$ and $V\in\mathbb{R}^{d\times p}$ with $d<\min\{n,p\}$. Such a matrix $A$ is often obtained as a low-rank or denoised approximation of raw data matrix. Recently, there emerges a surge of interests of using factorized data to alleviate the computational cost for big data. For example, \citet{Pham:15} proposed to use a low-rank approximation $X\approx UV=A$ for data matrix $X$ to solve multiple instances of lasso problems. For solving big data kernel learning problems, the Nystr\"{o}m methods, that approximates a $n\times n$ kernel matrix $K$ by $US^{\dagger}U^{\top}$ with $U\in\mathbb R^{n\times d}$, $S\in\mathbb R^{d\times d}$ and $d< n$, has become a popular method~\citep{Yang12}. Moreover, recent advances on fast randomized algorithms~\citep{Halko:2011} for finding a low-rank approximation of a matrix render the proposed coordinate optimization algorithm more attractive for tackling factorized big data problems.

The factorized $A$ also appears often in the problem of sparse recovery from the randomized feature reduction or randomized instance reduction of \eqref{eq:erm}. The sparse recovery problem from randomized feature reduction can be also formulated into \eqref{eq:sdp} as
%\small
\begin{eqnarray}
%\textstyle
\label{eq:srir}
\min_{x\in\mathbb{R}^p}\max_{y\in\mathbb{R}^n} \left\{\frac{\lambda_2}{2}\|x\|_2^2+\lambda_1\|x\|_1+\frac{1}{n}y^TXG^TGx-\frac{1}{n}\sum_{i=1}^n\phi_i^*(y_i)\right\}
\end{eqnarray}
%\normalsize
where $X$ is the original $n\times p$ raw data matrix, $G$ is a $d \times p$ random measurement matrix with $d< p$, and the actual data matrix for~\eqref{eq:sdp} is $A=XG^TG$ with $U=XG^T$ and $V=G$. This approximation approach has been employed to reduce the computational cost of solving underconstrained least-squares problem~\citep{mahoney-2011-randomized,wang2016sketching}.
Similarly, the randomized instance reduction~\citep{Drineas:2011:FLS:1936922.1936925,wang2016sketching} can be applied by replacing $XG^TG$ in~\eqref{eq:srir} with $G^TGX$, where $G$ is a $d \times n$ random measurement matrix with $d< n$, and the data matrix $A=G^TGX$ with $U=G^T$ and $V=GX$.

To solve \eqref{eq:sdp} with $A=UV$, we implement DSPDC by maintaining the vectors $\bu^{(t)}=U^T\by^{(t)}$ and $\bv^{(t)}=V\bx^{(t)}$ and updating them in $O(dm)$ and $O(dq)$ time, respectively, in each iteration. Then, we can obtain $\left\langle A_i,\bx^{(t)}\right\rangle$ in~\eqref{eq:y_update} in $O(dm)$ time by evaluating $\left\langle U_i,\bv^{(t)}\right\rangle$ for each $i\in I$, where $U_i$ is the $i$th row of $U$. Similarly, we can obtain $\left\langle A_j,\by^{(t+1)}\right\rangle$ in~\eqref{eq:x_update} in $O(dq)$ time by taking $\left\langle V^j,\bv^{(t)}\right\rangle$ for each $j\in J$, where $V^j$ is the $j$th column of $V$. This leads to an efficient implementation of DSPDC described as in  Algorithm~\ref{alg:aspdc_general_factor} whose per-iteration cost is $O(dm+dq)$, lower than the $O(mp)$ or $O(qn)$ cost when $A$ is not factorized. 

To make a clear comparison between DSPDC and other methods when applied to factorized data, in Table~\ref{table:complexity}, we summarize their numbers of iterations and per-iteration costs (when $A=UV$)\footnote{For SVRG and ASVRG-SP, we present their numbers of outer and inner iterations and per-iteration costs separately.}. For all methods in comparison, we assume $A$ is too large so that only $U$ and $V$ are stored in memory, which is the typical situation when applying random reduction. Moreover, the aforementioned efficient implementation in DSPDC (if applicable) has been also applied to other methods to reduce their per-iteration cost. In Table~\ref{table:complexity}, we assume $n\geq p$ and $(q,m)=(1,1)$ and omit all the big-$O$ notations for simplicity. For ASVRG-SP, we present the complexity of its individual-split version with uniform sampling. According to the last column of Table~\ref{table:complexity}, our DSPDC with efficient implementation has the lowest overall complexity among these methods.
%Compared to deterministic full coordinates methods CP~\cite{Chambolle:2011} and AG~\cite{Nesterov04book}, our DSPDC still has a lower overall complexity when $p\sqrt{\Lambda_{1,1}}< \sqrt{\Lambda_{p,n}}$, which is true once $\Lambda_{p,n}=O(pn\Lambda_{1,1})$

%However, whether this is better than the $O(pm)$ complexity by implementing SPDC directly based on $A$ will depends on the values of $d$ and $m$, and also, whether the memory space allows reconstructing $A=UV$.

\begin{algorithm}[t]
\caption{Efficient Implementation of Algorithm~\ref{alg:aspdc_general} for Factorized Data}
\label{alg:aspdc_general_factor}
\textbf{Input:} $x^{(-1)}=x^{(0)}=\bx^{(0)}\in\mathbb{R}^p$, $y^{(-1)}=y^{(0)}=\by^{(0)}\in\mathbb{R}^n$, and parameters $(\theta,\tau,\sigma)$\\[0.5ex]
\textbf{Initialize:} $u^{(0)}=U^Ty^{(0)}$, $v^{(0)}=Vx^{(0)}$,$\bu^{(0)}=U^T\by^{(0)}$, $\bv^{(0)}=V\bx^{(0)}$ \\[0.5ex]
\textbf{Iterate:}\\[0.5ex] For $t=0,1,2,\ldots,T-1$
\begin{enumerate}  \itemsep 0pt
\item[]
Uniformly and randomly choose $I\subset[n]$ and $J\subset[p]$ of sizes $m$ and $q$, respectively.
\begin{eqnarray}
y_i^{(t+1)}&=&\left\{\begin{array}{ll}
\argmax_{\beta\in\mathbb{R}}\left\{\frac{1}{n}\langle U_i,\bv^{(t)}\rangle\beta-\frac{\phi^*_i(\beta)}{n}-\frac{1}{2\sigma}(\beta-y_i^{(t)})^2\right\}&\text{if }i\in I,\\
y_i^{(t)}&\text{if }i\notin I,
\end{array}
\right.\\
u^{(t+1)}&=&u^{(t)}+U^T(y^{(t+1)}-y^{(t)}),\\
\bu^{(t+1)}&=&u^{(t)}+\frac{n}{m}U^T(y^{(t+1)}-y^{(t)}),\\
x_j^{(t+1)}&=&\left\{\begin{array}{ll}
\argmin_{\alpha\in\mathbb{R}}\left\{\frac{1}{n}\langle V^j,\bu^{(t+1)}\rangle\alpha+g_j(\alpha)+\frac{1}{2\tau}(\alpha-x_j^{(t)})^2\right\}&\text{if }j\in J,\\
x_j^{(t)}&\text{if }j\notin J,
\end{array}
\right.\\
v^{(t+1)}&=&v^{(t)}+V(x^{(t+1)}-x^{(t)}),\\
\bv^{(t+1)}&=&v^{(t)}+(\theta+1)V(x^{(t+1)}-x^{(t)}).
\end{eqnarray}
\end{enumerate}
\textbf{Output:} $x^{(T)}$ and $y^{(T)}$
\end{algorithm}

\begin{table}[t]
\begin{center}
    \begin{tabular}{| c | c | c | c|}
    \hline
    Algorithm & Num. of Iter. $(\times\log(\frac{1}{\epsilon}))$   & Per-Iter. Cost   &Overall Compl. $(\times\log(\frac{1}{\epsilon}))$ \\\hline
    DSPDC & $n+p\sqrt{\frac{\Lambda_{1,1} n}{\lambda\gamma}}$ & $d$&$nd+pd\sqrt{\frac{\Lambda_{1,1} n}{\lambda\gamma}}$\\\hline
    SPDC & \multirow{4}{*}{$n+\sqrt{\frac{\Lambda_{p,1} n}{\lambda\gamma}}$} & \multirow{4}{*}{$pd$} & \multirow{4}{*}{$npd+pd\sqrt{\frac{\Lambda_{p,1} n}{\lambda\gamma}}$}\\
    ASDCA& && \\
    APCG& & & \\
    RPDG& & & \\\hline
    SDCA& \multirow{2}{*}{$n+\frac{\Lambda_{p,1}}{\lambda\gamma}$} & \multirow{2}{*}{$pd$} &  \multirow{2}{*}{$npd+pd\frac{\Lambda_{p,1}}{\lambda\gamma}$}  \\
    SAGA& &&\\\hline
    \multirow{2}{*}{SVRG} & Outer: $1$ & $nd$ &  \multirow{2}{*}{$nd+pd\frac{\Lambda_{p,1}}{\lambda\gamma}$}  \\
    &Inner: $\frac{\Lambda_{p,1}}{\lambda\gamma}$ &$pd$&\\\hline
    %CP~\cite{Chambolle:2011}& \multirow{2}{*}{$1+\sqrt{\Lambda_{p,n}/(n\lambda\gamma)}$} & \multirow{2}{*}{$nd$} & \multirow{2}{*}{$nd+d\sqrt{\Lambda_{p,n} n/\lambda\gamma}$}\\
    %AG\cite{Nesterov04book}& &&\\\hline
    %\multirow{2}{*}{ASVRG-SP} & Outer: $n+n\sqrt{\max\{\Lambda_{p,1},\Lambda_{1,n}\}p/\lambda\gamma}$&$d$&$nd+nd\sqrt{\max\{\Lambda_{p,1},\Lambda_{1,n}\}p/\lambda\gamma}$\\
%    &Inner: &&\\\hline
    \multirow{2}{*}{ASVRG-SP}&Outer: $\sqrt{\frac{p\max\{\Lambda_{p,1},\Lambda_{1,n}\}}{\lambda\gamma}}+1$& $nd$&  \multirow{2}{*}{$nd+nd\sqrt{\frac{p\max\{\Lambda_{p,1},\Lambda_{1,n}\}}{\lambda\gamma}}$}  \\
    &Inner: $n\sqrt{\frac{p\max\{\Lambda_{p,1},\Lambda_{1,n}\}}{\lambda\gamma}}$&$d$& \\\hline
    \end{tabular}
    \caption{The overall complexity of finding an $\epsilon$-optimal solution when $A=UV$, $n\geq p$ and $U$ and $V$ (but not $A$) are stored in memory. We choose $(q,m)=(1,1)$ in DSPDC.
    \label{table:complexity}
    }
\end{center}
\vspace{-15pt}
\end{table}

\section{Extension with Block Coordinate Updates}
\label{sec:block}
With block-wise sampling and updates, DSPDC can be easily generalized and applied to the bilinear saddle-point problem \eqref{eq:sdp} with a block-decomposable structure and a similar linear convergence rate can be obtained. Although this is a straightforward extension, it is worth showing that, when the proximal mapping on each block is computationally expensive, DSPDC can achieve a lower complexity than other coordinate methods. In this section, we first extend DSPDC to its block coordinate update version, and then identify the scenarios where such an extension has a lower overall complexity than other methods.
\subsection{Algorithm and Convergence Properties}
%In this section, we discuss the extension of Algorithm \ref{alg:aspdc_general} to a block-coordinate method for solving bilinear saddle-point problem.

%Although we focus on primal-dual coordinate methods for ERM problem \eqref{eq:sdp},

%\begin{eqnarray}
%\label{eq:sdp_gen}
%\textstyle
%\min_{X_j\in\mathbb{S}_+^d,j=1,\dots,p}\frac{1}{n}\sum_{i=1}^n\phi_i\left(\sum_{j=1}^p\text{Tr}(D_{ij}^TX_j)\right)+\frac{\lambda}{2}\sum_{j=1}^p\|X_j\|_F^2,
%\end{eqnarray}

We partition the space $\mathbb{R}^{\bar p}$ into $p$ subspaces as $\mathbb{R}^{\bar p}=\mathbb{R}^{q_1}\times \mathbb{R}^{q_2}\times\cdots\times \mathbb{R}^{q_p}$ such that $\sum_{j=1}^pq_j=\bar p$ and partition the space $\mathbb{R}^{\bar n}$ into $n$ subspaces as $\mathbb{R}^{\bar n}=\mathbb{R}^{m_1}\times \mathbb{R}^{m_2}\times\cdots\times \mathbb{R}^{m_n}$ such that $\sum_{i=1}^nm_i=\bar n$. With a little abuse of notation, we represent the corresponding partitions of $\bbx\in\mathbb{R}^{\bar p}$ and $\bby\in\mathbb{R}^{\bar n}$ as $\bbx=(\bbx_1,\bbx_2,\dots,\bbx_p)$ with $\bbx_j\in\mathbb{R}^{q_j}$ for $j=1,\dots,p$ and  $\bby=(\bby_1,\bby_2,\dots,\bby_n)$ with $\bby_i\in\mathbb{R}^{m_i}$ for $i=1,\dots,n$, respectively.

We consider the following bilinear saddle-point problem
%\small
\begin{eqnarray}
%\textstyle
\label{eq:sdp_gen}
\min_{\bbx\in\mathbb{R}^{\bar p}}\max_{\bby\in\mathbb{R}^{\bar n}}\left\{ \sum_{j=1}^pg_j(\bbx_j)+\frac{1}{n}\bby^T\bbA\bbx-\frac{1}{n}\sum_{i=1}^n\phi_i^*(\bby_i)\right\},
\end{eqnarray}
where $g_j:\mathbb{R}^{q_j}\rightarrow \mathbb{R}$ and $\phi_i^*:\mathbb{R}^{m_i}\rightarrow \mathbb{R}$ are functions of $\bbx_j$ and $\bby_i$, respectively. Moreover, we assume $g_j$ and $\phi_i^*$ are strongly convex with strong convexity parameters of $\lambda>0$ and $\gamma>0$, respectively. Due to the partitions on $\bbx\in\mathbb{R}^{\bar p}$ and $\bby\in\mathbb{R}^{\bar n}$, we partition the matrix $\bbA$ into blocks accordingly so that
%\small
$$
%\textstyle
\bby^T\bbA\bbx=\sum_{j=1}^p\sum_{i=1}^n\bby_i^T\bbA_i^j\bbx_j,
$$
%\normalsize
where $\bbA_i^j\in\mathbb{R}^{m_i\times q_j}$ is the block of $\bbA$ corresponding to $\bbx_j$ and $\bby_i$.

%Moreover, both $f$ and $h$
%In addition, we assume this decomposition has no overlap among blocks, i.e., $x=(x_1,\dots,x_p)$ and $y=(y_1,\dots,y_n)$ such that $\bar p=\sum_{j=1}^pp_i$ and  $\bar n=\sum_{i=1}^nn_i$.

%We want to point out that the problem \eqref{eq:sdp_gen} can be defined directly and solved as an independent problem rather than as the saddle-point formulation of an EMR problem like \eqref{eq:erm}. In other words, the function $\phi_i^*$ does not have to be the conjugate of a function $\phi_i$. We still use the same notation as in \eqref{eq:sdp} so that the readers can easily compare \eqref{eq:sdp_gen} with \eqref{eq:sdp}. %and better understand the DSPDC method with block-coordinate updates.

It is easy to see that the problem \eqref{eq:sdp} is a special case of \eqref{eq:sdp_gen} when $q_j=m_i=1$ for $j=1,\dots,p$ and $i=1,\dots,n$, $\bar p=p$ and $\bar n=n$. The scale constant defined in \eqref{eq:Lambda} can be similarly generalized as
\begin{eqnarray}
\label{eq:Lambda_gen}
\mathbf{\Lambda}_{q,m}\equiv\max_{I\subset[n],J\subset[p],|I|=m,|J|=q}\|\bbA_I^J\|_2^2,
\end{eqnarray}
where $\bbA_I^J$ is sub-matrix of $\bbA$ consisting of each block $\bbA_i^j$ with $i\in I$ and $j\in J$.

Let $\bbA_i=(\bbA_i^1,\cdots,\bbA_i^p)$ and $\bbA^j=((\bbA^j_1)^T,\cdots,(\bbA^j_n)^T)^T$. Given these correspondings between \eqref{eq:sdp} and \eqref{eq:sdp_gen}, DSPDC can be easily extended for solving \eqref{eq:sdp_gen} by replacing \eqref{eq:y_update} and \eqref{eq:x_update} with
%\small
\begin{eqnarray}
\label{eq:y_update_block}
\bby_i^{(t+1)}&=&\left\{\begin{array}{ll}
\argmax_{\beta\in\mathbb{R}^{m_i}}\left\{\frac{1}{n}\beta^T\bbA_i\bar\bbx^{(t)}
-\frac{\phi^*_i(\beta)}{n}-\frac{1}{2\sigma}\|\beta-\bby_i^{(t)}\|^2\right\}&\text{if }i\in I,\\
\bby_i^{(t)}&\text{if }i\notin I,
\end{array}
\right.\\\label{eq:x_update_block}
\bbx_j^{(t+1)}&=&\left\{\begin{array}{ll}
\argmin_{\alpha\in\mathbb{R}^{q_j}}\left\{\frac{1}{n}\alpha^T( \bbA^j)^T\bar\bby^{(t+1)}+g_j(\alpha)+\frac{1}{2\tau}\|\alpha-\bbx_i^{(t)}\|^2\right\}&\text{if }j\in J,\\
\bbx_j^{(t)}&\text{if }j\notin J,
\end{array}
\right.
\end{eqnarray}
\normalsize
respectively, and $\bar\bby^{(t)}$ and $\bar\bbx^{(t)}$ are updated in the same way as \eqref{eq:by_update} and \eqref{eq:bx_update}.
%We call this algorithm block DSPDC (B-DSPDC) method.

For this extension, the convergence results similar to Theorem~\ref{thm:conv_strong_astrsym_switch} and Theorem~\ref{thm:conv_strong_astrsym_switch_gap} can be easily derived with almost the same proof. We skip the proofs but directly state the results. To find a pair of primal-dual solutions for \eqref{eq:sdp_gen} which either has an $\epsilon$-distance to the optimal solution or has an $\epsilon$-primal-dual objective gap, the number of iterations Algorithm \ref{alg:aspdc_general} (with  by \eqref{eq:y_update} and \eqref{eq:x_update} replaced by \eqref{eq:y_update_block} and \eqref{eq:x_update_block}) needs is
%\small
\begin{eqnarray*}
%\textstyle
t=O\left(\Big(\max\Big\{\frac{n}{m},\frac{p}{q}\Big\}+\sqrt{\frac{\mathbf{\Lambda}_{q,m}}{\lambda\gamma n}}\frac{np}{mq}\Big)
\log\Big(\frac{1}{\epsilon}\Big)\right).
\end{eqnarray*}

\subsection{Matrix Risk Minimization}\label{sec:matrix_risk}
In this section, we study the theoretical performance of DSPDC method when the block updating step \eqref{eq:y_update_block} or \eqref{eq:x_update_block} has a high computational cost due to eigenvalue decomposition. Let $\mathbb{S}_+^d$ be the set of $d\times d$ positive semi-definite matrices. The problem we consider is a general multiple-matrix risk minimization which is formulated as
\begin{eqnarray}
\label{eq:matrixerm}
%\textstyle
\min_{X_j\in\mathbb{S}_+^d,j=1,\dots,p}\left\{\frac{1}{n}\sum_{i=1}^n\phi_i\left(\sum_{j=1}^p\langle\bbD_i^j, X_j\rangle\right)+\frac{\lambda}{2}\sum_{j=1}^p\|X_j\|_F^2\right\},
\end{eqnarray}
where $\bbD_i^j$ is a $d\times d$ data matrix, $\phi_i$ is $(1/\gamma)$-smooth convex loss function applied to the linear prediction $\sum_{j=1}^p\langle\bbD_i^j, X_j\rangle$ and $\lambda$ is a regularization parameter.
The associated saddle-point formulation of \eqref{eq:matrixerm} is
%\small
\begin{eqnarray}
\label{eq:matrixspd}
%\textstyle
\min_{X_j\in\mathbb{S}_+^d,j=1,\dots,p}\max_{y\in\mathbb{R}^n}\left\{\frac{\lambda}{2}\sum_{j=1}^p\|X_j\|_F^2
+\frac{1}{n}\sum_{i=1}^n\sum_{j=1}^py_i\langle\bbD_i^j, X_j\rangle-\frac{1}{n}\sum_{i=1}^n\phi_i^*(y_i)\right\},
\end{eqnarray}
\normalsize
which is a special case of \eqref{eq:sdp_gen} where $q_j=d^2$ and $m_i=1$, $\bbx_j\in\mathbb{R}^{d^2}$ and $\bbA_i^j\in\mathbb{R}^{1\times d^2}$ are the vectorization of the matrices $X_j$ and $\bbD_i^j$ respectively, and $g_j(X_j)=\frac{\lambda}{2}\|X_j\|_F^2$ if $X_j\in\mathbb{S}_+^d$ and $g_j(X_j)=+\infty$ if $X_j\notin\mathbb{S}_+^d$.
The applications of this model include matrix trace regression~\citep{Slawski:15} and distance metric learning~\citep{Weinberger:2008,Weinberger:2009,ParameswaranW10}.
%In this experiment, we choose $\phi_i$ to be \eqref{eq:ssvmloss} again.

In Table~\ref{table:complexity_matrix}, we compare DSPDC with various methods on the numbers of iterations and per-iteration costs when applied to problem~\eqref{eq:matrixspd}. We assume $n\geq \max\{p,d\}$ and $(q,m)=(1,1)$ and omit all the big-$O$ notations for simplicity. For ASVRG-SP, we present the complexity of its individual-split version with uniform sampling. When applied to \eqref{eq:matrixspd} with $(q,m)=(1,1)$, DSPDC requires solving \eqref{eq:x_update_block} in each iteration which involves the eigenvalue decomposition of one $d\times d$ matrix with complexity of $O(d^3)$. To efficiently implement DSPDC, we need to maintain and efficiently update either $\bbA\bar\bbx^{(t)}$ or $\bbA^T\bar\bby^{(t)}$ with complexity of $O(d^2\min\{n,p\})$. When $p\leq n$, the per-iteration cost of DSPDC in this case is therefore $O(d^3+pd^2)$ so that the overall complexity for DSPDC to find an $\epsilon$-optimal solution of \eqref{eq:matrixspd} is $O((d^3+pd^2)(n+\sqrt{\frac{n\mathbf{\mathbf{\Lambda}}_{1,1}}{\lambda\gamma}}p)
\log(1/\epsilon))$. On the contrary, SPDC, ASDCA, APCG and RPDG need to solve $p$ eigenvalue decompositions per iteration so that its the overall complexity is $O(pd^3(n+\sqrt{\frac{n\mathbf{\mathbf{\Lambda}}_{p,1}}{\lambda\gamma}})
\log(1/\epsilon))$ which is higher than that of DSPDC when $\sqrt{\mathbf{\Lambda}_{1,1}}p\leq\sqrt{\mathbf{\Lambda}_{p,1}}d$. Without this condition, according to the last column of Table~\ref{table:complexity_matrix}, DSPDC still has a lower overall complexity than SDCA, SAGA, SVRG and ASVRG-SP.

\begin{table}[t]    
\begin{center}
    \begin{tabular}{| c | c | c | c|}
    \hline
    Algorithm & Num. of Iter. $(\times\log(\frac{1}{\epsilon}))$   & Per-Iter. Cost   &Overall Compl. $(\times\log(\frac{1}{\epsilon}))$ \\\hline
    DSPDC & $n+p\sqrt{\frac{\mathbf{\Lambda}_{1,1} n}{\lambda\gamma}}$ & $pd^2+d^3$&$(d^2p+d^3)(n+p\sqrt{\frac{\mathbf{\Lambda}_{1,1} n}{\lambda\gamma}})$\\\hline
    SPDC & \multirow{4}{*}{$n+\sqrt{\frac{\mathbf{\Lambda}_{p,1} n}{\lambda\gamma}}$} & \multirow{4}{*}{$pd^3$} & \multirow{4}{*}{$npd^3+pd^3\sqrt{\frac{\mathbf{\Lambda}_{p,1} n}{\lambda\gamma}}$}\\
    ASDCA& && \\
    APCG& & & \\
    RPDG& & & \\\hline
    SDCA& \multirow{2}{*}{$n+\frac{\mathbf{\Lambda}_{p,1}}{\lambda\gamma}$} & \multirow{2}{*}{$pd^3$} &  \multirow{2}{*}{$npd^3+pd^3\frac{\mathbf{\Lambda}_{p,1}}{\lambda\gamma}$}  \\
    SAGA& &&\\\hline
    %CP~\cite{Chambolle:2011}& \multirow{2}{*}{$1+\sqrt{\mathbf{\Lambda}_{p,n}/(n\lambda\gamma)}$} & \multirow{2}{*}{$nd$} & \multirow{2}{*}{$nd+d\sqrt{\mathbf{\Lambda}_{p,n} n/\lambda\gamma}$}\\
    %AG\cite{Nesterov04book}& &&\\\hline
    \multirow{2}{*}{SVRG}&Outer: $1$& $pnd^2$&  \multirow{2}{*}{$npd^2+pd^3\frac{\mathbf{\Lambda}_{p,1}}{\lambda\gamma}$}  \\
    &Inner: $\frac{\mathbf{\Lambda}_{p,1}}{\lambda\gamma}$&$pd^3$& \\\hline
    \multirow{2}{*}{ASVRG-SP}&Outer: $\sqrt{\frac{d\max\{\mathbf{\Lambda}_{p,1},\mathbf{\Lambda}_{1,n}\}}{\lambda\gamma}}+1$& $pnd^2$&  \multirow{2}{*}{$npd^2+pnd^2\sqrt{\frac{d\max\{\mathbf{\Lambda}_{p,1},\mathbf{\Lambda}_{1,n}\}}{\lambda\gamma}}$}  \\
    &Inner: $np\sqrt{\frac{\max\{\mathbf{\Lambda}_{p,1},\mathbf{\Lambda}_{1,n}\}}{d\lambda\gamma}}$&$d^3$& \\\hline
    \end{tabular}
    \caption{The overall complexity of finding an $\epsilon$-optimal solution for \eqref{eq:matrixspd} when $n\geq p$. 
    %We choose $(q,m)=(1,1)$ in DSPDC.
    }
    \label{table:complexity_matrix}
\end{center}
\vspace{-15pt}
\end{table}

\subsection{Multi-task  Large Margin Nearest Neighbor Problem}\label{sec:mt_lmnn}

In this section, we show that DSPDC can be applied to the Multi-task Large Margin Nearest Neighbor (MT-LMNN) problem~\citep{ParameswaranW10}.
The key is to appropriately reduce the original form to the matrix risk minimization \eqref{eq:matrixspd}.
%, after which, the efficiency is accordingly inherited.

\paragraph{Problem Reformulation} To make the paper self-contained, we include the introduction of MT-LMNN here. Interested readers can find more background in~\citep{ParameswaranW10}.
Suppose there are $p>1$ tasks, each being a multi-class classification problem. For example, in our empirical study (Section~\ref{sec:mt_lmnn_exp}), we have $p=100$ tasks, each being a 10-class image classification problem. MT-LMNN aims to learn one Mahalanobis distance metric (defined as a positive semi-definite matrix) for each task, so there are totally $p$ metric matrices to be learned. With those metrics, the label of a testing point in task $j$ is determined by the majority vote of its 
$\ell$-nearest neighbors defined by the $j$-th metric.
%new data point can be classified under each task by the majority vote of its $\ell$-nearest neighbors defined by the metric associated to that task. 
We further assume that the tasks are correlated that all the metrics share a common component, in addition to their own matrix.
The original formulation of MT-LMNN is the following:
%we conduct empirical studies to verify the effectiveness and efficiency of the proposed algorithm on two real datasets.
%We compare our method with MT-LMNN \footnote{We directly use code from \url{http://www.cs.cornell.edu/~kilian/code/code.html}}, which is essentially a full subgradient descent method. We only compare with this competitor here because it stands for the state-of-the-art methodology to solve the multi-task metric learning problem. In fact, most follow-up works focus more on modeling rather than the computation improvement and thus are omitted here. In all the experiments, the smoothness parameter is chosen as $\gamma=1$.
%and are conducted on a Linux machine with 32GB RAM and 3.60GHz processor.
% and demonstrate the efficiency of our proposed method against the MT-LMNN algorithm.
\small
\begin{eqnarray}\nonumber
&\min\limits_{X_j\in\Sb^d_+,j=0,1,...,p, \xi\in\Rb^n} & {\lambda_0\over 2}\|X_0-I\|_F^2 + \sum_{j=1}^{p}{\lambda_j\over 2} \|X_j\|_F^2 + {1\over n}\sum_{j=1}^{p}\sum_{(u,v)\in \Nc_j} d_j^2(z_u,z_v) + {1\over n}\sum_{j=1}^{p}\sum_{(u,v,w)\in \Sc_j} \xi_{uvw} \\\label{eq:original}
&\st & d_j^2(z_u,z_w) - d_j^2(z_u,z_v) \ge 1 - \xi_{uvw}, \quad
\forall j\in [p], ~\forall (u,v,w)\in \Sc_j\\\nonumber
&& \xi_{uvw} \ge 0, \quad\forall (u,v,w)\in \Sc_j.
\end{eqnarray}
\normalsize
Now we interpret the notations above. We let $z$ denote a training data point indexed by subscript $u,v,w$ etc,    $X_j\in\Sb^{d}_+$ be the metric matrix for task $j=1,2,...,p$ and $X_0$  the common component shared by all the tasks to reflect the correlations among them. Let $\Nc_j$ be the set of every ordered pair $(u,v)$ for task $j$ such that $z_v$ is among the $\ell$ closest points of $z_u$ that has the same label as $z_u$, and $\Sc_j$ be the set consisting of the triples $(u,v,w)$ such that $(u,v)\in\Nc_j$ and $z_w$ is the closest point to $z_u$ that has a different label. The aforementioned closeness can be measured in Euclidean distance or other appropriate methods in the original feature space. We use $n:=\sum_{j=1}^{p}|\Nc_j| = \sum_{j=1}^{p}|\Sc_j|$ to denote the total number of constraints in \eqref{eq:original} excluding the non-negativity constraints. Let $\bbZ_{j,uv}:=  (z_u-z_v)(z_u-z_v)^{\top}$ for all $(u,v)\in\Nc_j$ and $\bbZ_{j,uvw}:= \bbZ_{j,uw} - \bbZ_{j,uv}$ for all $(u,v,w)\in\Sc_j$ so that the distance $d_j(z_u,z_v)$ in task $j$ is defined as
%\small
$$d_j(z_u,z_v) = \sqrt{(z_u-z_v)^{\top}(X_j+X_0)(z_u-z_v)} = \sqrt{\langle \bbZ_{j,uv}, X_j+X_0\rangle},$$
%\normalsize
Note that the metric matrix for task $j$ is $X_j+X_0$, the sum of the individual matrix $X_j$ and the shared component $X_0$ among all the tasks.
We can see that in each task, the goal of formulation~\eqref{eq:original} is essentially to minimize the distances of points with the same label (the objective) while enforcing the points with different labels to stay away from each other (the constraints).
The slack variables $\xi_{uvw}$ allow for soft constraints in the problem. The regularization term $\lambda_j\|X_j\|_F^2,\forall j\in[p]$ controls the magnitude of $X_j$  and $\lambda_0\|X_0-I\|_F^2$ tunes how close $X_0$ to the identity $I$. 

Following the same convention of support vector machine, we can transform the problem~\eqref{eq:original} to
an unconstrained form:
%\small
\begin{eqnarray}\label{eq:unconstrain}
\min_{X_0,...,X_p\in \Sb_+^d} && {\lambda_0\over 2}\|X_0-I\|_F^2 + \sum_{j=1}^{p}{\lambda_j\over 2} \|X_j\|_F^2
+ {1\over n}\sum_{j=1}^{p}\sum_{(u,v)\in \Nc_j} \langle \bbZ_{j,uv},X_j+X_0\rangle\\\nonumber
&&+ {1\over n}\sum_{j=1}^{p}\sum_{(u,v,w)\in S_j}\phi(\langle \bbZ_{j,uvw}, X_j+X_0\rangle).
\end{eqnarray}
%\normalsize
where $\phi(\cdot)$ is the hinge loss and we adopt its smoothed version
\eqref{eq:ssvmloss} with $b=1$.

By introducing the dual variable $y$ we can obtain the following equivalent saddle-point formulation of \eqref{eq:unconstrain}:
%\small
\begin{eqnarray}
\label{eq:saddle1}
\min_{X_0,...,X_p\in \Sb_+^d} \max_{y\in\Rb^n} && {\lambda_0\over 2}\|X_0-I\|_F^2 + \sum_{j=1}^{p}{\lambda_j\over 2} \|X_j\|_F^2 + {1\over n}\sum_{j=1}^{p}\sum_{(u,v)\in \Nc_j} \langle \bbZ_{j,uv}, X_j+X_0\rangle \\
\nonumber
&&+ {1\over n}\sum_{j=1}^{p}\sum_{(u,v,w)\in \Sc_j} y_{j,uvw}\langle \bbZ_{j,uvw}, X_j+X_0\rangle
- {1\over n}\sum_{j=1}^{p}\sum_{(u,v,w)\in \Sc_j}\phi^*(y_{j,uvw}) .
\end{eqnarray}
%\normalsize
Here, each dual variable $y_{j,uvw}$ corresponds to the matrix $\bbZ_{j,uvw}$ and the constraint $d_j^2(z_u,z_w) - d_j^2(z_u,z_v) \ge 1 - \xi_{uvw}$ in  \eqref{eq:original} for all $j\in[p]$ and $(u,v,w)\in\Sc_j$. We stack all the dual variables $y_{j,uvw}$ into a single column vector $y\in\Rb^{n}$ and $y_s$ represents the $s$th coordinate of $y$. Let $\Tc(s)$ and $\bbZ_s$ represent the task and the outer product $y_s$ corresponds to, namely, $\Tc(s)=j$ and $\bbZ_s:= \bbZ_{j,uw} - \bbZ_{j,uv}$ if the new index $s$ corresponds to the original index $(j,uvw)$. Then we have the following more compact formulation: %and represent $y_{p,ijk}$ as $y_s$ with $s=(p,ijk)$ (the first component of the index $s=(p,ijk)$)
\begin{eqnarray}
\label{eq:mt_lmnn}
\min_{X_0,...,X_p\in \Sb_+^d}  \max_{y\in\Rb^n}~~ \sum_{j=0}^{p} g_j(X_j)
+ {1\over n}\sum_{i=1}^{n} y_i\langle \bbZ_i, X_{\Tc(i)}+X_0\rangle
- {1\over n}\sum_{i=1}^{n}\phi^*(y_i),
\end{eqnarray}
where
$g_j(X_j) :=  {\lambda_j\over 2} \|X_j\|_F^2 +{1\over n} \left\langle \bbC_j, X_j\right\rangle$, with $\bbC_j=\sum_{(u,v)\in \Nc_j} \bbZ_{j,uv}$ for $j\in[p]$, and
$g_0(X_0) := {\lambda_0\over 2} \|X_0-I\|_F^2 +{1\over n}\left\langle  \bbC_0, X_0\right\rangle$ with $\bbC_0=\sum_{j=1}^{p}\bbC_j$. Now, we have reduced the MT-LMNN problem to the form of \eqref{eq:matrixspd} and the customized method is shown in Algorithm \ref{alg:mt_lmnn}. The convergence properties similar to Theorem~\ref{thm:conv_strong_astrsym_switch} and \ref{thm:conv_strong_astrsym_switch_gap} immediately follow.
%It is easy to show, by convex analysis, that $g_p(\cdot)$ is $\lambda_p$-strongly convex for $p=0,\dots,T$ and $\phi^*(\cdot)$ is 1-strongly convex.

\begin{algorithm}[t]
%\small
\caption{DSPDC Customized for MT-LMNN}
\label{alg:mt_lmnn}
\textbf{Input:} $X^{(-1)}=X^{0}=\bar X^{0}\in\mathbb{R}^{d\times d}$, $y^{(-1)}=y^{0}=\by^{0}\in\mathbb{R}^n$, step sizes $\tau,\sigma$, parameter $\theta$, total iteration $S$, sample sizes $1\leq m\leq n$ and $1\leq q\leq p+1$.\\
%and the number of iterations $T$.
\textbf{Output:} $X^{S}$ and $y^{S}$;\\
$\bbW_j^{0}=\sum_{i:i\in [n], \Tc(i)=j} \by_{i}^{0}\bbZ_{i}$ for $j=1,2,...,p$ and $\bbW_0^{0}=\sum\limits_{j=1}^p\bbW_j^0=\sum_{i=1}^n \by_{i}^{0}\bbZ_{i}$.\\
$\bbB_j^0=0_{d\times d}$ for $j=1,2,...,p$.\\
For {$t=0,1,2,\ldots,S-1$:}
\begin{enumerate}
\item[]
Randomly choose $\Ic_t\subset\{1,2,...,n\}$ and $\Jc_t\subset\{0,1,...,p\}$ with $|\Ic_t|=m$ and  $|\Jc_t|=q$.
Perform the following updates:
\begin{eqnarray}
%\label{eq:y_update}
y_{i}^{t+1}&=&\left\{\begin{array}{ll}
\argmax\limits_{\beta\in\mathbb{R}}\left\{
{\beta\over n} \left\langle
\bbZ_{i}, \bar X_{\Tc(i)}^{t}+\bar X_0^{t} \right\rangle
-{1\over n}\phi^*(\beta) -{1\over 2\sigma} (\beta - y_{i}^{t})^2 \right\}&\text{if } i\in \Ic_t,\\
y_{i}^{t}&\text{if } i\notin \Ic_t
\end{array}
\right.\\ %\label{eq:by_update}
%\by_s^{t+1}&=&y_s^{t}+\frac{N}{m}(y_s^{t+1}-y_s^{t}), ~\forall s\in[N]. \\\label{eq:W_update}
\bbB_j^{t+1}&=&\sum\limits_{i:i\in\Ic_t,\Tc(i)=j} (y_{i}^{t+1}-y_{i}^{t})\bbZ_{i},~~j=1,2,...,p\\
\bbW_j^{t+1}&=&
\left\{
\begin{array}{ll}
%W_p^{t}+\frac{N}{m}\cdot\sum\limits_{s:s\in\Ic_t,\Tc(s)=p} (y_{s}^{t+1}-y_{s}^{t})X_{s}
%+ \frac{m-N}{m} \cdot \sum\limits_{s:s\in \Ic_{t-1}, \Tc(s)=p} (y_{s}^{t}-y_{s}^{t-1})X_{s}
%&\text{if}~ p\ne 0,\\
\bbW_j^{t}+\frac{n}{m}\bbB_j^{t+1}
+ \frac{m-n}{m} \bbB_j^{t}
&\text{if}~ j\ne 0,\\
\sum_{l=1}^{T}\bbW_l^{t+1} & \text{if}~ j=0.\\
\end{array}\right.\\
\label{eq:M_update}
X_j^{t+1}&=&\left\{\begin{array}{ll}
\argmin\limits_{Q\in\Sb_+^{d}}\left\{
{1\over n}\langle \bbW_j^{t+1}, Q\rangle
+g_j(Q)+\frac{1}{2\tau}\|Q-X_j^{t}\|_F^2\right\}&\text{if }j\in \Jc_t,\\
X_j^{t}&\text{if }j\notin \Jc_t
\end{array}
\right.\\
%\label{eq:proj}
%M_p^{t+1}&=&\mathbf{\Pi}_+(M_p^{t+1})
%\\
\label{eq:bM_update}
\bar X_j^{t+1}&=&X_j^{t}+(\theta+1)(X_j^{t+1}-X_j^{t}), ~\forall j=0,1,...,p.
\end{eqnarray}
%\ENDFOR
\end{enumerate}
\end{algorithm}

%\begin{algorithm}[t]
%%\small
%\caption{SDCA for Multi-Task Metric Learning}
%\label{alg:mtml}
%\begin{algorithmic}[1]
%		\REQUIRE $y^{0}\in\mathbb{R}^n$, step sizes $\tau,\sigma$, parameter $\theta$, total iteration $S$, sample sizes $1\leq m\leq n$ and $1\leq q\leq p+1$.
%%and the number of iterations $T$.
%		\ENSURE $x^{S}$ and $y^{S}$
%\STATE
%$$X_j^{0}=\frac{1}{n}\sum\limits_{(u,v)\in \Nc_j}Z_{j,uv}+\frac{1}{n}\sum\limits_{(u,v,w)\in \Sc_j}y_{j,uvw}^{0}Z_{j,uvw}
%= \frac{1}{n}C_j + \frac{1}{n}\sum\limits_{(u,v,w)\in \Sc_j}y_{j,uvw}^{0}Z_{j,uvw}$$
%for $j=1,2,...,p$ and $$X_0^{0}=\frac{1}{n}\sum\limits_{j=1}^p\sum\limits_{(u,v)\in \Nc_j}Z_{j,uv}+\frac{1}{n}\sum\limits_{j=1}^p\sum\limits_{(u,v,w)\in \Sc_j}y_{j,uvw}^{0}Z_{j,uvw}
%= \sum_{j=1}^pX_j^0.$$
%        \FOR{$t=0,1,2,\ldots,S-1$}
%		\STATE  Randomly choose $\Ic\subset\{1,2,...,n\}$ with $|\Ic|=m$.
%\STATE
%\begin{eqnarray}
%y_{j,uvw}^{t+1}=\left\{
%\begin{array}{cc}
%\argmin_\beta
%-\frac{\beta}{N}
%\left(\text{Tr}(Z_{j,uvw}) - \left\langle {X_0^t\over\lambda_0}+{X_j^t\over\lambda_j},Z_{j,uvw}\right\rangle\right)+\frac{\phi^*(\beta)}{N}+\frac{(\beta-y_{j,uvw}^{t})^2}{2\sigma}&\text{if }(j,uvw)\in\Ic\\
%y_{j,uvw}^{t}&\text{Otherwise}
%\end{array}
%\right.
%\end{eqnarray}
%\STATE
%$$X_j^{t+1}=X_j^{t}+\sum_{(j,uvw)\in\Ic,j=i}\frac{(y_{j,uvw}^{t+1}-y_{j,uvw}^{t})}{N}Z_{j,uvw}$$ for $j=1,2,...,p$ and $$X_0^{t+1}=X_0^{t}+\sum_{(j,uvw)\in\Ic,}\frac{(y_{j,uvw}^{t+1}-y_{j,uvw}^{t})}{N}Z_{j,uvw}
%=\sum_{j=1}^{p}X_j^{t+1}.$$
%\ENDFOR
%\end{algorithmic}
%\end{algorithm}

\section{Numerical Experiments}
\label{sec:numerical}
In this section, we conduct numerical experiments to compare the DSPDC method with  two other popular stochastic coordinate methods, SPDC~\citep{ZhangXiao:14} and SDCA~\citep{SSZhang13SDCA}
%~\footnote{We did not include accelerated SDCA~\citep{SSZhang13acclSDCA} in comparison because it has the same overall complexity as SPDC.}
on three scenarios. The first two are empirical risk minimizations, with one applied on factorized data (see Section~\ref{sec:implementation}) and the other using matrices as decision variables (see Section~\ref{sec:matrix_risk}), respectively. Those experiments are run on somewhat synthetic data and serve as the first step of sanity check for the convergence speed. The third is a multi-task large margin nearest neighbor metric learning problem (see Section~\ref{sec:mt_lmnn}) on a real dataset. In a nutshell, we show that DSPDC outperforms the competitors in terms of running time in all the experiments.

\subsection{Learning with factorized data}
We first consider the binary classification problem with smoothed hinge loss under the sparse recovery setting. Besides, we work on a low-dimensional feature space where random feature reduction is applied.
That being said, we are solving the problem~\eqref{eq:srir}
%\begin{eqnarray}
%\label{eq:srir}
%\min_{x\in\mathbb{R}^p}\max_{y\in\mathbb{R}^n} \left\{\frac{\lambda_2}{2}\|x\|_2^2+\lambda_1\|x\|_1+\frac{1}{n}y^TXG^TGx-\frac{1}{n}\sum_{i=1}^n\phi_i^*(y_i)\right\}
%\end{eqnarray}
with $\phi_i(z)$ given by \eqref{eq:ssvmloss}.
%g(x)=\frac{\lambda}{2}\|x\|^2,\quad\quad
%and $b_i\in\{1,-1\}$ is the label for the $i$th instance.

For the experiments over synthetic data, we first generate a random matrix $X\in\reals^{n\times p}$ with $X_{ij}$ following i.i.d.\ standard normal distribution. We sample a random vector $\beta\in\mathbb{R}^p$ with $\beta_j=1$ for $j=1,2,\dots,50$ and $\beta_j=0$ for $j=51,52,\dots,p$ and use $\beta$ to randomly generate $b_i$ with the distribution $\mbox{Pr}(b_i=1|\beta)=1/(1+e^{-X_i^T\beta})$ and $\mbox{Pr}(b_i=-1|\beta)=1/(1+e^{X_i^T\beta})$. To construct factorized data, we
%follow the setting of sparse recovery problem from randomized feature reduction~\eqref{eq:srir}
generate a random matrix $G\in\reals^{d\times p}$ with $d<p$ and $G_{ij}$ following i.i.d.\ normal distribution $\mathcal{N}(0,1/d)$. Then, the factorized data $A=UV$ for \eqref{eq:erm} is constructed with $U=XG^T$ and $V=G$.
%\begin{eqnarray*}
%\mbox{Pr}(b_i=1|\beta)=\frac{e^{X_i^T\beta}}{1+e^{X_i^T\beta}},\quad
%\mbox{Pr}(b_i=-1|\beta)=\frac{1}{1+e^{X_i^T\beta}}.
%\end{eqnarray*}

To demonstrate the effectiveness of these three methods under different settings,  we choose different values for $(n,m,p,q,d)$ and the regularization parameters $(\lambda_1,\lambda_2)$ in~\eqref{eq:srir}. The numerical results are presented in Figure~\ref{fig:synthetic} with the choices of parameters stated at its bottom. Here, the horizontal axis represents the
%number \emph{arithmetic operations} (A.O.)
running time of an algorithm while the vertical axis represents the  primal gap in logarithmic scale.
%By doing so, we can characterize the time efficiency of these algorithms in a unified measure independent of the machines, programming languages, and platforms used in their implementations.
According to Figure~\ref{fig:synthetic},
DSPDC is significantly faster than both SPDC and SCDA, under these settings.

\begin{figure*}[htp]
\centering
    \subfigure{
        \centering
        \includegraphics[width=0.31\columnwidth]{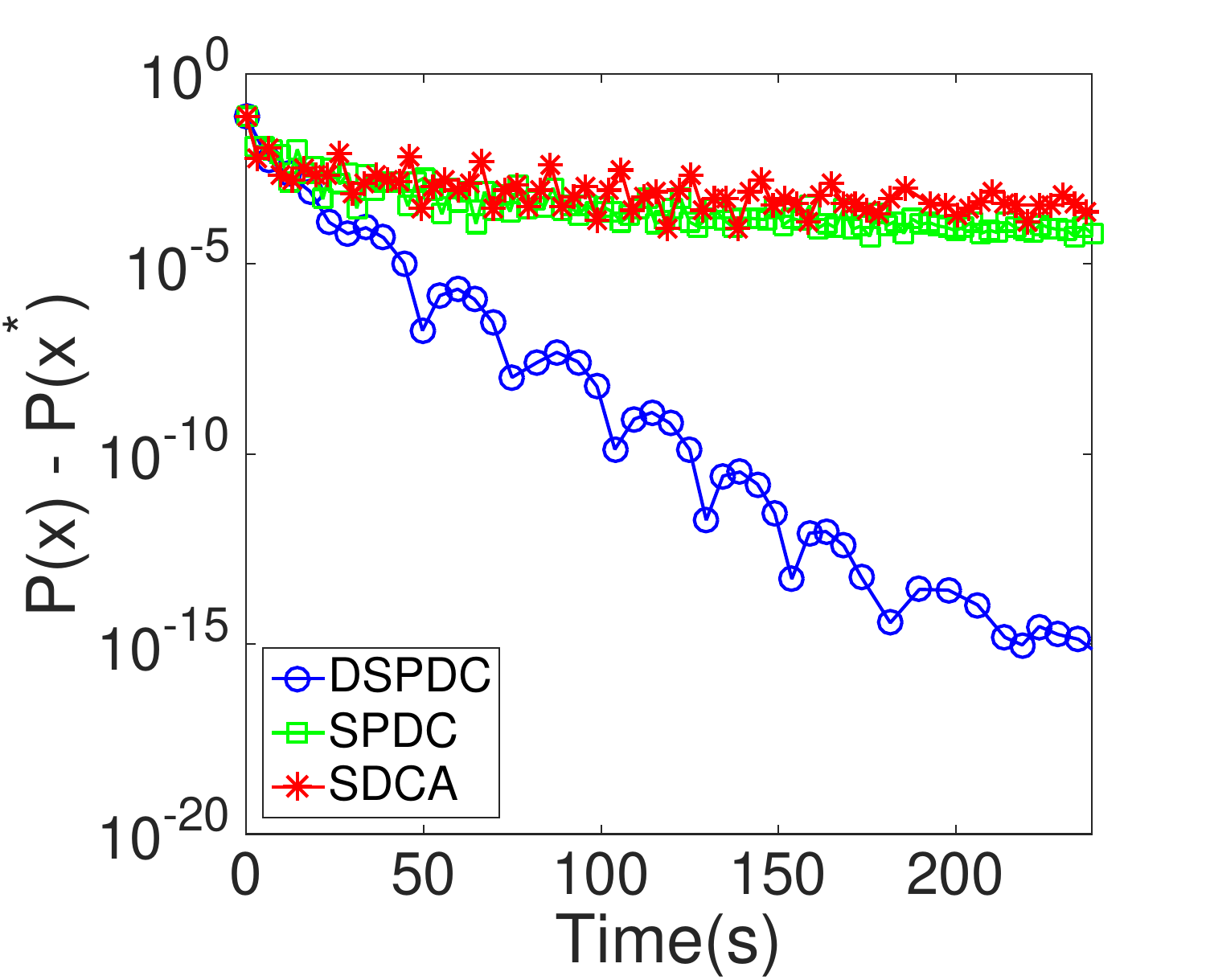}
        \label{fig:n5000lambda01}
    }
    \subfigure{
        \centering
        \includegraphics[width=0.31\columnwidth]{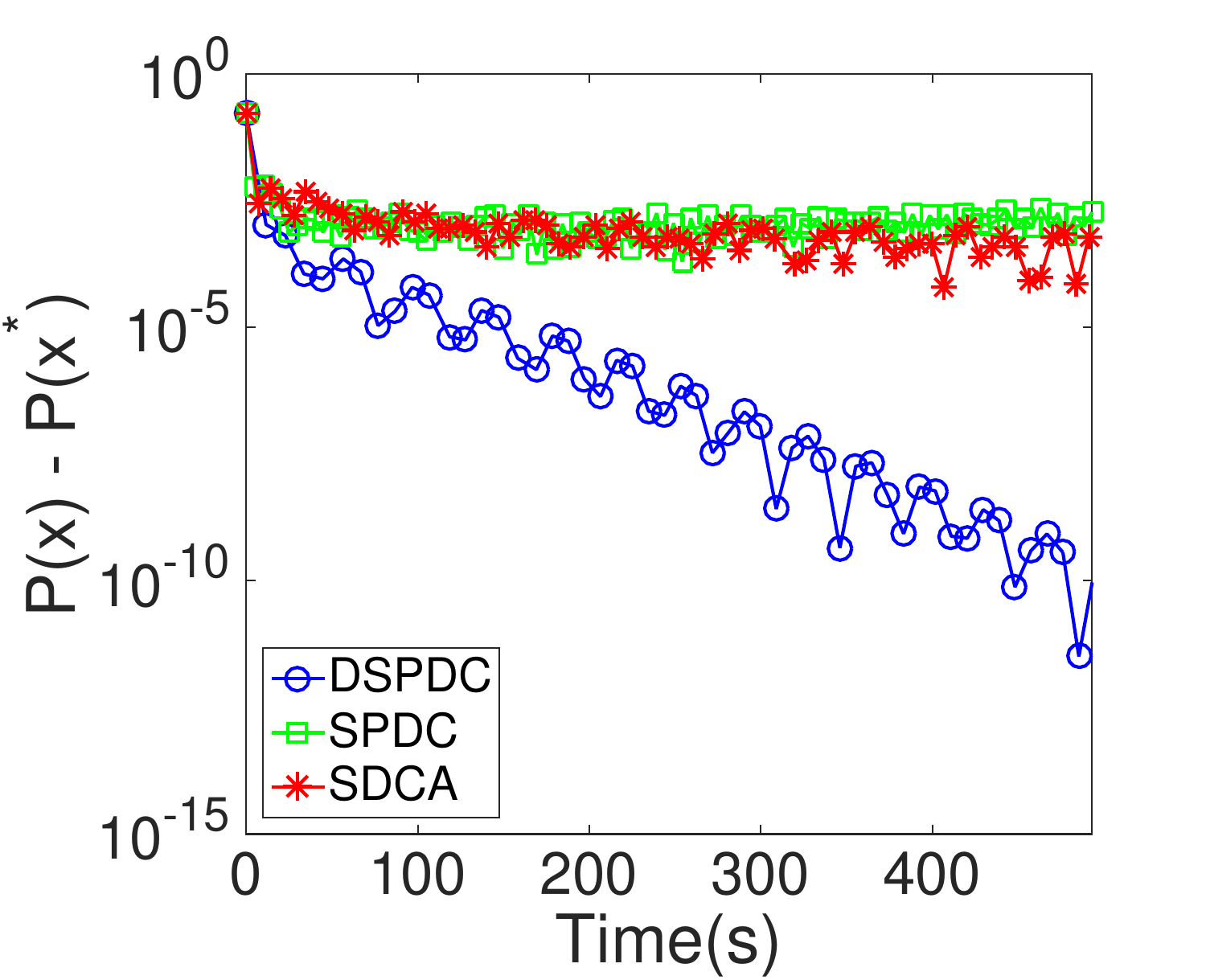}
        \label{fig:n10000lambda01}
    }
    \subfigure{
        \centering
        \includegraphics[width=0.31\columnwidth]{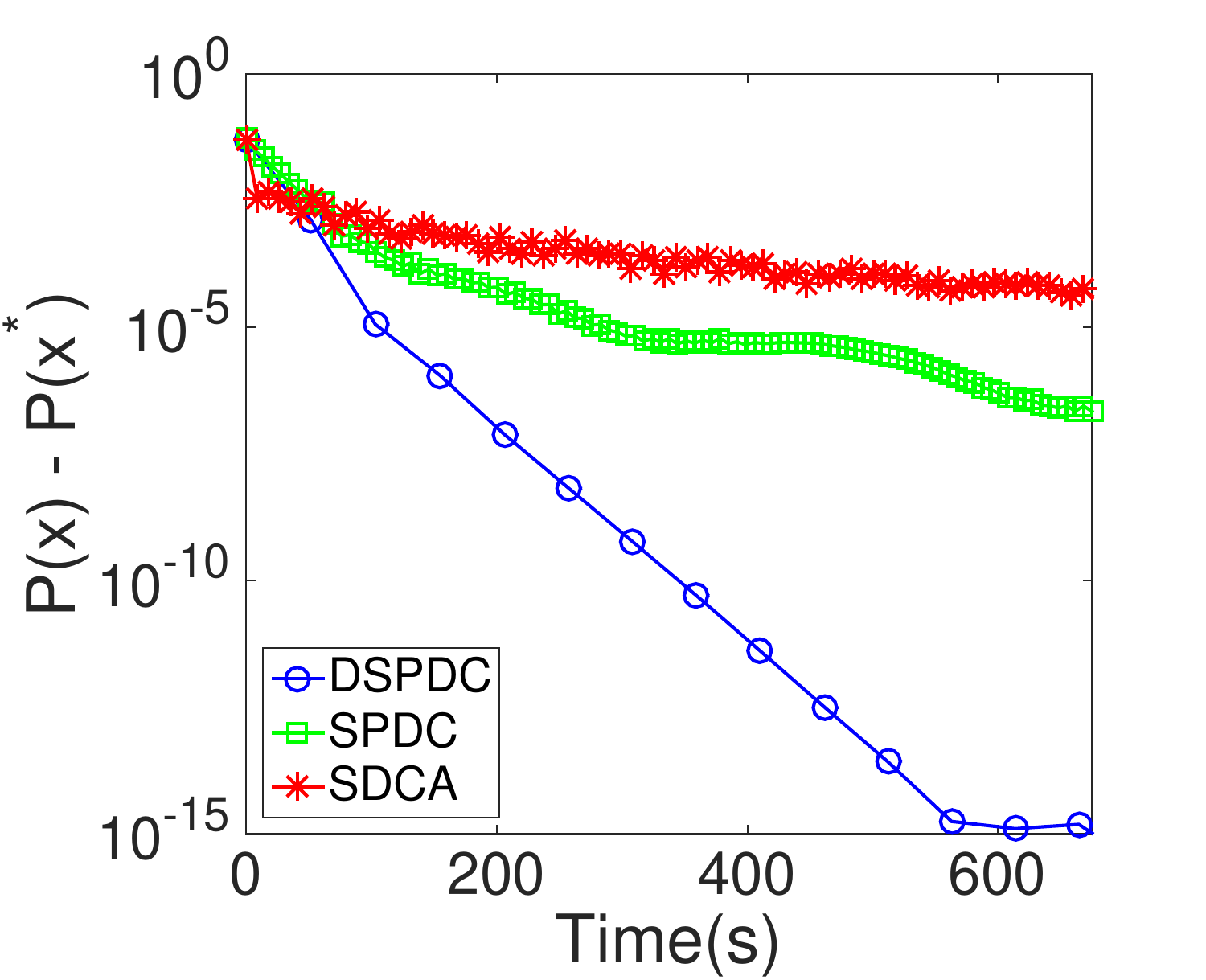}
        \label{fig:n50000lambda01}
    }
    \subfigure{
        \centering
        \includegraphics[width=0.31\columnwidth]{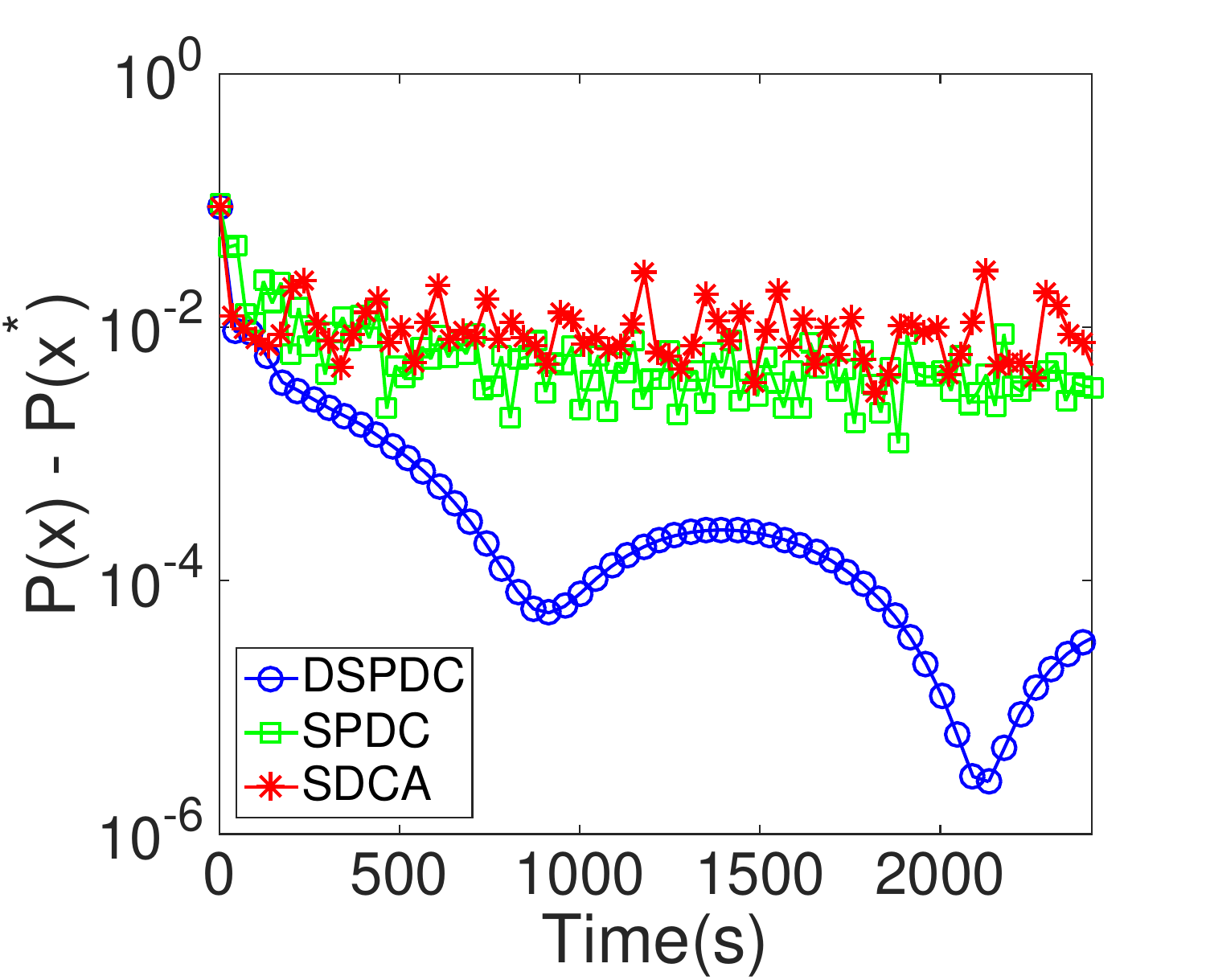}
        \label{fig:n5000lambda1e-6}
    }
    \subfigure{
        \centering
        \includegraphics[width=0.31\columnwidth]{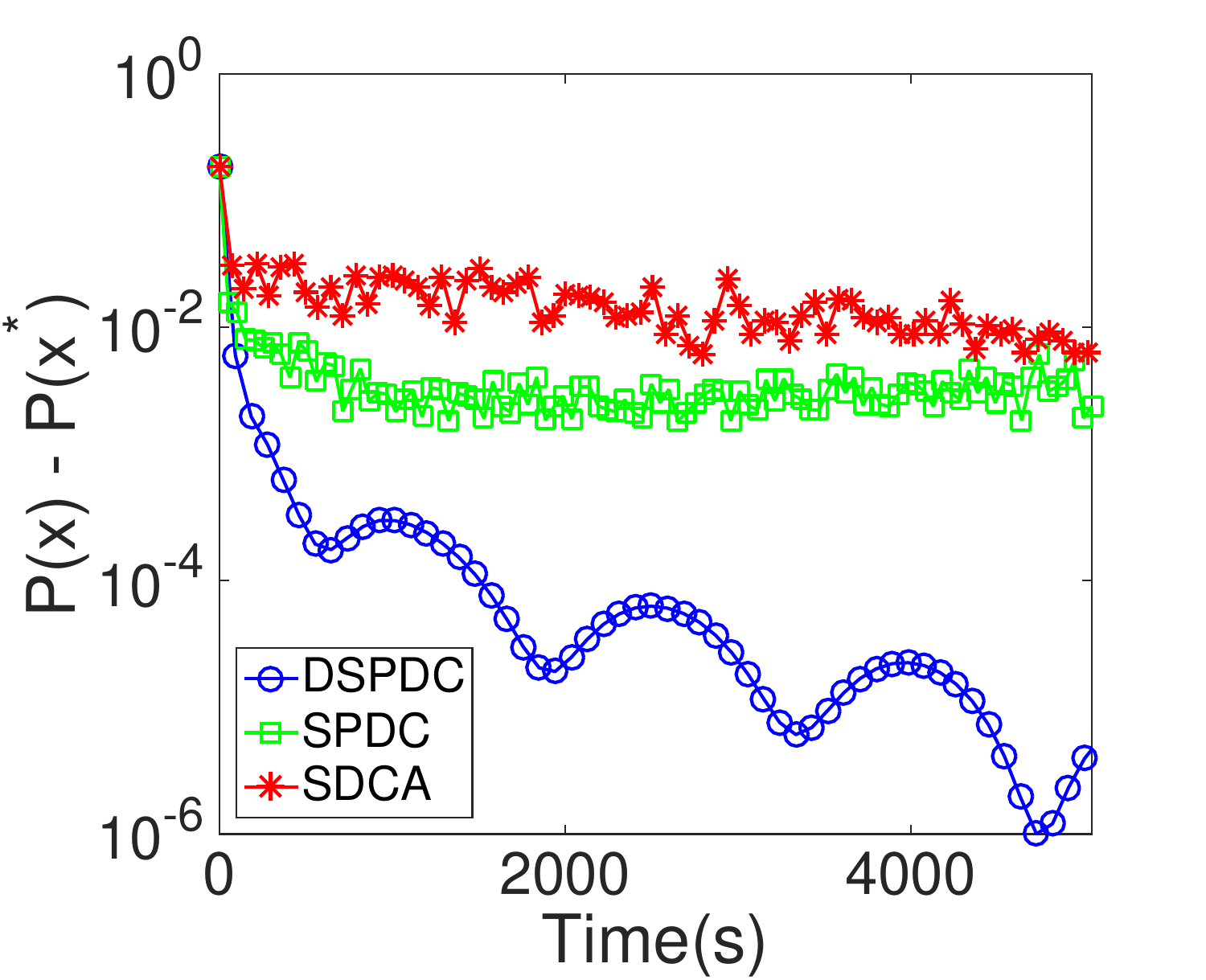}
        \label{fig:n10000lambda1e-6}
    }
    \subfigure{
        \centering
        \includegraphics[width=0.31\columnwidth]{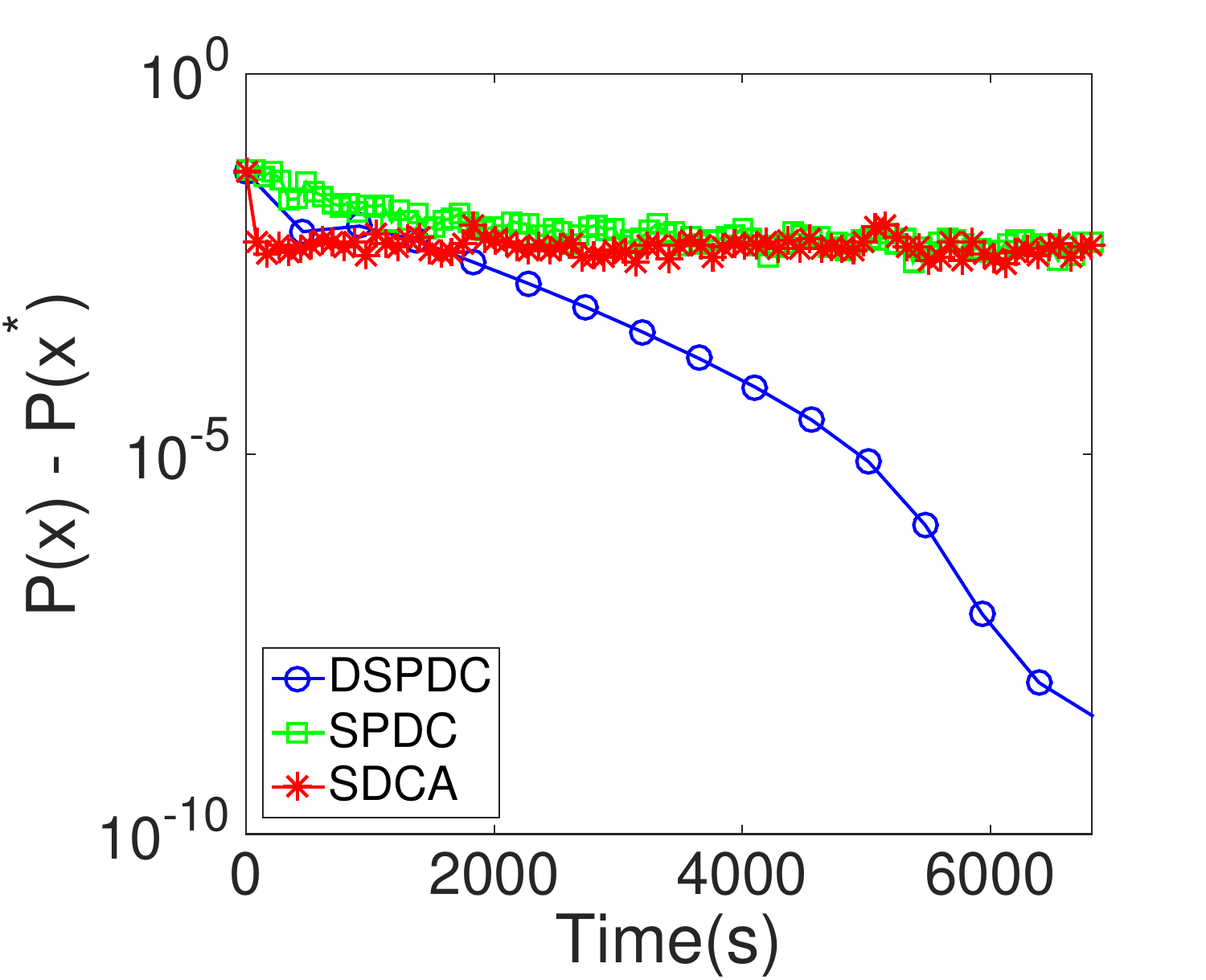}
        \label{fig:n50000lambda1e-6}
    }
    \vspace{-10pt}
  \caption{For all cases, $m=1$. First row: $\lambda_1=10^{-3}, \lambda_2 = 10^{-2}$; Second row: $\lambda_1=10^{-6},\lambda_2=10^{-5}$. First column: $(n,p,q,d)=(5000,100, 50,20)$; Second column: $(n,p,q,d)=(10000,100, 50,50)$; Third column: $(n,p,q,d)=(10000,500, 50,50)$.}
   \label{fig:synthetic}
\end{figure*}

\begin{figure}[htp]
	\centering
	\includegraphics[width=0.31\columnwidth]{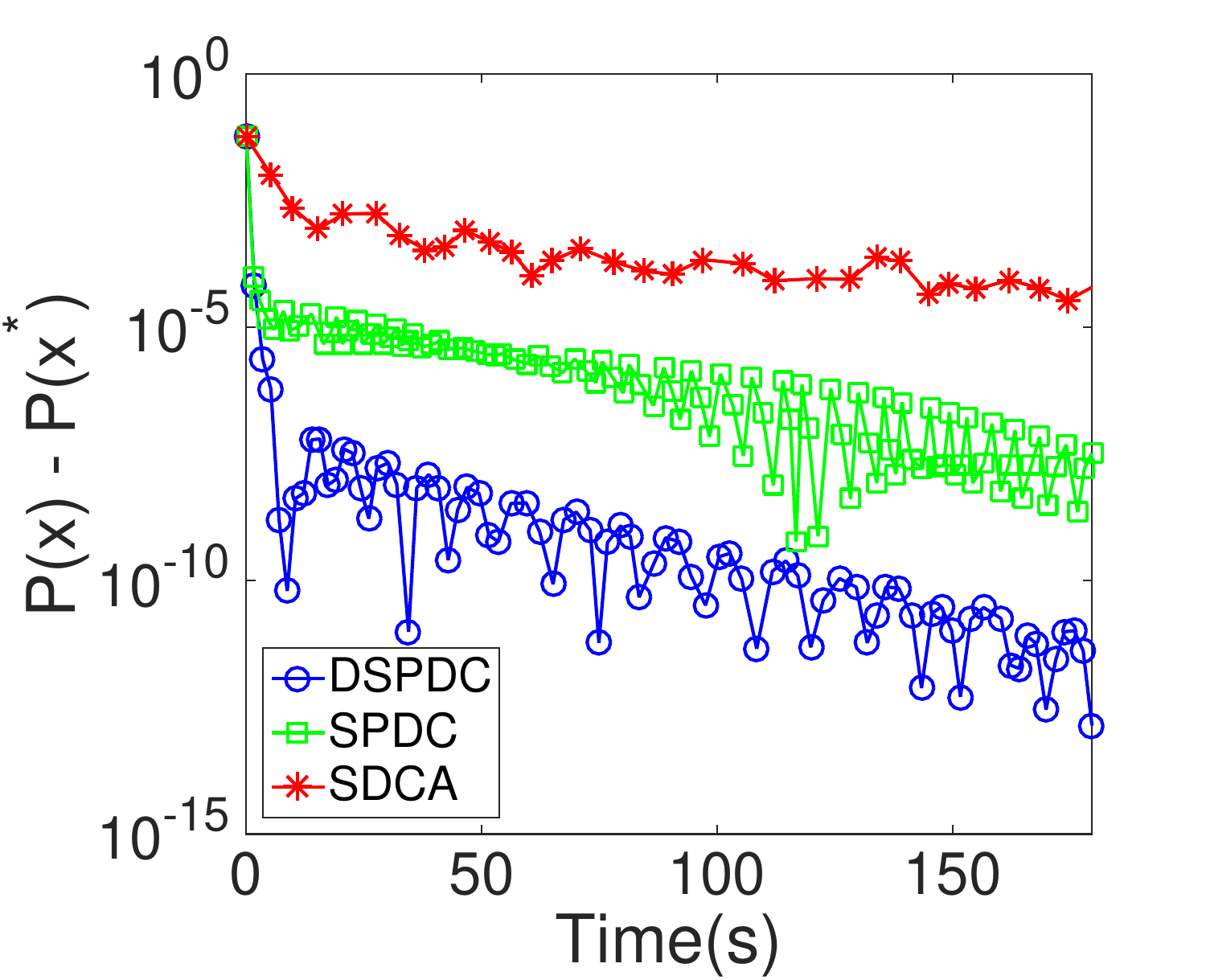}
	\includegraphics[width=0.31\columnwidth]{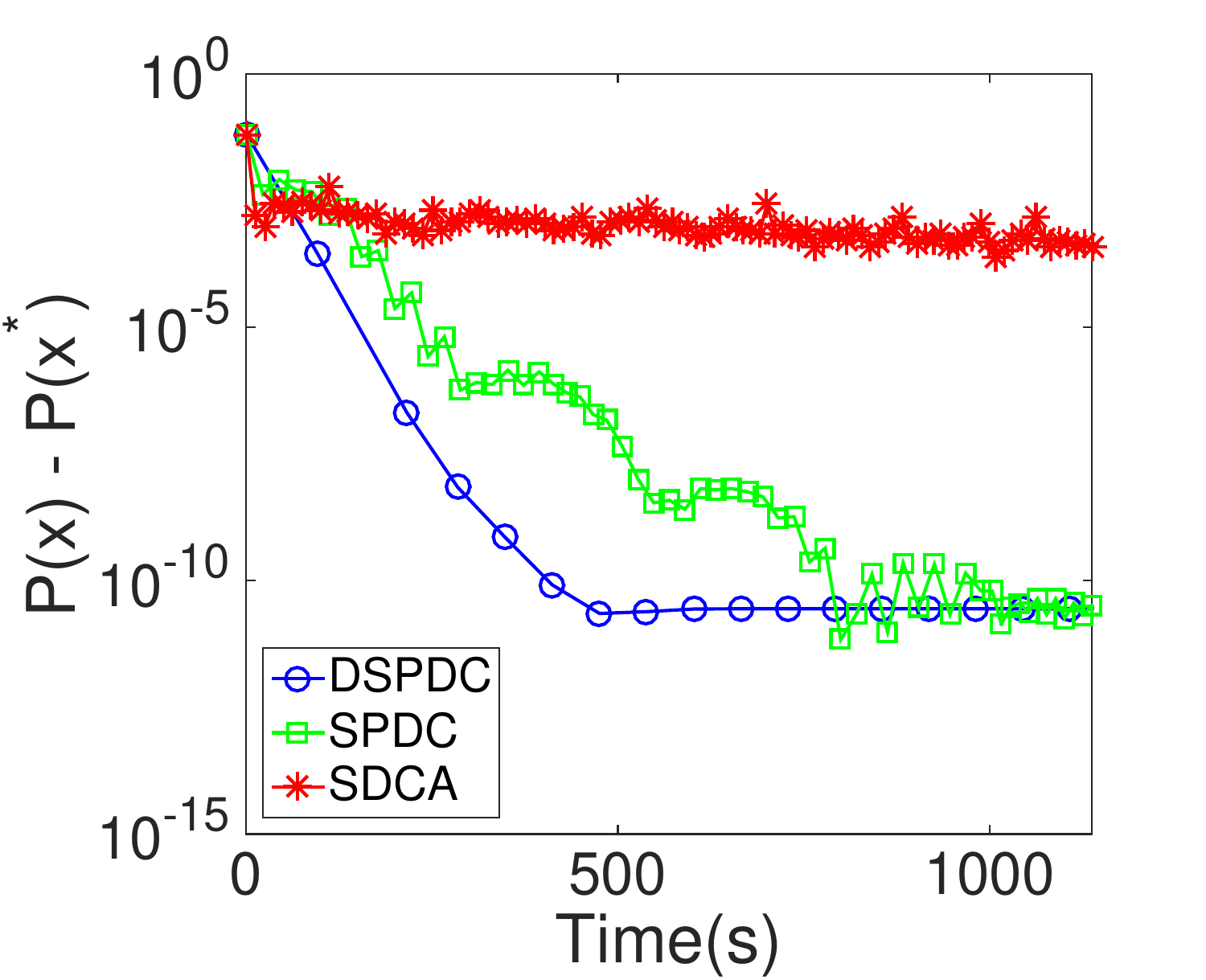}
	\includegraphics[width=0.31\columnwidth]{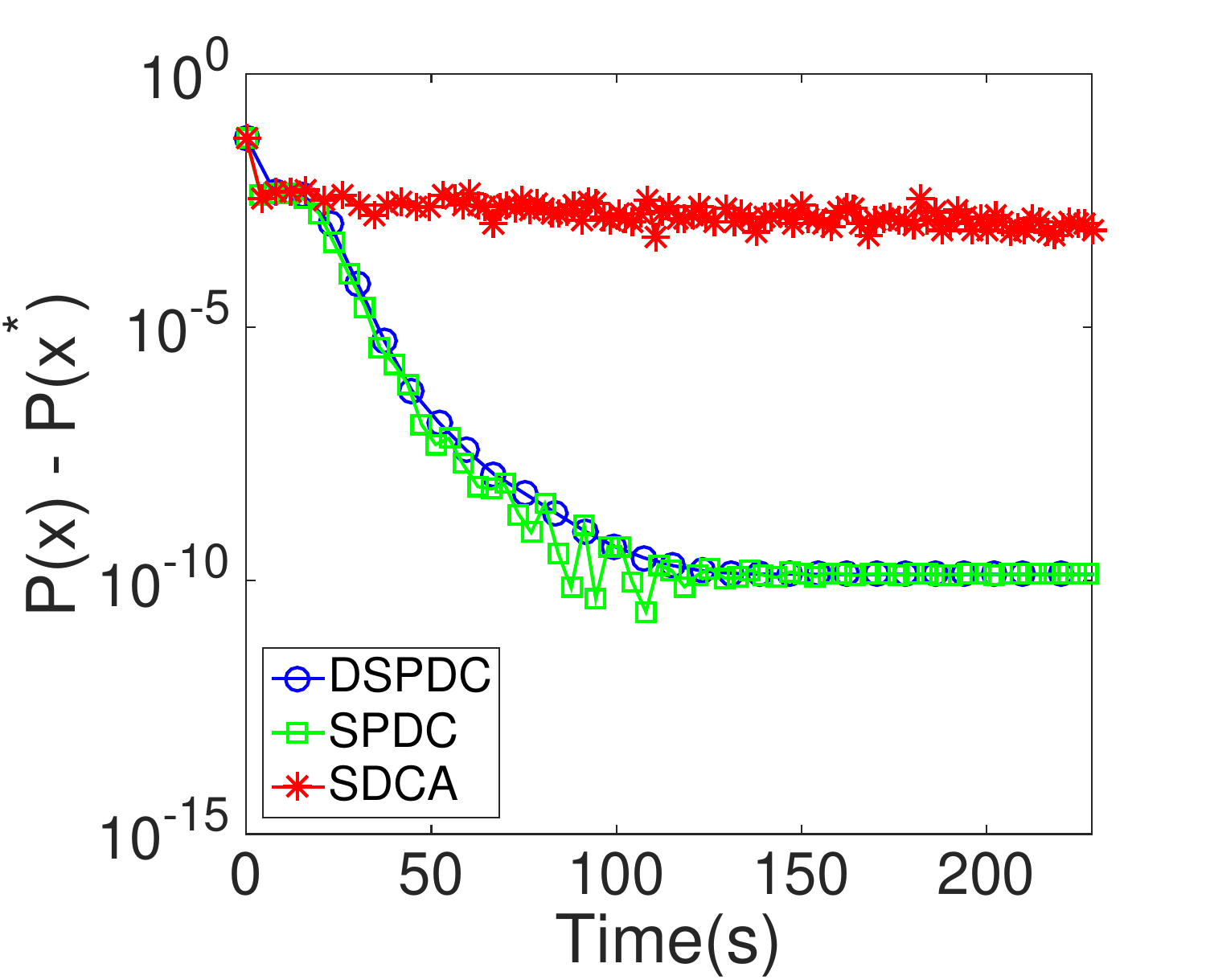}
	\vspace{-5pt}
	\caption{Performance on real datasets. Left: Covtype. Middle: RCV1. Right: Real-sim.}
	\label{fig:realdata}
	\vspace{-5pt}
\end{figure}

%\begin{figure}[htbp]
%\centering
%\includegraphics[width=0.31\columnwidth]{}
%\includegraphics[width=0.31\columnwidth]{}
%\includegraphics[width=0.31\columnwidth]{}\\
%\includegraphics[width=0.31\columnwidth]{}
%\includegraphics[width=0.31\columnwidth]{}
%\includegraphics[width=0.31\columnwidth]{}
%\caption{For all cases, $m=1,\lambda_1=10^{-3}$. First row: $\lambda_2 = 10^{-2}$; Second row: $\lambda_2=10^{-6}$. First column: $(n,p,q,d)=(5000,100, 50,20)$; Second column: $(n,p,q,d)=(10000,100, 50,50)$; Third column: $(n,p,q,d)=(50000,1000, 50,50)$.}
%\label{fig:synthetic}
%\end{figure}

%\begin{figure}[!h]
%\label{fig:realdata}
%\centering
%\includegraphics[width=0.31\columnwidth]{}
%\includegraphics[width=0.31\columnwidth]{}
%\includegraphics[width=0.31\columnwidth]{}
%\vspace{-5pt}
%\caption{Performance on real datasets. Left: Covtype. Middle: RCV1. Right: Real-sim.}
%\vspace{-5pt}
%\end{figure}

%\vspace{-3pt}

%\vspace{-2pt}

We then conduct the comparison of these methods over three real datasets\footnote{\url{http://www.csie.ntu.edu.tw/~cjlin/libsvmtools/datasets/binary.html}}: Covtype ($n=581012,p=54$), RCV1 ($n=20242,p=47236$), and Real-sim ($n=72309,p=20958$). We still consider the sparse recovery problem from feature reduction which is formulated as~\eqref{eq:srir} with $\phi_i$ defined as ~\eqref{eq:ssvmloss}. In all experiments, we choose $d=20$ to generate the random matrix $G$ and set $\lambda_1=10^{-4}$, $\lambda_2=10^{-2}$ in~\eqref{eq:srir}.
%Since these three sets data are real data, their sizes and dimensions are not in whole thousands.
We choose $m$ and $q$ so that $n$ and $p$ can be either dividable by them or has a small division remainder. The numerical performances of the three methods are shown in Figure~\ref{fig:realdata}.
%with the values of $m$ and $q$ stated below.
In these three examples, SPDC and DSPDC both outperform SDCA significantly. Compared to SPDC, DSPDC is even better on the first two datasets and has the same efficiency on the third.
%DSPDC has as similar performance to SPDC on RCV1 Real-sim but has a better performance than SPDC when applied to Covtype.

\subsection{Matrix Risk Minimization}
Next we study the performance of DSPDC for solving the multiple-matrix risk minimization problem \eqref{eq:matrixerm}. We choose $\phi_i$ in \eqref{eq:matrixerm} to be \eqref{eq:ssvmloss} and generate $\bbD_i^j$ as a $d\times d$ matrix with entry sampled from a standard Gaussian distribution for $i=1,2,...,n$ and $j=1,2,...,p$.
Then we generate the true parameter matrix $\bar X_j$ as a $d\times d$ identity matrix for $j=1,2,...,p$. Then we use $\bar X_j$ and $\bbD_i^j$ to generate $b_i$ such that
$b_i = 1$ if ${1\over (1+\exp\{-\langle\bbD_i^j, \bar X_j\rangle \})}> 0.5$ or $b_i = -1$ otherwise.
% with the distribution $\mbox{Pr}(b_i=1|\beta)=1/(1+e^{-X_i^T\beta})$ and $\mbox{Pr}(b_i=-1|\beta)=1/(1+e^{X_i^T\beta})$.
In this experiment, we set $d=100$ or 200, $p = 100, n=100, \lambda = 0.01$.

We compare the performance of DSPDC, SPDC~\citep{ZhangXiao:14} and SDCA~\citep{SSZhang13SDCA} with various sampling settings, and the results are shown in Fig \ref{fig:matrix}.
It can be easily seen that DSPDC converges much faster than both SPDC and SDCA, in terms of running time. The behaviors of these algorithms are due to the fact that, in each iteration, both SPDC and SDCA need to take $p$ eigenvalue decompositions of $d\times d$ matrix while DSPDC only needs $q$ such operations. Since the cost of each eigenvalue decomposition is as expensive as $O(d^3)$, the total computation cost saved by DSPDC is thus significant.
%\vspace{-10pt}
\begin{figure}[htp]
\centering
\includegraphics[width=0.31\columnwidth]{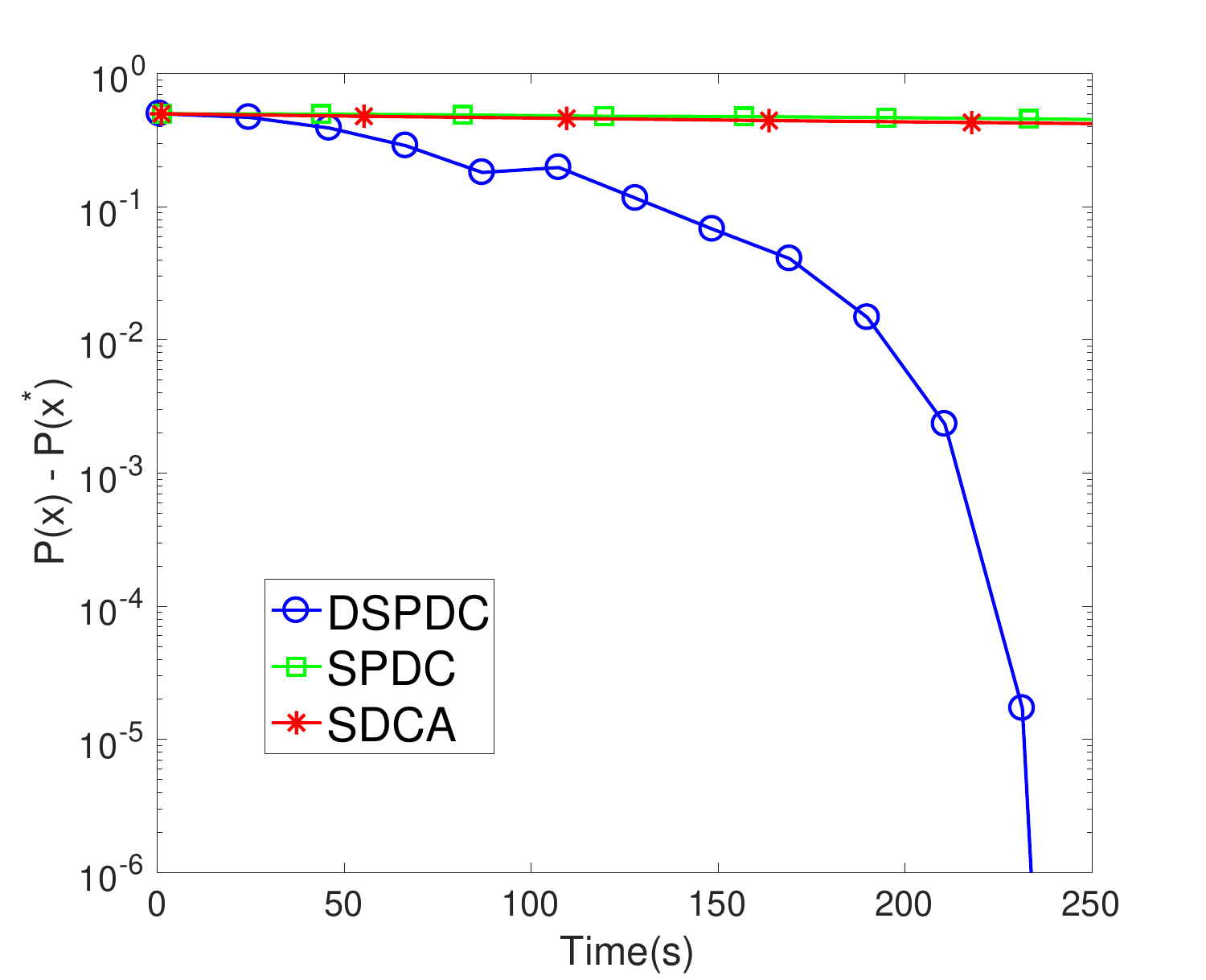}
\includegraphics[width=0.31\columnwidth]{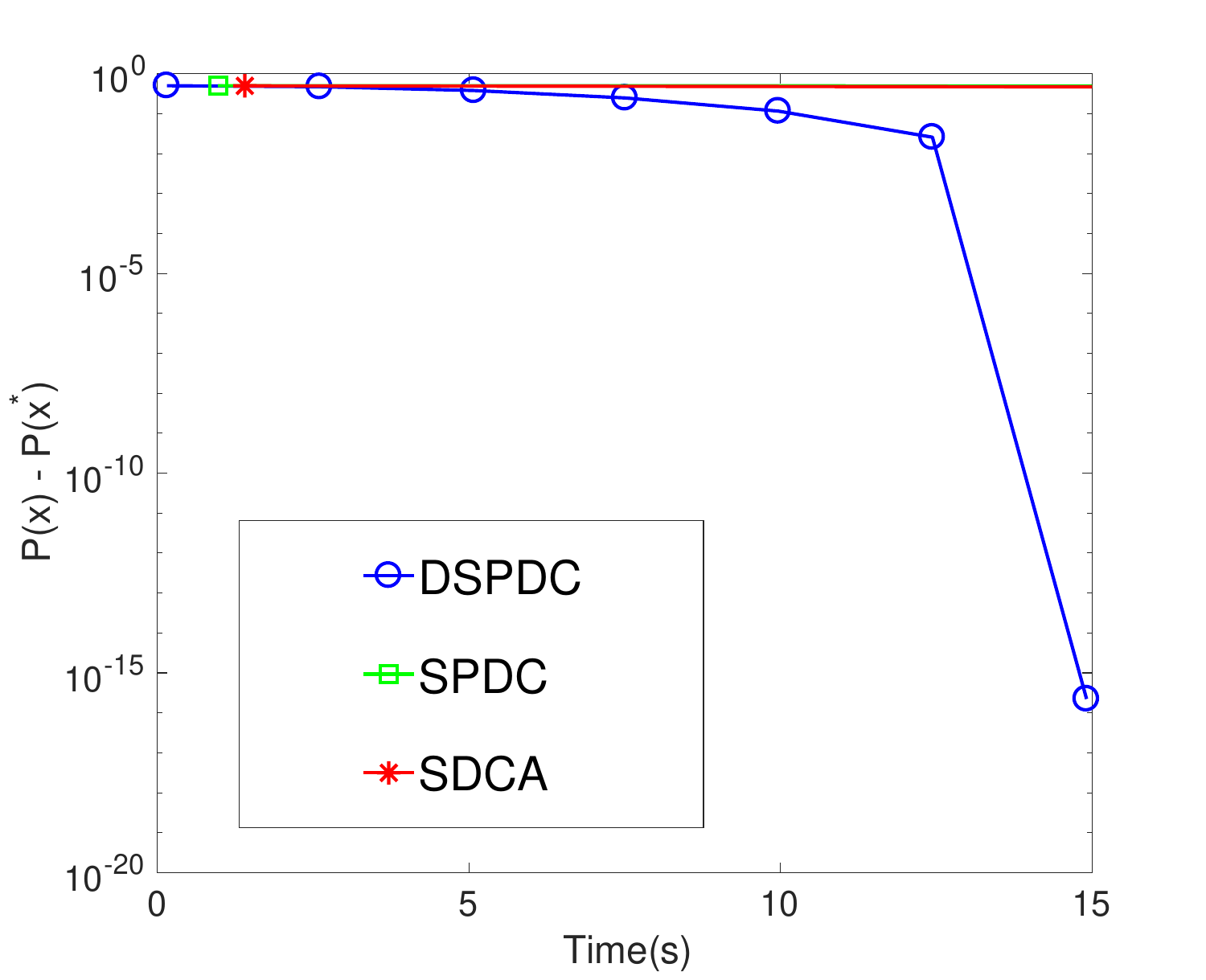}
\includegraphics[width=0.31\columnwidth]{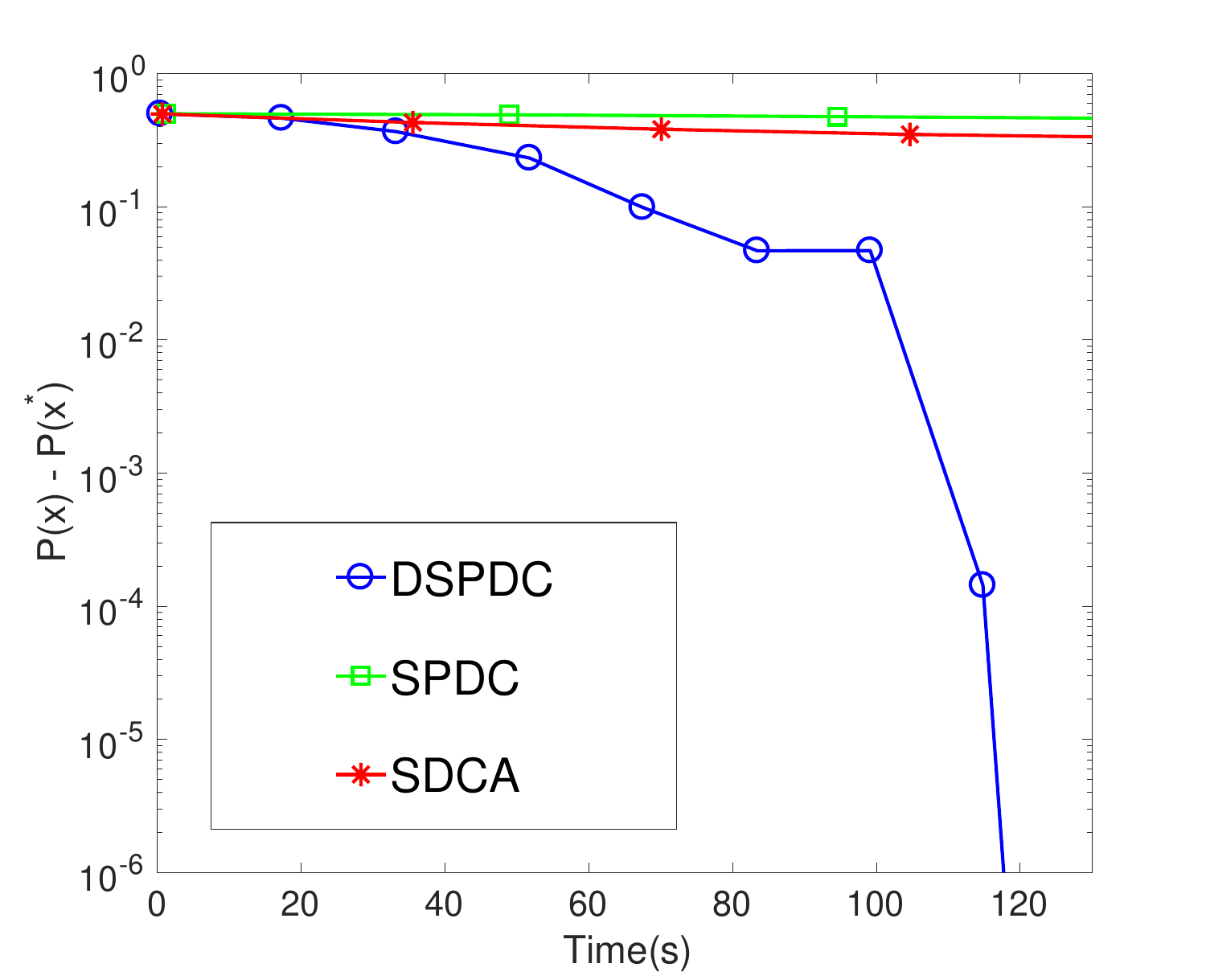}
\includegraphics[width=0.31\columnwidth]{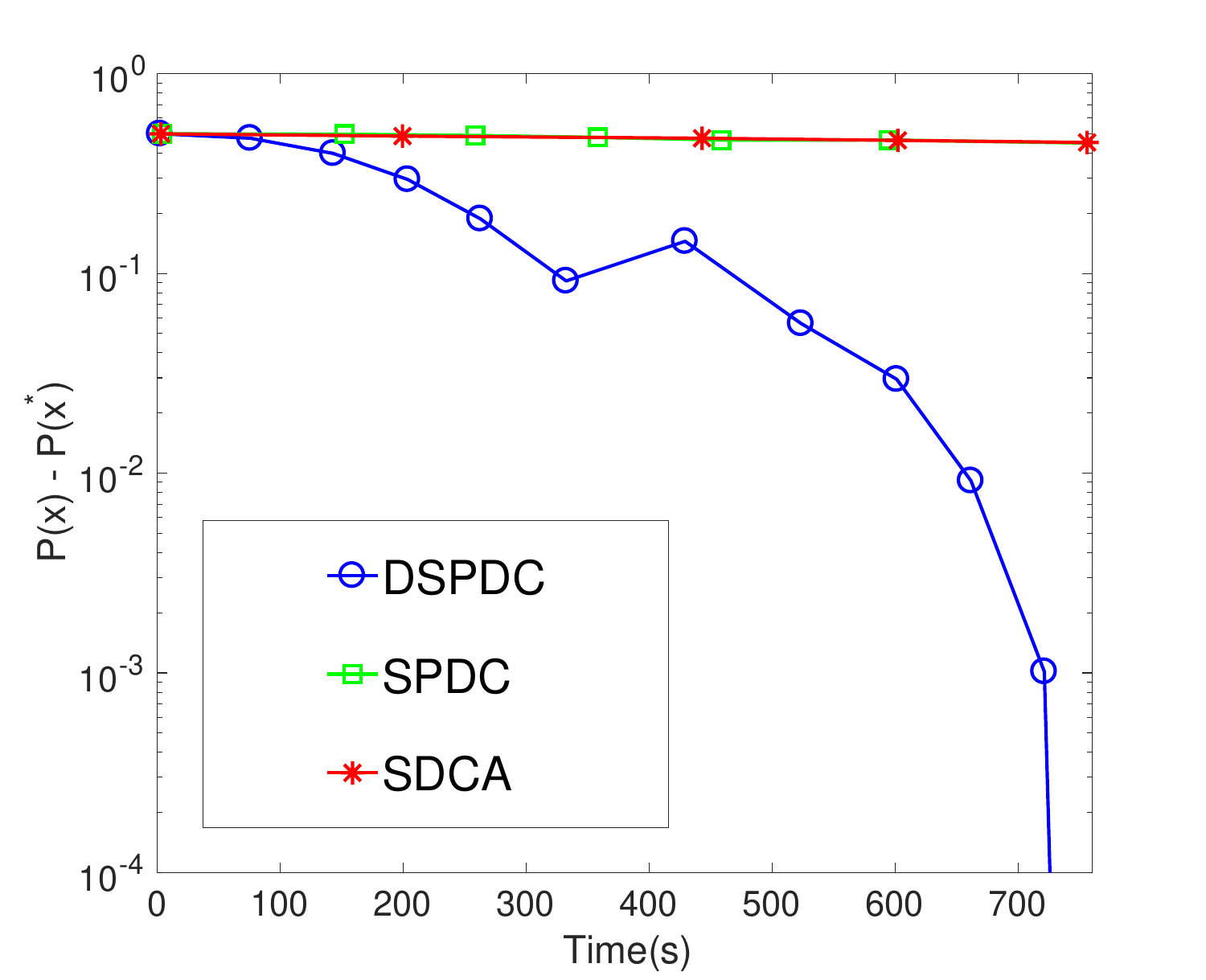}
\includegraphics[width=0.31\columnwidth]{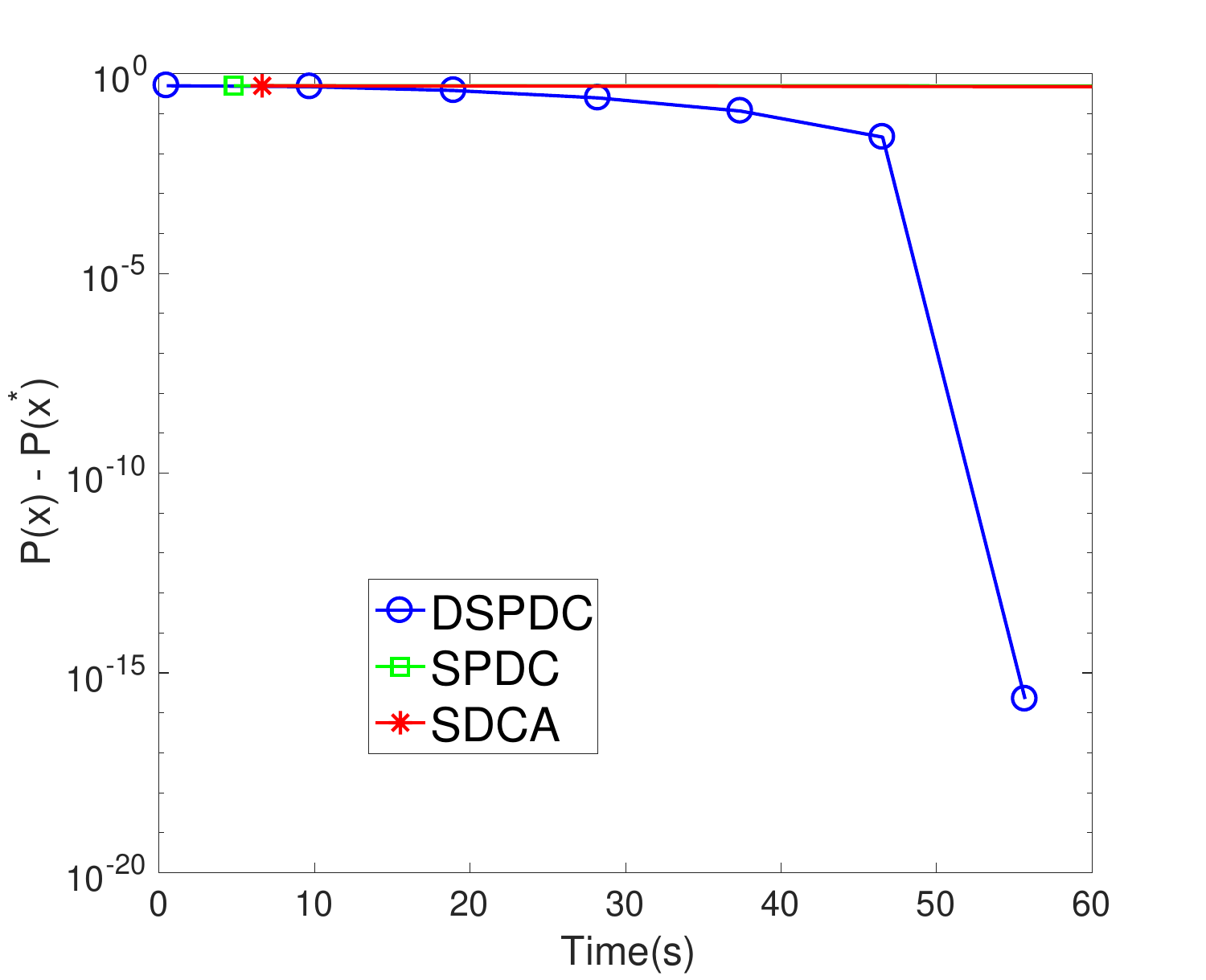}
\includegraphics[width=0.31\columnwidth]{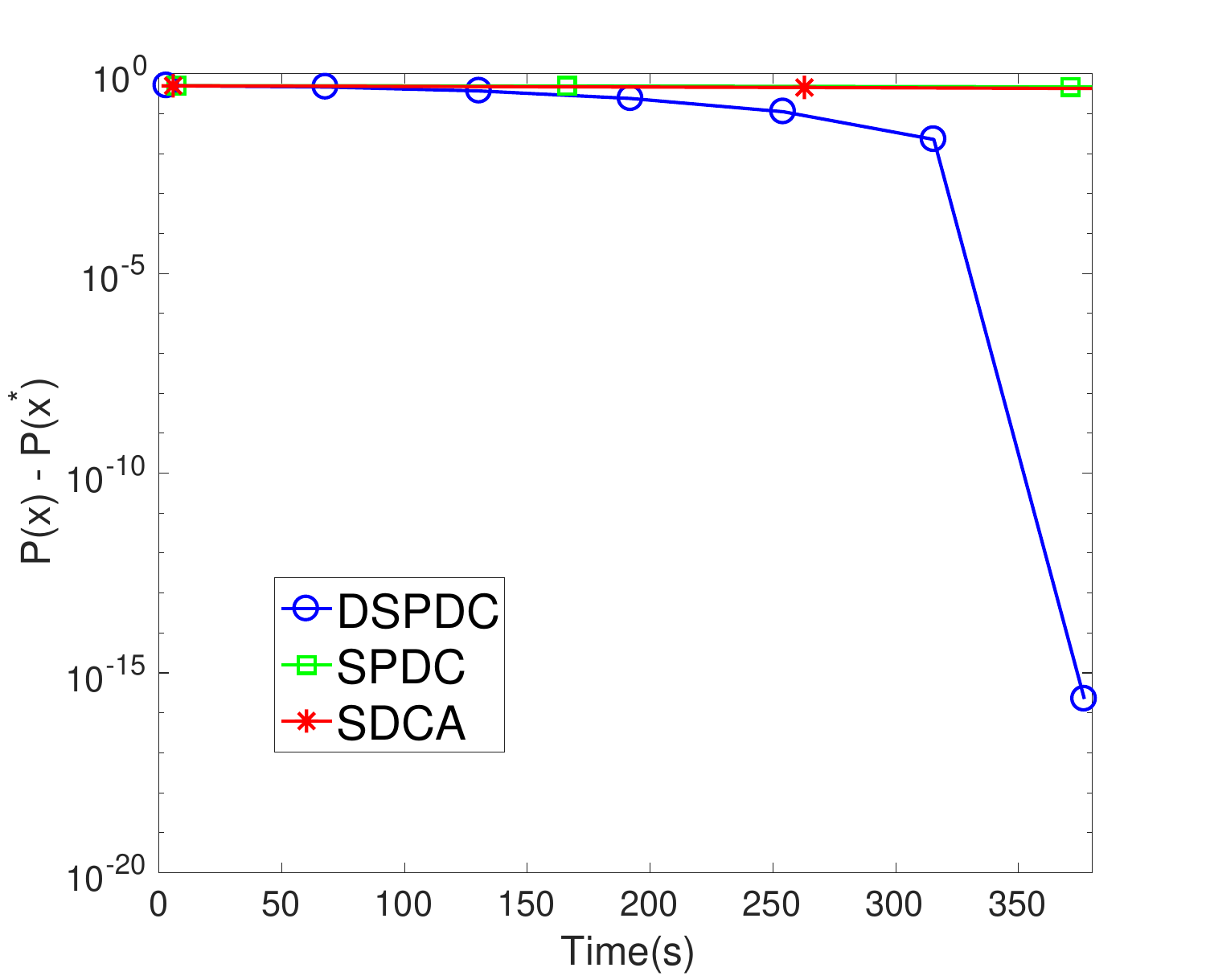}
\vspace{-5pt}
\caption{Performance on matrix risk minimization. First row: $d=100$. Second row: $d=200$. Left: $(m,q)=(5,50)$. Middle: $(m,q)=(50,5)$. Right: $(m,q)=(20,20)$. }
\label{fig:matrix}
\end{figure}
%\vspace{-20pt}

%\input{sec_intro}
%\input{sec_approach}

\subsection{Multi-task  Large Margin Nearest Neighbor Problem}\label{sec:mt_lmnn_exp}
%\paragraph{Dataset and experimental setup}
Finally, we compare the performance of different algorithms on the MT-LMNN problem \eqref{eq:mt_lmnn}.
The dataset we are using is Amsterdam library of objects ALOI\footnote{\url{https://www.csie.ntu.edu.tw/~cjlin/libsvmtools/datasets/multiclass.html\#aloi}}, a collection of 108,000 images for small objects with 1,000 class labels.  Each image contains one small object which can be expressed as an extended color histogram of $d=128$ dimensions. We adopt the approach similar to~\citep{ParameswaranW10} to generate classification tasks. More specifically, 
we divide the class labels into 100 pieces, each having 10 labels. In other words, we have $100$ metric matrices to learn during training, as well as the one shared by all the tasks. The neighborhood size is $\ell=3$.
%for each classification task, we randomly select 10 out of 1000 class labels, and reduce the dataset to instances with only these labels. 
For each task, we randomly select 60\% of the data for training, resulting in a training set of 63936 instances. Under this setting, the total number of triplet constraints is $n=1142658$, which is also the number of dual variables.
We set $\lambda_0 = 0.01, \lambda_1=\cdots=\lambda_p=0.1$.

%\paragraph{Result}
The comparison is again between DSPDC, SPDC and SDCA under different sampling schemes, which is shown in Figure~\ref{fig:mt-lmnn}. We can observe the similar trends as Figure~\ref{fig:matrix}. In particular, DSPDC converges much faster to the optimal solution in terms of running time than both SPDC and SDCA, under all the sampling settings. The superiority of DSPDC in terms of running time is again due to the much less eigenvalue decomposition it does per iteration, which is the benefit brought by primal sampling. Indeed, as both SPDC and SDCA need to do full primal coordinate update, they have to carry out ${p\over q}$ times more eigenvalue decompositions than DSPDC. While all those methods have similar linear convergence rates, the computational cost per iteration dominates the performance.

\begin{figure}[htb]
\begin{center}
\includegraphics[width=0.32\columnwidth]{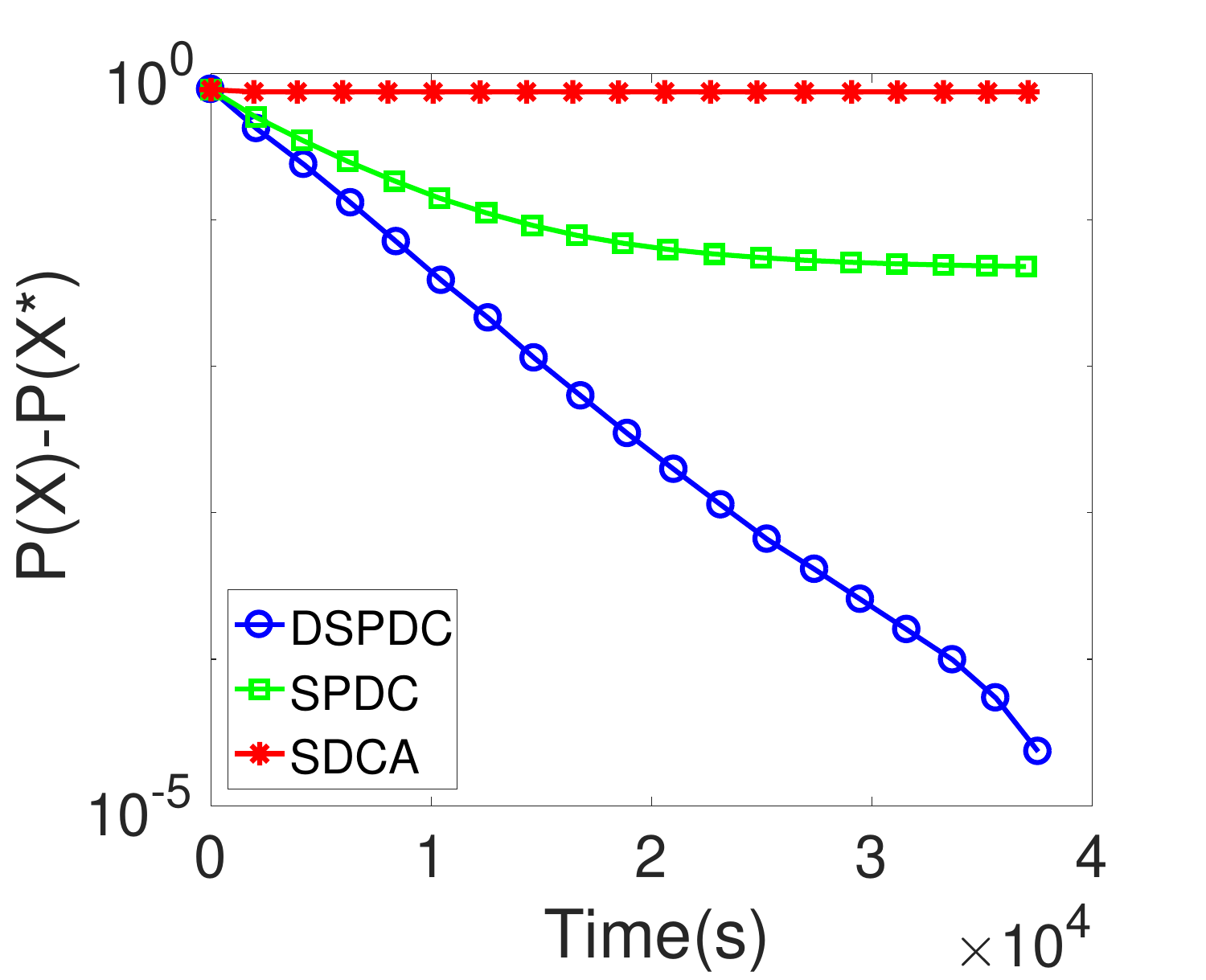}
\includegraphics[width=0.32\columnwidth]{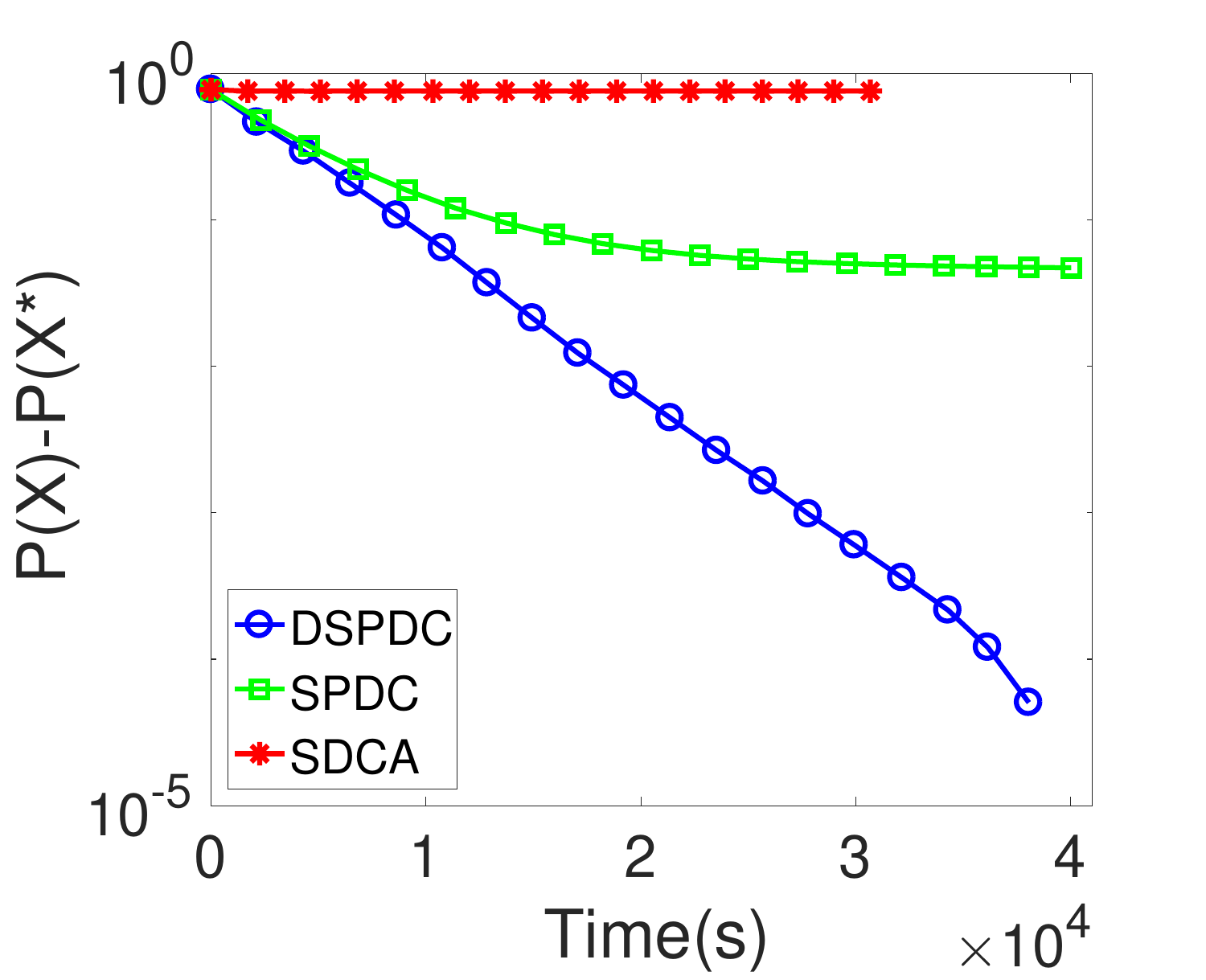}
\includegraphics[width=0.32\columnwidth]{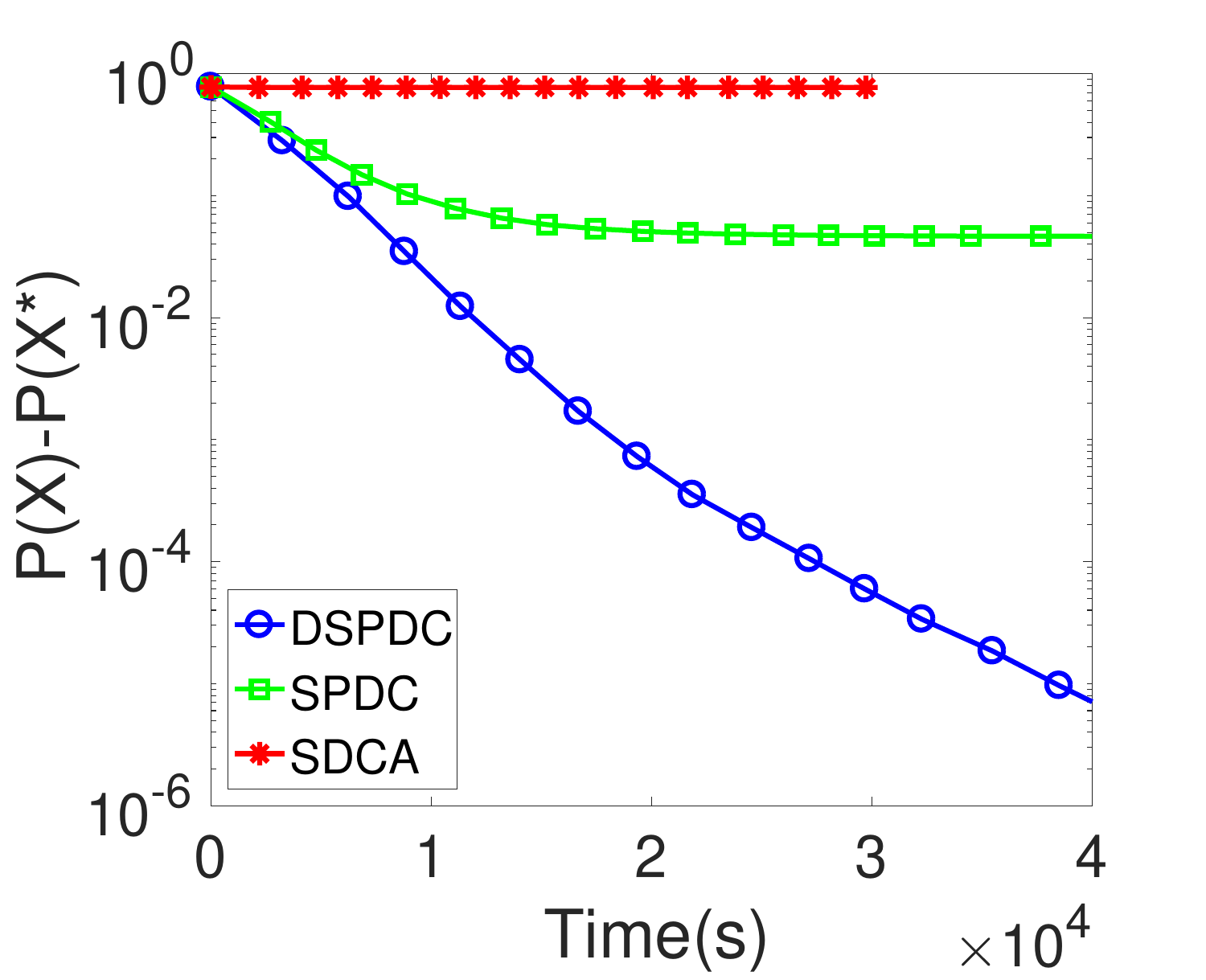}
\caption{The result of different methods on large margin multi-task metric learning problem with ALOI data. There are $p=100$ tasks and thus $p+1=101$ metric matrices to be learned, each being $128\times 128$. The dual variable (triplet constraint) size is $n=1142658$. For left to right, the primal and dual sampling sizes of DSPDC are respectively $(q,m)=(20, 2000), (q,m)=(20, 4000)$ and $(q,m)=(40, 8000)$. For SPDC and SDCA, the dual sampling sizes are the same as DSPDC while they conduct full primal coordinate update ($q=101$).}
\label{fig:mt-lmnn}
\end{center}

\end{figure}

\section{Conclusion}\label{sec:conclusion}
We propose a doubly stochastic primal dual coordinate (DSPDC) method for bilinear saddle point problem, which captures an important class of regularized empirical risk minimization (ERM) problems in statistical learning. We establish the iteration complexity of DSPDC for finding a pair of primal and dual solutions with a $\epsilon$-distance to the optimal solution or with a $\epsilon$-objective gap. When applied to ERM with factorized data or matrix variables with costly prox-mapping, such as the multi-task large margin nearest neighbor metric learning problem, our method achieves a lower overall complexity than existing coordinate methods.

%\begin{acknowledgements}
%We would like to thank Lin Xiao for helpful
%discussion that leads to the proof of Theorem~\ref{thm:conv_strong_astrsym_switch_gap}.
%\end{acknowledgements}

\newpage
\appendix
\section{Convergence Analysis}
\label{sec:proofs}
In this section, we provide the detailed proof of the main theoretical results in Section \ref{sec:DSPDC}.
\subsection{Some technical lemmas}
%In this section, we present some technical lemmas needed to establish the convergence results in this paper.

In order to prove Theorem \ref{thm:conv_strong_astrsym_switch}, we first present the following two technical lemmas which are extracted but extended from \cite{ZhangXiao:14}. In particular, the second inequality in both lemmas are given in \cite{ZhangXiao:14} while the first inequality is new and is the key to prove the convergence in objective gap. These lemmas establish the relationship between two consecutive iterates, $(x^{(t)},y^{(t)})$ and $(x^{(t+1)},y^{(t+1)})$.
%The proofs are given only for the sake of completeness.
\begin{lemma}
\label{thm:ZhangXiaolemma}
Given any $\bx\in\mathbb{R}^p$ and $v\in\mathbb{R}^p$, if we uniformly and randomly choose a set of indices $J\subset\{1,2,\dots,p\}$ with $|J|=q$ and solve an $\hx\in\mathbb{R}^p$ with
\begin{eqnarray}
\label{eq:x_subproblem}
%\small
\hx_j=\left\{\begin{array}{ll}
\argmin_{\alpha\in\mathbb{R}}\left\{\frac{1}{n}v_j\alpha+g_j(\alpha)+\frac{1}{2\tau}(\alpha-\bx_j)^2\right\}&\text{if }j\in J\\
\bx_j&\text{if }j\notin J,
\end{array}
\right.
%\normalsize
\end{eqnarray}
then, any  $x\in\mathbb{R}^p$, we have
%\small
\begin{eqnarray*}
\nonumber
&&\left(\frac{p}{2q\tau}+\frac{(p-q)\lambda}{2q}\right)\|x-\bx\|^2+\frac{p-q}{q}\left(g(\bx)-g(x)\right)\\%\label{eq:lemmaoptcondexp1}
&\geq&\left(\frac{p}{2q\tau}+\frac{p\lambda}{2q}\right)\mathbb{E}\|\hx-x\|^2
+\frac{p}{2q\tau}\mathbb{E}\|\hx-\bx\|^2+\frac{p}{q}\mathbb{E}\left(g(\hx)-g(x)\right)
+\frac{1}{n}\mathbb{E}\left\langle v,\bx+\frac{p}{q}(\hx-\bx)-x\right\rangle
\end{eqnarray*}
\normalsize
and
%\small
\begin{eqnarray*}
\nonumber
&&\left(\frac{p}{2q\tau}+\frac{(p-q)\lambda}{q}\right)\|x^\star-\bx\|^2\\%\label{eq:lemmaoptcondexp3}
&\geq&\left(\frac{p}{2q\tau}+\frac{p\lambda}{q}\right)\mathbb{E}\|\hx-x^\star\|^2
+\frac{p}{2q\tau}\mathbb{E}\|\hx-\bx\|^2+\frac{1}{n}\mathbb{E}\left\langle v-A^Ty^\star,\bx+\frac{p}{q}(\hx-\bx)-x^\star\right\rangle,
\end{eqnarray*}
\normalsize
where the expectation $\mathbb{E}$ is taken over $J$.
\end{lemma}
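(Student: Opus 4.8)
The plan is to analyze the prox subproblem one coordinate at a time and then average over the random choice of $J$. First I would introduce the \emph{deterministic full update} $\tx\in\mathbb{R}^p$ defined by $\tx_j=\argmin_{\alpha}\{\frac{1}{n}v_j\alpha+g_j(\alpha)+\frac{1}{2\tau}(\alpha-\bx_j)^2\}$ for \emph{every} $j$, so that $\hx_j=\tx_j$ when $j\in J$ and $\hx_j=\bx_j$ otherwise. Because $g_j$ is $\lambda$-strongly convex, the subproblem objective is $(\lambda+\frac{1}{\tau})$-strongly convex and $\tx_j$ is its minimizer; comparing its value at an arbitrary $x_j$ against its value at $\tx_j$ yields the per-coordinate inequality
\begin{eqnarray*}
\frac{1}{2\tau}(x_j-\bx_j)^2 &\geq& \frac{1}{n}v_j(\tx_j-x_j)+g_j(\tx_j)-g_j(x_j)+\frac{1}{2\tau}(\tx_j-\bx_j)^2+\frac{1}{2}\left(\lambda+\frac{1}{\tau}\right)(x_j-\tx_j)^2,
\end{eqnarray*}
valid for all $j$ and all $x_j$.

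The second step is to take the expectation over the uniformly random $J$, using that each coordinate lies in $J$ with probability $q/p$. This splits each squared norm and each function difference in the target inequality into a $\frac{q}{p}$-weighted combination of its $\tx$-version and its $\bx$-version; for instance $\mathbb{E}\|\hx-\bx\|^2=\frac{q}{p}\sum_j(\tx_j-\bx_j)^2$. The crucial observation is that the scaling factor $\frac{p}{q}$ multiplying $\hx-\bx$ is chosen precisely so that $\mathbb{E}[\bx+\frac{p}{q}(\hx-\bx)]=\tx$, which collapses the inner-product term $\frac{1}{n}\mathbb{E}\langle v,\bx+\frac{p}{q}(\hx-\bx)-x\rangle$ into $\frac{1}{n}\langle v,\tx-x\rangle$. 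After carrying out these expectations, the right-hand side of the first inequality decomposes into $\sum_j(\text{per-coordinate RHS})$ plus a leftover $\frac{p-q}{q}$-weighted block in $\bx$; bounding the former by $\frac{1}{2\tau}(x_j-\bx_j)^2$ via the per-coordinate inequality and merging coefficients ($\frac{1}{2\tau}+\frac{p-q}{2q\tau}=\frac{p}{2q\tau}$) reproduces exactly the claimed left-hand side.

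For the second inequality I would specialize $x=x^\star$ and eliminate the explicit $g$ terms using the optimality condition of the saddle point, $g_j'(x_j^\star)=-\frac{1}{n}(A^Ty^\star)_j$. Applying $\lambda$-strong convexity of $g_j$ at $x_j^\star$ gives $g_j(\tx_j)-g_j(x_j^\star)\geq-\frac{1}{n}(A^Ty^\star)_j(\tx_j-x_j^\star)+\frac{\lambda}{2}(\tx_j-x_j^\star)^2$; substituting this into the per-coordinate inequality folds the gradient of $g$ into the $v-A^Ty^\star$ inner product and contributes the extra $\frac{\lambda}{2}$ that upgrades the coefficient from $\frac{p\lambda}{2q}$ to $\frac{p\lambda}{q}$. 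The remaining averaging and coefficient-merging steps are identical to those of the first part.

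I expect the main obstacle to be bookkeeping rather than conceptual: tracking the $\frac{q}{p}$ and $\frac{p-q}{q}$ weights through every term and verifying that the quadratic coefficients collapse correctly once the per-coordinate bound is invoked. The one genuinely delicate point is recognizing that the specific extrapolation weight $\frac{p}{q}$ is what forces the expected linear term to equal the deterministic full update $\tx$; without this exact matching the inner-product term would not align, and the telescoping across iterations used in Theorem~\ref{thm:conv_strong_astrsym_switch} would break down.
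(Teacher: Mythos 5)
Your proposal is correct and follows essentially the same route as the paper's proof: introduce the deterministic full prox update $\tx$, derive the per-coordinate strong-convexity inequality at its minimizer, convert every $\tx_j$-term into expectations over $\hx_j$ via the probability-$q/p$ identities (in particular $\bx_j+\frac{p}{q}(\mathbb{E}\hx_j-\bx_j)=\tx_j$), and for the second inequality add the saddle-point optimality inequality for $x_j^\star$ to absorb the $g$ terms into the $v-A^Ty^\star$ inner product and upgrade the coefficient to $\frac{p\lambda}{q}$. The coefficient bookkeeping you outline checks out against the paper's derivation.
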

\begin{proof}
We prove the first conclusion first. Let $\tx$ defined as
%\small
\begin{eqnarray*}
\label{eq:proxtx}
\tx=\argmin_{x\in\mathbb{R}^n}\bigg\{\frac{1}{n}v^T x+g(x)+\frac{1}{2\tau}\|x-\bx\|^2\bigg\}.
\end{eqnarray*}
\normalsize
Therefore, according to \eqref{eq:x_subproblem}, $\hx_j=\tx_j$ if $j\in J$ and $\hx_j=\bx_j$ if $j\notin J$.
Due to the decomposable structure~\eqref{eq:decomp_g} of $g(x)$, each coordinate $\tx_j$ of $\tx$ can be solved independently. Since $g_j$ is $\lambda$-strongly convex, the optimality of $\tx_j$ implies that, for any $x_j\in\mathbb{R}$,
%\small
\begin{eqnarray}
\label{eq:lemmaoptcond}
\frac{v_j x_j}{n}+g_j(x_j)+\frac{(x_j-\bx_j)^2}{2\tau}
\geq\frac{v_j\tx_j}{n}+g_j(\tx_j)+\frac{(\tx_j-\bx_j)^2}{2\tau}+\left(\frac{1}{2\tau}+\frac{\lambda}{2}\right)(\tx_j-x_j)^2.
\end{eqnarray}
\normalsize
Since each index $j$ is contained in $J$ with a probability of $\frac{q}{p}$, we have the following equalities
%\small
\begin{eqnarray}
%\textstyle
\label{eq:exp1}
\mathbb{E}(\hx_j-x_j)^2&=&\frac{q}{p}(\tx_j-x_j)^2+\frac{p-q}{p}(\bx_j-x_j)^2,\\
\label{eq:exp2}
\mathbb{E}(\hx_j-\bx_j)^2&=&\frac{q}{p}(\tx_j-\bx_j)^2,\\
\label{eq:exp3}
\mathbb{E}\hx_j&=&\frac{q}{p}\tx_j+\frac{p-q}{p}\bx_j,\\
\label{eq:exp4}
\mathbb{E}g_j(\hx_j)&=&\frac{q}{p}g_j(\tx_j)+\frac{p-q}{p}g_j(\bx_j).
\end{eqnarray}
\normalsize
Using these equalities, we can represent all the terms in \eqref{eq:lemmaoptcond} involving $\tx_j$ by the terms that only contains $\hx_j$, $\bx_j$ and $x_j$. By doing so and organizing terms, we obtain
%\small
\begin{eqnarray*}
&&\left(\frac{p}{2q\tau}+\frac{(p-q)\lambda}{2q}\right)(x_j-\bx_j)^2+\frac{p-q}{q}\left(g_j(\bx_j)-g_j(x_j)\right)- \frac{1}{n}v_j\left(\bx_j+\frac{p}{q}(\mathbb{E}\hx_j-\bx_j)-x_j\right)\\
&\geq&\left(\frac{p}{2q\tau}+\frac{p\lambda}{2q}\right)\mathbb{E}(\hx_j-x_j)^2
+\frac{p}{2q\tau}\mathbb{E}(\hx_j-\bx_j)^2+\frac{p}{q}\mathbb{E}\left(g_j(\hx_j)-g_j(x_j)\right),
\end{eqnarray*}
\normalsize
for any $x_j\in\mathbb{R}$. Then, the first conclusion of Lemma \ref{thm:ZhangXiaolemma} is obtained by summing up the inequality above over the indices $j=1,\dots,p$.

In the next, we prove the second conclusion of Lemma \ref{thm:ZhangXiaolemma}. Choosing $x_i=x_i^\star$ in \eqref{eq:lemmaoptcond}, we obtain
%\small
\begin{eqnarray}
\label{eq:lemma11}
\frac{v_j x_j^\star}{n}+g_j(x_j^\star)+\frac{(x_j^\star-\bx_j)^2}{2\tau}\geq\frac{v_j\tx_j}{n}+g_j(\tx_j)+\frac{(\tx_j-\bx_j)^2}{2\tau}+\left(\frac{1}{2\tau}+\frac{\lambda}{2}\right)(\tx_j-x_j^\star)^2.
\end{eqnarray}
\normalsize
According to the property \eqref{eq:def_sp} of a saddle point, the primal optimal solution $x^\star$ satisfies
%\begin{eqnarray*}
%\label{eq:proxtx}
$x^\star=\argmin_{x\in\mathbb{R}^n}\bigg\{\frac{1}{n}(y^{\star})^T Ax+g(x)\bigg\}$.
%\end{eqnarray*}
Due to the decomposable structure~\eqref{eq:decomp_g} of $g(x)$, each coordinate $x_j^\star$ of $x^\star$ can be solved independently. Since $g_j$ is $\lambda$-strongly convex, the optimality of $x_j^\star$ implies
%$$
%x_j^\star=\argmin_{\alpha\in\mathbb{R}}\left\{\frac{1}{n}\langle A^j,y^\star\rangle x_j+g_j(x_j)\right\}
%$$
%\small
\begin{eqnarray}
\label{eq:lemma12}
\frac{1}{n}\langle A^j,y^\star\rangle \tx_j+g_j(\tx_j)&\geq&\frac{1}{n}\langle A^j,y^\star\rangle x_j^\star+g_j(x_j^\star)+\frac{\lambda}{2}(\tx_j-x_j^\star)^2.
\end{eqnarray}
\normalsize
Summing up~\eqref{eq:lemma11} and~\eqref{eq:lemma12} gives us
%\small
\begin{eqnarray}
\label{eq:lemmaoptcond_star}
\frac{1}{2\tau}(x_j^\star-\bx_j)^2\geq\left(\frac{1}{2\tau}+\lambda\right)(\tx_j -x_j^\star)^2+\frac{1}{2\tau}(\tx_j -\bx_j)^2+\frac{1}{n}(v_j-\langle A^j,y^\star\rangle)(\tx_j-x_j^\star).
\end{eqnarray}
\normalsize
By equalities \eqref{eq:exp1}, \eqref{eq:exp2} and \eqref{eq:exp3}, we can represent all the terms in \eqref{eq:lemmaoptcond_star} that involve $\tx_j$ by the terms that only contain $\hx_j$, $\bx_j$ and $x_j$. Then, we obtain
%\small
\begin{eqnarray*}
&&\left(\frac{p}{2q\tau}+\frac{(p-q)\lambda}{q}\right)(x_j^\star-\bx_j)^2-\left(\frac{p}{2q\tau}+\frac{p\lambda}{q}\right)\mathbb{E}(\hx_j-x_j^\star)^2\\
&\geq&\frac{p}{2q\tau}\mathbb{E}(\hx_j-\bx_j)^2
+\frac{1}{n}(v_j-\langle A^j,y^\star\rangle)\left(\bx_j+\frac{p}{q}(\mathbb{E}\hx_j-\bx_j)-x_j^\star\right).
\end{eqnarray*}
\normalsize
Then, the second conclusion is obtained by summing up the inequality above over the indices $j=1,\dots,p$.
%\qed
\end{proof}

\begin{lemma}
\label{thm:ZhangXiaolemma2}
Given any $v\in\mathbb{R}^n$ and $\by\in\mathbb{R}^n$, if we uniformly and randomly choose a set of indices $I\subset\{1,2,\dots,n\}$ with $|I|=m$ and solve an $\hy\in\mathbb{R}^n$ with
%\small
\begin{eqnarray}
\label{eq:y_subproblem}
\hy_i=\left\{\begin{array}{ll}
\argmax_{\beta\in\mathbb{R}}\left\{\frac{1}{n}u_i\beta-\frac{\phi^*_i(\beta)}{n}-\frac{1}{2\sigma}(\beta-\by_i)^2\right\}&\text{if }i\in I\\
\by_i&\text{if }i\notin I,
\end{array}
\right.
\end{eqnarray}
\normalsize
then, any  $y\in\mathbb{R}^n$, we have
%\small
\begin{eqnarray*}
\nonumber
&&\left(\frac{n}{2m\sigma}+\frac{(n-m)\gamma}{2mn}\right)\|y-\by\|^2+\frac{n-m}{mn}\sum_{i=1}^n\left(\phi_i^*(y_i^{(t)})
-\phi_i^*(y_i)\right)
+\frac{1}{n}\mathbb{E}\left\langle u,\by+\frac{n}{m}(\hy-\by)-y\right\rangle
\\%\label{eq:lemmaontcondeyn2}
&\geq&\left(\frac{n}{2m\sigma}+\frac{\gamma}{2m}\right)\mathbb{E}\|\hy-y\|^2
+\frac{n}{2m\sigma}\mathbb{E}\|\hy-\by\|^2
+\frac{1}{m}\sum_{i=1}^n\mathbb{E}\left(\phi_i^*(y_i^{(t+1)})
-\phi_i^*(y_i)\right)
\end{eqnarray*}
\normalsize
and
%\small
\begin{eqnarray*}
\nonumber
&&\left(\frac{n}{2m\sigma}+\frac{(n-m)\gamma}{mn}\right)\|y^\star-\by\|^2\\%\label{eq:lemmaontcondeyn4}
&\geq&\left(\frac{n}{2m\sigma}+\frac{\gamma}{m}\right)\mathbb{E}\|\hy-y^\star\|^2
+\frac{n}{2m\sigma}\mathbb{E}\|\hy-\by\|^2-\frac{1}{n}\mathbb{E}\left\langle u-Ax^\star,\by+\frac{n}{m}(\hy-\by)-y^\star\right\rangle.
\end{eqnarray*}
\normalsize
where the expectation $\mathbb{E}$ is taken over $I$.
\end{lemma}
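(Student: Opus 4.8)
The plan is to mirror the proof of Lemma~\ref{thm:ZhangXiaolemma}, replacing the minimization of the strongly convex $g_j$ by the maximization of the strongly concave dual objective. First I would introduce the full-dimensional prox-maximizer
\[
\ty=\argmax_{y\in\mathbb{R}^n}\Big\{\frac{1}{n}u^Ty-\frac{1}{n}\sum_{i=1}^n\phi_i^*(y_i)-\frac{1}{2\sigma}\|y-\by\|^2\Big\},
\]
which decouples across coordinates because both $\sum_i\phi_i^*(y_i)$ and the quadratic are separable; by the construction in~\eqref{eq:y_subproblem} we have $\hy_i=\ty_i$ for $i\in I$ and $\hy_i=\by_i$ for $i\notin I$. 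Since $\phi_i$ is $(1/\gamma)$-smooth, its conjugate $\phi_i^*$ is $\gamma$-strongly convex, so each coordinate objective $\beta\mapsto\frac{1}{n}u_i\beta-\frac{1}{n}\phi_i^*(\beta)-\frac{1}{2\sigma}(\beta-\by_i)^2$ is $(\frac{1}{\sigma}+\frac{\gamma}{n})$-strongly concave. The maximality of $\ty_i$ then yields, for every $y_i\in\mathbb{R}$,
\[
\frac{u_iy_i}{n}-\frac{\phi_i^*(y_i)}{n}-\frac{(y_i-\by_i)^2}{2\sigma}+\Big(\frac{1}{2\sigma}+\frac{\gamma}{2n}\Big)(\ty_i-y_i)^2\le\frac{u_i\ty_i}{n}-\frac{\phi_i^*(\ty_i)}{n}-\frac{(\ty_i-\by_i)^2}{2\sigma},
\]
which is the exact dual analogue of~\eqref{eq:lemmaoptcond}, with the inequality direction flipped because we maximize.

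Next I would record the four expectation identities over the random set $I$, using that each index lies in $I$ with probability $m/n$: namely $\mathbb{E}(\hy_i-y_i)^2=\frac{m}{n}(\ty_i-y_i)^2+\frac{n-m}{n}(\by_i-y_i)^2$, $\mathbb{E}(\hy_i-\by_i)^2=\frac{m}{n}(\ty_i-\by_i)^2$, $\mathbb{E}\hy_i=\frac{m}{n}\ty_i+\frac{n-m}{n}\by_i$, and $\mathbb{E}\phi_i^*(\hy_i)=\frac{m}{n}\phi_i^*(\ty_i)+\frac{n-m}{n}\phi_i^*(\by_i)$. Substituting these into the displayed coordinate inequality to eliminate every occurrence of $\ty_i$ in favour of $\hy_i,\by_i,y_i$ (note $\by_i+\frac{n}{m}(\mathbb{E}\hy_i-\by_i)=\ty_i$, which is what produces the combination $\by+\frac{n}{m}(\hy-\by)$ in the linear term), multiplying through by the factor that turns $\frac{1}{\sigma}$ into $\frac{n}{m\sigma}$ and $\frac{\gamma}{n}$ into $\frac{\gamma}{m}$, and finally summing over $i=1,\dots,n$ produces the first conclusion. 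The $1/n$ scaling of $\phi_i^*$ in the objective is precisely what converts the primal factor $\frac{p-q}{q}$ into $\frac{n-m}{mn}$ on the conjugate term.

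For the second conclusion I would set $y_i=y_i^\star$ in the coordinate optimality inequality and combine it with the optimality of $y^\star$. The saddle-point characterization~\eqref{eq:def_sp} gives $y^\star=\argmax_y\{\frac{1}{n}(Ax^\star)^Ty-\frac{1}{n}\sum_i\phi_i^*(y_i)\}$, whose coordinatewise first-order condition, again using $\gamma$-strong convexity of $\phi_i^*$, reads
\[
\frac{1}{n}\langle A_i,x^\star\rangle\ty_i-\frac{\phi_i^*(\ty_i)}{n}\le\frac{1}{n}\langle A_i,x^\star\rangle y_i^\star-\frac{\phi_i^*(y_i^\star)}{n}-\frac{\gamma}{2n}(\ty_i-y_i^\star)^2.
\]
Adding this to the $y_i=y_i^\star$ instance of the previous inequality cancels the $\phi_i^*$ terms (just as~\eqref{eq:lemma11} and~\eqref{eq:lemma12} were summed in Lemma~\ref{thm:ZhangXiaolemma}) and, after collecting the two $\frac{\gamma}{2n}$ contributions into a single $\frac{\gamma}{n}$, leaves
\[
\frac{(y_i^\star-\by_i)^2}{2\sigma}\ge\Big(\frac{1}{2\sigma}+\frac{\gamma}{n}\Big)(\ty_i-y_i^\star)^2+\frac{(\ty_i-\by_i)^2}{2\sigma}-\frac{1}{n}\big(u_i-\langle A_i,x^\star\rangle\big)(\ty_i-y_i^\star).
\]
Replacing the $\ty_i$ terms by their expectations as above and summing over $i$ gives the second inequality, including the $-\frac{1}{n}\mathbb{E}\langle u-Ax^\star,\,\by+\frac{n}{m}(\hy-\by)-y^\star\rangle$ linear term.

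The main obstacle I anticipate is purely bookkeeping rather than conceptual: keeping every sign straight under the switch from minimization to maximization, which flips each optimality inequality and the sign of the inner-product term (explaining why the dual bound carries $-\frac{1}{n}\mathbb{E}\langle u-Ax^\star,\cdot\rangle$ whereas the primal bound in Lemma~\ref{thm:ZhangXiaolemma} carries $+\frac{1}{n}\mathbb{E}\langle v-A^Ty^\star,\cdot\rangle$), and correctly propagating the extra $1/n$ scaling of $\phi_i^*$ through all coefficients so that the strong-concavity constant $\frac{\gamma}{n}$ lands as $\frac{\gamma}{m}$ (and $\frac{\gamma}{2m}$) after the $\frac{n}{m}$ rescaling. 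No new analytic idea beyond Lemma~\ref{thm:ZhangXiaolemma} is needed.
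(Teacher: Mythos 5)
Your proposal is correct and is exactly the argument the paper intends: the paper omits this proof as "very similar" to that of Lemma~\ref{thm:ZhangXiaolemma}, and your coordinate-wise strong-concavity inequality for the prox-maximizer $\ty$, the four expectation identities with sampling probability $m/n$, and the combination with the saddle-point optimality of $y^\star$ for the second conclusion reproduce that analogous derivation faithfully, with all coefficients checking out.
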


\begin{proof}
The proof is very similar to that of Lemma~\ref{thm:ZhangXiaolemma}, and thus, is omitted.
%\qed
\end{proof}

\subsection{Convergence in distance to the optimal solution}
%During Algorithm \ref{alg:aspdc_general}, random sets of indices $I$ and $J$ are sampled in each iteration which uniquely determine the path of the solutions. We denote $I^{(t)}$ and $J^{(t)}$ the sets of indices generated at iteration $t$ for $t=0,1,\dots$, and use $\mathbb{E}_{t}$ to represent the expectation conditioning on $I^{(0)},J^{(0)},I^{(1)},J^{(1)},\dots,I^{(t-1)},J^{(t-1)}$, and use
%$\mathbb{E}_{t+}$ to represent the expectation conditioning on $I^{(0)},J^{(0)},I^{(1)},J^{(1)},\dots,I^{(t-1)},J^{(t-1)},I^{(t)}$.

We use $\mathbb{E}_{t}$ to represent the expectation conditioned on $y^{(0)},x^{(0)},\dots,y^{(t)},x^{(t)}$, and
$\mathbb{E}_{t+}$ the expectation conditioned on $y^{(0)},x^{(0)},\dots,y^{(t)},x^{(t)},y^{(t+1)}$.
Lemma~\ref{thm:ZhangXiaolemma} and Lemma~\ref{thm:ZhangXiaolemma2} in the previous section provide the basis for the following proposition, which is the key to prove Theorem~\ref{thm:conv_strong_astrsym_switch}.
\begin{proposition}
\label{thm:mainprop_strong_asym}
Let $x^{(t)}$, $x^{(t+1)}$, $y^{(t)}$ and $y^{(t+1)}$ generated as in Algorithm~\ref{alg:aspdc_general} for $t=0,1,\dots$ with the parameters $\tau$ and $\sigma$ satisfying $\tau\sigma=\frac{nmq}{4p\Lambda}$. We have
%\small
\begin{eqnarray}
\nonumber
&&\left(\frac{p}{2q\tau}+\frac{(p-q)\lambda}{q}\right)\|x^\star-x^{(t)}\|^2
+\left(\frac{n}{2m\sigma}+\frac{(n-m)\gamma}{mn}\right)\|y^\star-y^{(t)}\|^2\\\nonumber
&&+\frac{\theta}{n}\left\langle  A(x^{(t)}-x^{(t-1)}),y^{(t)}-y^\star\right\rangle
+\frac{\theta\|x^{(t)}-x^{(t-1)}\|^2}{4\tau }\\\nonumber
&\geq&\left(\frac{p}{2q\tau}+\frac{p\lambda}{q}\right)\mathbb{E}_t\|x^{(t+1)}-x^\star\|^2+\left(\frac{n}{2m\sigma}
+\frac{\gamma}{m}\right)\mathbb{E}_t\|y^{(t+1)}-y^\star\|^2\\\nonumber
&&+\left(\frac{p}{2q\tau}-\frac{(n-m) p }{4n\tau q}\right)\mathbb{E}_t\|x^{(t+1)}-x^{(t)}\|^2
+\left(\frac{n}{2m\sigma}-\frac{\theta nq}{4\sigma mp}-\frac{n-m}{4\sigma m}\right)\mathbb{E}_t\|y^{(t+1)}-y^{(t)}\|^2\\\label{thm:prop1}
&&+\frac{p}{nq}\mathbb{E}_t\left\langle A^T(y^{(t+1)}-y^\star), x^{(t+1)}-x^{(t)}\right\rangle.
\end{eqnarray}
\normalsize
\end{proposition}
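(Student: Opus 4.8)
The plan is to apply the \emph{second} inequalities of Lemma~\ref{thm:ZhangXiaolemma} and Lemma~\ref{thm:ZhangXiaolemma2} to the primal and dual updates respectively, and then reconcile the resulting bilinear inner-product terms. For the dual step I would invoke Lemma~\ref{thm:ZhangXiaolemma2} with $\by=y^{(t)}$, $\hy=y^{(t+1)}$ and $u=A\bx^{(t)}$, taking the expectation over $I$ (which is $\mathbb{E}_t$); after recognizing $\by^{(t+1)}=y^{(t)}+\frac{n}{m}(y^{(t+1)}-y^{(t)})$ this reproduces the $\|y^\star-y^{(t)}\|^2$, $\mathbb{E}_t\|y^{(t+1)}-y^\star\|^2$ and $\frac{n}{2m\sigma}\mathbb{E}_t\|y^{(t+1)}-y^{(t)}\|^2$ terms together with the inner product $D:=-\frac{1}{n}\mathbb{E}_t\langle A(\bx^{(t)}-x^\star),\by^{(t+1)}-y^\star\rangle$. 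For the primal step I would invoke Lemma~\ref{thm:ZhangXiaolemma} with $\bx=x^{(t)}$, $\hx=x^{(t+1)}$ and $v=A^T\by^{(t+1)}$, taking the expectation over $J$ conditioned on $y^{(t+1)}$ (i.e.\ $\mathbb{E}_{t+}$) and then applying the tower property $\mathbb{E}_t=\mathbb{E}_t\mathbb{E}_{t+}$; this yields the primal distance terms, $\frac{p}{2q\tau}\mathbb{E}_t\|x^{(t+1)}-x^{(t)}\|^2$, and the inner product $P:=\frac{1}{n}\mathbb{E}_t\langle A^T(\by^{(t+1)}-y^\star),(x^{(t)}-x^\star)+\frac{p}{q}(x^{(t+1)}-x^{(t)})\rangle$. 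Summing the two inequalities already produces every non-coupling term in the proposition, so the entire difficulty is concentrated in $P+D$.

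The central manipulation is to substitute the momentum identities $\bx^{(t)}=x^{(t)}+\theta(x^{(t)}-x^{(t-1)})$ and $\by^{(t+1)}=y^{(t)}+\frac{n}{m}(y^{(t+1)}-y^{(t)})$ and to exploit one exact cancellation. Writing $A(\bx^{(t)}-x^\star)=A(x^{(t)}-x^\star)+\theta A(x^{(t)}-x^{(t-1)})$ in $D$, the piece $-\frac{1}{n}\mathbb{E}_t\langle A^T(\by^{(t+1)}-y^\star),x^{(t)}-x^\star\rangle$ cancels exactly against the corresponding piece of $P$, leaving $P+D=\frac{p}{nq}\mathbb{E}_t\langle A^T(\by^{(t+1)}-y^\star),x^{(t+1)}-x^{(t)}\rangle-\frac{\theta}{n}\mathbb{E}_t\langle A(x^{(t)}-x^{(t-1)}),\by^{(t+1)}-y^\star\rangle$. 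Expanding $\by^{(t+1)}-y^\star=(y^{(t+1)}-y^\star)+\frac{n-m}{m}(y^{(t+1)}-y^{(t)})$ in the first term isolates the target term $\frac{p}{nq}\mathbb{E}_t\langle A^T(y^{(t+1)}-y^\star),x^{(t+1)}-x^{(t)}\rangle$ plus a residual $L_2:=\frac{p(n-m)}{nmq}\mathbb{E}_t\langle A^T(y^{(t+1)}-y^{(t)}),x^{(t+1)}-x^{(t)}\rangle$; expanding $\by^{(t+1)}-y^\star=(y^{(t)}-y^\star)+\frac{n}{m}(y^{(t+1)}-y^{(t)})$ in the second term isolates the deterministic $-\frac{\theta}{n}\langle A(x^{(t)}-x^{(t-1)}),y^{(t)}-y^\star\rangle$, which I would move to the left-hand side, plus a residual $L_1:=-\frac{\theta}{m}\mathbb{E}_t\langle A(x^{(t)}-x^{(t-1)}),y^{(t+1)}-y^{(t)}\rangle$.

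It then remains only to absorb $L_1$ and $L_2$ into quadratic terms, and here I would use the crucial support structure: $y^{(t+1)}-y^{(t)}$ is supported on the current set $I$, $x^{(t+1)}-x^{(t)}$ on the current set $J$, and $x^{(t)}-x^{(t-1)}$ on the previous primal sample. The matrices effectively appearing in $L_1,L_2$ are therefore of the form $A_I^{J'}$ and $A_I^{J}$, whose spectral norms are bounded by $\sqrt{\Lambda_{q,m}}\leq\sqrt{\Lambda}$. Applying Young's inequality to $L_1$ with parameter $\alpha=\frac{m}{2\tau\Lambda}$ splits it into $\frac{\theta}{4\tau}\|x^{(t)}-x^{(t-1)}\|^2$ (matching the left-hand-side budget) and, after using $\tau\sigma=\frac{nmq}{4p\Lambda}$ in the form $\tau\Lambda=\frac{nmq}{4p\sigma}$, exactly $\frac{\theta nq}{4\sigma mp}\mathbb{E}_t\|y^{(t+1)}-y^{(t)}\|^2$; applying Young to $L_2$ with parameter $\beta=\frac{m}{2\tau}$ splits it into $\frac{(n-m)p}{4n\tau q}\mathbb{E}_t\|x^{(t+1)}-x^{(t)}\|^2$ and $\frac{n-m}{4\sigma m}\mathbb{E}_t\|y^{(t+1)}-y^{(t)}\|^2$. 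Subtracting these three pieces from the $\frac{p}{2q\tau}\mathbb{E}_t\|x^{(t+1)}-x^{(t)}\|^2$ and $\frac{n}{2m\sigma}\mathbb{E}_t\|y^{(t+1)}-y^{(t)}\|^2$ terms produces precisely the reduced coefficients in \eqref{thm:prop1}. The main obstacle is exactly this last bookkeeping stage: keeping the three distinct supports straight, and tuning the two Young parameters so that the spectral constant and the relation $\tau\sigma=\frac{nmq}{4p\Lambda}$ conspire to reproduce the stated coefficients \emph{exactly}, while correctly handling the nested expectation $\mathbb{E}_t=\mathbb{E}_t\mathbb{E}_{t+}$ that arises from combining the primal and dual lemmas.
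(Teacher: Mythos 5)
Your proposal is correct and follows essentially the same route as the paper's proof: apply the second conclusions of Lemmas~\ref{thm:ZhangXiaolemma} and~\ref{thm:ZhangXiaolemma2} with the tower property $\mathbb{E}_t=\mathbb{E}_t\mathbb{E}_{t+}$, sum, expand the momentum identities for $\bx^{(t)}$ and $\by^{(t+1)}$ so the $\langle A(x^{(t)}-x^\star),\by^{(t+1)}-y^\star\rangle$ pieces cancel, and absorb the two residual cross terms via Young's inequality using the sparsity of the increments, the bound $\|A_I^J\|_2^2\le\Lambda$, and $\tau\sigma=\frac{nmq}{4p\Lambda}$. The only (cosmetic) difference is that you cancel the matched inner products before expanding the remaining ones, whereas the paper expands everything first in \eqref{eq:prop12}--\eqref{eq:prop3_3}; the resulting residuals and Young parameters are identical.
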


\begin{proof}
Let $x^{(t)}$, $x^{(t+1)}$ and $\by^{(t+1)}$ generated as in Algorithm~\ref{alg:aspdc_general}. By the second conclusion of Lemma \ref{thm:ZhangXiaolemma} and the tower property $\mathbb{E}_t\mathbb{E}_{t+}=\mathbb{E}_t$, we have
%\small
\begin{eqnarray}
\nonumber
\left(\frac{p}{2q\tau}+\frac{(p-q)\lambda}{q}\right)\|x^\star-x^{(t)}\|^2%\\\nonumber
&\geq&\left(\frac{p}{2q\tau}+\frac{p\lambda}{q}\right)\mathbb{E}_t\|x^{(t+1)}-x^\star\|^2
+\frac{p}{2q\tau}\mathbb{E}_t\|x^{(t+1)}-x^{(t)}\|^2\\\label{eq:lemmaoptcondexp3}\nonumber
&&
+\frac{1}{n}\mathbb{E}_t\left\langle A^T(\by^{(t+1)}-y^\star),x^{(t)}+\frac{p}{q}(x^{(t+1)}-x^{(t)})-x^\star\right\rangle.
\end{eqnarray}
\normalsize
Similarly, let $y^{(t)}$, $y^{(t+1)}$ and $\bx^{(t)}$ generated as in Algorithm~\ref{alg:aspdc_general}. By the second conclusion of Lemma \ref{thm:ZhangXiaolemma2}, we have
%\small
\begin{eqnarray}
\nonumber
\left(\frac{n}{2m\sigma}+\frac{(n-m)\gamma}{mn}\right)\|y^\star-y^{(t)}\|^2%\\\nonumber
&\geq&\left(\frac{n}{2m\sigma}+\frac{\gamma}{m}\right)\mathbb{E}_t\|y^{(t+1)}-y^\star\|^2
+\frac{n}{2m\sigma}\mathbb{E}_t\|y^{(t+1)}-y^{(t)}\|^2\\\label{eq:lemmaontcondeyn4}\nonumber
&&
-\frac{1}{n}\mathbb{E}_t\left\langle A(\bx^{(t)}-x^\star),y^{(t)}+\frac{n}{m}(y^{(t+1)}-y^{(t)})-y^\star\right\rangle.
\end{eqnarray}
\normalsize
Summing up these two inequalities, we have
%the inequalities \eqref{eq:lemmaoptcondexp3} and \eqref{eq:lemmaontcondeyn4}, we have
%\small
\begin{eqnarray}
\nonumber
&&\left(\frac{p}{2q\tau}+\frac{(p-q)\lambda}{q}\right)\|x^\star-x^{(t)}\|^2+\left(\frac{n}{2m\sigma}+\frac{(n-m)\gamma}{mn}\right)\|y^\star-y^{(t)}\|^2\\\nonumber
&\geq&\left(\frac{p}{2q\tau}+\frac{p\lambda}{q}\right)\mathbb{E}_t\|x^{(t+1)}-x^\star\|^2+\left(\frac{n}{2m\sigma}
+\frac{\gamma}{m}\right)\mathbb{E}_t\|y^{(t+1)}-y^\star\|^2
+\frac{p}{2q\tau}\mathbb{E}_t\|x^{(t+1)}-x^{(t)}\|^2\\
&&+\frac{n}{2m\sigma}\mathbb{E}_t\|y^{(t+1)}-y^{(t)}\|^2
+\frac{1}{n}\mathbb{E}_t\left\langle A^T(\by^{(t+1)}-y^\star),x^{(t)}+\frac{p}{q}(x^{(t+1)}-x^{(t)})-x^\star\right\rangle\\\label{eq:prop11}
&&-\frac{1}{n}\mathbb{E}_t\left\langle A(\bx^{(t)}-x^\star),y^{(t)}+\frac{n}{m}(y^{(t+1)}-y^{(t)})-y^\star\right\rangle.\nonumber
\end{eqnarray}
\normalsize
By the definition of $\by^{(t+1)}$ in Algorithm~\ref{alg:aspdc_general}, we have
$\by^{(t+1)}-y^\star=y^{(t+1)}-y^\star+\frac{n-m}{m}(y^{(t+1)}-y^{(t)})$, which implies
%\small
\begin{eqnarray}
\nonumber
&&\frac{1}{n} \left\langle A^T(\by^{(t+1)}-y^\star),x^{(t)}+\frac{p}{q}(x^{(t+1)}-x^{(t)})-x^\star\right\rangle\\\nonumber
&=&\frac{1}{n} \left\langle A^T(y^{(t+1)}-y^\star+\frac{n-m}{m}(y^{(t+1)}-y^{(t)})),x^{(t)}+\frac{p}{q}(x^{(t+1)}-x^{(t)})-x^\star\right\rangle\\\label{eq:prop12}
&=&\frac{p}{nq} \left\langle A^T(y^{(t+1)}-y^\star),x^{(t+1)}-x^{(t)}\right\rangle
+\frac{1}{n}\left\langle A^T(y^{(t+1)}-y^\star),x^{(t)}-x^\star\right\rangle\\\nonumber
&&+\frac{(n-m) p}{nmq} \left\langle A^T(y^{(t+1)}-y^{(t)})),x^{(t+1)}-x^{(t)}\right\rangle
+\frac{n-m}{nm}\left\langle A^T(y^{(t+1)}-y^{(t)})),x^{(t)}-x^\star\right\rangle.
\end{eqnarray}
\normalsize
Similarly, by the definition of $\bx^{(t)}$ in Algorithm~\ref{alg:aspdc_general}, we have
$\bx^{(t)}-x^\star=x^{(t)}-x^\star+\theta(x^{(t)}-x^{(t-1)})$, which implies
%\small
\begin{eqnarray}
\nonumber
&&\frac{1}{n}\left\langle A(\bx^{(t)}-x^\star),y^{(t)}+\frac{n}{m}( y^{(t+1)}-y^{(t)})-y^\star\right\rangle\\\nonumber
&=&\frac{1}{n}\left\langle A(x^{(t)}-x^\star+\theta(x^{(t)}-x^{(t-1)})),y^{(t)}+\frac{n}{m}( y^{(t+1)}-y^{(t)})-y^\star\right\rangle\\\nonumber
&=&\frac{1}{m}\left\langle A(x^{(t)}-x^\star), y^{(t+1)}-y^{(t)}\right\rangle
+\frac{1}{n}\left\langle A(x^{(t)}-x^\star),y^{(t)}-y^\star\right\rangle\\\label{eq:prop13}
&&+\frac{\theta }{m}\left\langle A(x^{(t)}-x^{(t-1)}), y^{(t+1)}-y^{(t)}\right\rangle
+\frac{\theta}{n}\left\langle  A(x^{(t)}-x^{(t-1)}),y^{(t)}-y^\star\right\rangle.
\end{eqnarray}
\normalsize

According to~\eqref{eq:prop12} and~\eqref{eq:prop13}, the last two terms in the right hand side of \eqref{eq:prop11} within conditional expectation $\mathbb{E}_t$ can be represented as
%\small
\begin{eqnarray}
\nonumber
&&\frac{1}{n}\left\langle A^T(\by^{(t+1)}-y^\star),x^{(t)}+\frac{p}{q}( x^{(t+1)}-x^{(t)})-x^\star\right\rangle\\\nonumber
&&-\frac{1}{n}\left\langle A(\bx^{(t)}-x^\star),y^{(t)}+\frac{n}{m}( y^{(t+1)}-y^{(t)})-y^\star\right\rangle\\\nonumber
&=&\frac{p}{nq}\left\langle A^T(y^{(t+1)}-y^\star), x^{(t+1)}-x^{(t)}\right\rangle
+\frac{1}{n}\left\langle A^T(y^{(t+1)}-y^\star),x^{(t)}-x^\star\right\rangle\\\nonumber
&&+\frac{(n-m) p}{nmq} \left\langle A^T(y^{(t+1)}-y^{(t)})),x^{(t+1)}-x^{(t)}\right\rangle
+\frac{n-m}{nm}\left\langle A^T(y^{(t+1)}-y^{(t)})),x^{(t)}-x^\star\right\rangle\\\nonumber
&&-\frac{1}{m}\left\langle A(x^{(t)}-x^\star), y^{(t+1)}-y^{(t)}\right\rangle
-\frac{1}{n}\left\langle A(x^{(t)}-x^\star),y^{(t)}-y^\star\right\rangle\\\nonumber
&&-\frac{\theta }{m}\left\langle A(x^{(t)}-x^{(t-1)}), y^{(t+1)}-y^{(t)}\right\rangle
-\frac{\theta}{n}\left\langle  A(x^{(t)}-x^{(t-1)}),y^{(t)}-y^\star\right\rangle\\\nonumber
&=&\frac{p}{nq}\left\langle A^T(y^{(t+1)}-y^\star), x^{(t+1)}-x^{(t)}\right\rangle+\frac{(n-m) p}{nmq}\left\langle A^T(y^{(t+1)}-y^{(t)})), x^{(t+1)}-x^{(t)}\right\rangle\\
&&-\frac{\theta }{m}\left\langle A(x^{(t)}-x^{(t-1)}), y^{(t+1)}-y^{(t)}\right\rangle
-\frac{\theta}{n}\left\langle  A(x^{(t)}-x^{(t-1)}),y^{(t)}-y^\star\right\rangle.  \label{eq:prop3_3}
\end{eqnarray}
\normalsize
In the next, we establish some lower bounds for each of the four terms in \eqref{eq:prop3_3}.

Note that $x^{(t)}-x^{(t-1)}$ is a sparse vector with non-zero values only in the coordinates indexed by $J$. Hence, by Young's inequality, we have
%\small
\begin{eqnarray}
\nonumber
-\left\langle A(x^{(t)}-x^{(t-1)}), y^{(t+1)}-y^{(t)}\right\rangle
&\geq&
-\frac{\tau }{m}\|(A^J)^T(y^{(t+1)}-y^{(t)})\|^2-\frac{\|x^{(t)}-x^{(t-1)}\|^2}{4\tau/m}\\\nonumber
&\geq&-\frac{\tau \Lambda}{m}\|y^{(t+1)}-y^{(t)}\|^2-\frac{\|x^{(t)}-x^{(t-1)}\|^2}{4\tau/m}\\\label{eq:young3}
&=&-\frac{nq}{4\sigma p} \|y^{(t+1)}-y^{(t)}\|^2-\frac{\|x^{(t)}-x^{(t-1)}\|^2}{4\tau/m}.
\end{eqnarray}
\normalsize
Here, the second inequality is because $y^{(t+1)}-y^{(t)}$ has non-zero values only in the coordinates indexed by $I$ so that, by the definition \eqref{eq:Lambda} of $\Lambda$,
%\small
$$
\|(A^J)^T(y^{(t+1)}-y^{(t)})\|^2=\|(A_I^J)^T(y_I^{(t+1)}-y_I^{(t)})\|^2
\leq \Lambda\|y_I^{(t+1)}-y_I^{(t)}\|^2= \Lambda\|y^{(t+1)}-y^{(t)}\|^2,
$$
\normalsize
and the last equality is because $\tau\sigma=\frac{nmq}{4p\Lambda}$.

A similar argument implies
%\small
\begin{eqnarray}
\label{eq:young4}
\left\langle A^T(y^{(t+1)}-y^{(t)}), x^{(t+1)}-x^{(t)}\right\rangle
&\geq&-\frac{nq}{4\sigma p} \|y^{(t+1)}-y^{(t)}\|^2-\frac{\|x^{(t+1)}-x^{(t)}\|^2}{4\tau/m}.
\end{eqnarray}
\normalsize
Applying \eqref{eq:young3} and \eqref{eq:young4} to the right hand side of \eqref{eq:prop3_3}, we have
%\small
\begin{eqnarray}
\nonumber
&&\frac{1}{n}\left\langle A^T(\by^{(t+1)}-y^\star),x^{(t)}+\frac{p}{q}( x^{(t+1)}-x^{(t)})-x^\star\right\rangle\\\nonumber
&&-\frac{1}{n}\left\langle A(\bx^{(t)}-x^\star),y^{(t)}+\frac{n}{m}( y^{(t+1)}-y^{(t)})-y^\star\right\rangle\\\label{eq:prop3_4}
&\geq&\frac{p}{nq}\left\langle A^T(y^{(t+1)}-y^\star), x^{(t+1)}-x^{(t)}\right\rangle
-\frac{\theta}{n}\left\langle  A(x^{(t)}-x^{(t-1)}),y^{(t)}-y^\star\right\rangle\\\nonumber
&&-\left(\frac{\theta n q}{4\sigma mp}+\frac{n-m}{4\sigma m}\right)\|y^{(t+1)}-y^{(t)}\|^2-\frac{\theta\|x^{(t)}-x^{(t-1)}\|^2}{4\tau }-\frac{(n-m) p \|x^{(t+1)}-x^{(t)}\|^2}{4\tau n q}
\end{eqnarray}
\normalsize
The conclusion~\eqref{thm:prop1} is obtained by combining \eqref{eq:prop11} and the conditional expectation of \eqref{eq:prop3_4}.
%\qed
\end{proof}

Based on Proposition~\ref{thm:mainprop_strong_asym}, we can prove Theorem~\ref{thm:conv_strong_astrsym_switch}.
\begin{proof}[\textbf{Theorem~\ref{thm:conv_strong_astrsym_switch}}]
We first show how to derive the forms of $\tau$ and $\sigma$ in \eqref{eq:opttau2} and \eqref{eq:optsigma2} from the last two equations in~\eqref{eq:thetatausigma}. Let
%\small
%\begin{eqnarray*}
$Q_1=\frac{p}{2q\lambda\tau}$ and $Q_2=\frac{n^2}{2m\gamma\sigma}$.
%\end{eqnarray*}
%\normalsize
The last two equations in~\eqref{eq:thetatausigma} imply
%\small
\begin{eqnarray*}
Q_1Q_2=\frac{pn^2}{4qm\lambda\gamma\tau\sigma}=\frac{(np)^2\Lambda}{(mq)^2n\lambda\gamma},
\quad Q_1+\frac{p}{q}=Q_2+\frac{n}{m}.
\end{eqnarray*}
\normalsize
Solving the values of $Q_1$ and $Q_2$ from these equations, we obtain
%\small
\begin{eqnarray}
%\textstyle
\label{eq:T}
\frac{p}{2q\lambda\tau}&=Q_1=&\textstyle\frac{1}{2}\left(\frac{n}{m}-\frac{p}{q}\right)+\frac{1}{2}\sqrt{\left(\frac{n}{m}-\frac{p}{q}\right)^2+\frac{4(np)^2\Lambda}{(mq)^2n\lambda\gamma}},\\
%\textstyle
\label{eq:S}
\frac{n^2}{2m\gamma\sigma}&=Q_2=&\textstyle\frac{1}{2}\left(\frac{p}{q}-\frac{n}{m}\right)+\frac{1}{2}\sqrt{\left(\frac{n}{m}-\frac{p}{q}\right)^2+\frac{4(np)^2\Lambda}{(mq)^2n\lambda\gamma}},
\end{eqnarray}
%\normalsize
from which \eqref{eq:opttau2} and \eqref{eq:optsigma2} can be derived.

%~\eqref{thm:prop1}
%After obtaining~\eqref{thm:prop1} from Proposition~\ref{thm:mainprop_strong_asym},
To derive the main conclusion of Theorem~\ref{thm:conv_strong_astrsym_switch} from Proposition~\ref{thm:mainprop_strong_asym}, we want to show that the following inequalities are satisfied by the choices for $\theta$, $\tau$ and $\sigma$ in~\eqref{eq:thetatausigma}.
%\small
\begin{eqnarray}
\label{eq:cond5}
\left(\frac{p}{2q\tau}+\frac{p\lambda}{q}\right)\frac{\theta q}{p}&\geq&\left(\frac{p}{2q\tau}+\frac{(p-q)\lambda}{q}\right),\\
\label{eq:cond6}
\left(\frac{n}{2m\sigma}+\frac{\gamma}{m}\right)\frac{\theta q}{p}&\geq&\left(\frac{n}{2m\sigma}+\frac{(n-m)\gamma}{mn}\right),\\
\label{eq:cond7}
\frac{n}{2m\sigma}-\frac{\theta nq}{4\sigma mp}-\frac{n-m}{4\sigma m}&\geq&0,\\
\label{eq:cond8}
\left(\frac{p}{2q\tau}-\frac{(n-m) p }{4n\tau q}\right)\frac{\theta q}{p}&\geq&\frac{\theta}{4\tau}.
\end{eqnarray}
\normalsize

%To show \eqref{eq:cond5}, we use the fact that $\sqrt{a^2+b^2}\leq a+b$ and \eqref{eq:T} to show that
%\footnote{Here, we use the simple fact that $\sqrt{a^2+b^2}\leq a+b$.}

In fact, \eqref{eq:cond5} holds since~\eqref{eq:T} implies\footnote{Here and when we show \eqref{eq:cond6}, we use the simple fact that $\sqrt{a^2+b^2}\leq a+b$ when $a\geq0$ and $b\geq0$.}
%\small
\begin{eqnarray*}
%\textstyle
\frac{p}{2q\lambda\tau}+\frac{p}{q}
\leq\frac{1}{2}\left(\frac{n}{m}+\frac{p}{q}\right)+\frac{1}{2}\left|\frac{n}{m}-\frac{p}{q}\right|+\frac{np\sqrt{\Lambda}}{mq\sqrt{n\lambda\gamma}}
=\max\left\{\frac{p}{q},\frac{n}{m}\right\}+\frac{np\sqrt{\Lambda}}{mq\sqrt{n\lambda\gamma}}
\end{eqnarray*}
%\normalsize
so that
%\small
\begin{eqnarray*}
%\textstyle
\left(\frac{p}{2q\tau}+\frac{(p-q)\lambda}{q}\right)/\left(\frac{p}{2q\tau}+\frac{p\lambda}{q}\right)=
1-\frac{1}{\frac{p}{2q\lambda\tau}+\frac{p}{q}}
\leq1-\frac{1}{\max\left\{\frac{p}{q},\frac{n}{m}\right\}+\frac{np\sqrt{\Lambda}}{mq\sqrt{n\lambda\gamma}}} =\frac{\theta q}{p}.
\end{eqnarray*}
%\normalsize
Similarly, \eqref{eq:cond6} holds since~\eqref{eq:S} implies
%\small
\begin{eqnarray*}
%\textstyle
\frac{n^2}{2m\gamma\sigma}+\frac{n}{m}
\leq\frac{1}{2}\left(\frac{n}{m}+\frac{p}{q}\right)+\frac{1}{2}\left|\frac{n}{m}-\frac{p}{q}\right|+\frac{np\sqrt{\Lambda}}{mq\sqrt{n\lambda\gamma}}
=\max\left\{\frac{p}{q},\frac{n}{m}\right\}+\frac{np\sqrt{\Lambda}}{mq\sqrt{n\lambda\gamma}}
\end{eqnarray*}
%\normalsize
so that
%\small
\begin{eqnarray*}
%\textstyle
\left(\frac{n}{2m\sigma}+\frac{(n-m)\gamma}{mn}\right)/\left(\frac{n}{2m\sigma}+\frac{\gamma}{m}\right)
=1-\frac{1}{\frac{n^2}{2m\gamma\sigma}+\frac{n}{m}}
\leq1-\frac{1}{\max\left\{\frac{p}{q},\frac{n}{m}\right\}+\frac{np\sqrt{\Lambda}}{mq\sqrt{n\lambda\gamma}}} =\frac{\theta q}{p}.
\end{eqnarray*}
%\normalsize
The inequality \eqref{eq:cond7} holds because
%\small
%\begin{eqnarray*}
$\frac{n}{2m\sigma}-\frac{\theta nq}{4\sigma mp}-\frac{n-m}{4\sigma m}
\geq\frac{n}{2m\sigma}-\frac{ n}{4\sigma m}-\frac{n}{4\sigma m}=0$,
%\end{eqnarray*}
%\normalsize
where we use the fact that $\frac{\theta q}{p}\leq1$.
The inequality \eqref{eq:cond8} holds because
%\small
%\begin{eqnarray*}
$\left(\frac{p}{2q\tau}-\frac{(n-m) p }{4n\tau q}\right)\frac{\theta q}{p}
=\theta\left(\frac{1}{2\tau}-\frac{(n-m) }{4n\tau}\right)
\geq\theta\left(\frac{1}{2\tau}-\frac{1}{4\tau}\right)=\frac{\theta}{4\tau}$.
%\end{eqnarray*}
%\normalsize

Applying the four inequalities \eqref{eq:cond5}, \eqref{eq:cond6}, \eqref{eq:cond7} and \eqref{eq:cond8} to the coefficients of \eqref{thm:prop1} from Proposition~\ref{thm:mainprop_strong_asym} leads to
$\mathbb{E}_t\Delta^{(t+1)}\leq\left(\frac{\theta q}{p}\right)\Delta^{(t)}$ for any $t\geq0$, where
%\small
\begin{eqnarray}
\nonumber
\Delta^{(t)}&=&\left(\frac{p}{2q\tau}+\frac{p\lambda}{q}\right)\|x^\star-x^{(t)}\|^2
+\left(\frac{n}{2m\sigma}+\frac{\gamma}{m}\right)\|y^\star-y^{(t)}\|^2\\\label{eq:Deltat}
&&+\frac{p}{qn}\left\langle  A(x^{(t)}-x^{(t-1)}),y^{(t)}-y^\star\right\rangle
+\left(\frac{p}{2q\tau}-\frac{(n-m) p }{4n\tau q}\right)\|x^{(t)}-x^{(t-1)}\|^2.
\end{eqnarray}
\normalsize
Applying this result recursively gives $\mathbb{E}\Delta^{(t)}\leq\left(\frac{\theta q}{p}\right)^t\Delta^{(0)}$ where
%\small
\begin{eqnarray*}
\Delta^{(0)}&=&\left(\frac{p}{2q\tau}+\frac{p\lambda}{q}\right)\|x^\star-x^{(0)}\|^2
+\left(\frac{n}{2m\sigma}+\frac{\gamma}{m}\right)\|y^\star-y^{(0)}\|^2
\end{eqnarray*}
\normalsize
because $(x^{(0)},y^{(0)})=(x^{(-1)},y^{(-1)})$.

%By Jensen's inequality and \eqref{eq:Lambda}, we have
%\small
%\begin{eqnarray}
%\label{eq:randomLambda}
%\frac{m^2}{n^2}\|(A^J)^T(y^{(t)}-y^\star)\|^2
%=\|\mathbb{E}(A_I^J)^T(y_I^{(t)}-y_I^\star)\|^2
%\leq\mathbb{E}\|(A_I^J)^T(y_I^{(t)}-y_I^\star)\|^2
%\leq\Lambda\mathbb{E}\|y_I^{(t)}-y_I^\star\|^2=\frac{m\Lambda}{n}\|y^{(t)}-y^\star\|^2.
%\end{eqnarray}
%\normalsize
Let $\tilde I$ be a uniformly random subset of $\{1,2,\dots,n\}$ with $|\tilde I|=m$, i.e., each index in $\{1,2,\dots,n\}$ is contained in $\tilde I$ with a probability of $\frac{m}{n}$. By Jensen's inequality and \eqref{eq:Lambda}, we have
%\small
$$
\frac{m^2}{n^2}\|(A^J)^T(y^{(t)}-y^\star)\|^2=\|\mathbb{E}(A_{\tilde I}^J)^T(y^{(t)}_{\tilde I}-y^\star_{\tilde I})\|^2\leq\mathbb{E}\|(A_{\tilde I}^J)^T(y^{(t)}_{\tilde I}-y^\star_{\tilde I})\|^2
=\frac{m\Lambda}{n}\|y^{(t)}-y^\star\|^2,
$$
\normalsize
where the expectation $\mathbb{E}$ is taken over $\tilde I$. This result further implies
%\small
\begin{eqnarray}
\label{eq:randomLambda}
\|(A^J)^T(y^{(t)}-y^\star)\|^2\leq\frac{n\Lambda}{m}\|y^{(t)}-y^\star\|^2.
\end{eqnarray}
\normalsize
Note that $x^{(t)}-x^{(t-1)}$ is a sparse vector with non-zero values only in the coordinates indexed by $J$. Hence, by Young's inequality, we have
%we follow the same analysis as in \eqref{eq:young3} to obtain
%\small
\begin{eqnarray}
\nonumber
\left\langle A(x^{(t)}-x^{(t-1)}), y^{(t)}-y^\star\right\rangle
&\geq&
-\frac{\tau}{n}\|(A^J)^T(y^{(t)}-y^\star)\|^2-\frac{\|x^{(t)}-x^{(t-1)}\|^2}{4\tau/n}\\\nonumber
&\geq&-\frac{\tau \Lambda}{m}\|y^{(t)}-y^\star\|^2-\frac{\|x^{(t)}-x^{(t-1)}\|^2}{4\tau/n}\\\label{eq:young5}
&=&-\frac{nq}{4\sigma p} \|y^{(t)}-y^\star\|^2-\frac{\|x^{(t)}-x^{(t-1)}\|^2}{4\tau/n},
\end{eqnarray}
\normalsize
where the second inequality is because of \eqref{eq:randomLambda}
%$$
%\|(A_I)^T(y^{(t)}-y^\star)\|^2=\|\sum_{i=1}^nA^i(y_i^{(t)}-y_i^\star)\|^2
%\leq\left(\sum_{i=1}^n\|A^i\||y_i^{(t)}-y_i^\star|\right)^2\leq nR^2\|y^{(t)}-y^\star\|^2
%$$
and the last equality is because $\tau\sigma=\frac{nmq}{4\Lambda p}$.
Applying~\eqref{eq:young5} to the right hand side of \eqref{eq:Deltat} leads to
%\small
\begin{eqnarray}
\nonumber
\Delta^{(t)}&\geq&\left(\frac{p}{2q\tau}+\frac{p\lambda}{q}\right)\|x^\star-x^{(t)}\|^2
+\left(\frac{n}{2m\sigma}+\frac{\gamma}{m}-\frac{1}{4\sigma}\right)\|y^\star-y^{(t)}\|^2\\\nonumber
&&
+\left(\frac{p}{2q\tau}-\frac{(n-m) p }{4n\tau q}-\frac{p}{4\tau q}\right)\|x^{(t)}-x^{(t-1)}\|^2\\\nonumber
&\geq&\left(\frac{p}{2q\tau}+\frac{p\lambda}{q}\right)\|x^\star-x^{(t)}\|^2
+\left(\frac{n}{4m\sigma}+\frac{\gamma}{m}\right)\|y^\star-y^{(t)}\|^2,
\end{eqnarray}
\normalsize
where the second inequalities holds because
%\small
$
\frac{n}{2m\sigma}-\frac{1}{4\sigma}
\geq\frac{n}{2m\sigma}-\frac{n}{4m\sigma}=\frac{n}{4m\sigma}
$
%\normalsize
and
%\small
$
\frac{p}{2q\tau}-\frac{(n-m) p }{4n\tau q}-\frac{p}{4\tau q}\geq\frac{p}{2q\tau}-\frac{p}{4\tau q}-\frac{p}{4\tau q}=0.
$
%\normalsize
Then, the conclusion of Theorem~\ref{thm:conv_strong_astrsym_switch} can be obtained as
%\small
\begin{eqnarray*}
&&\left(\frac{p}{2q\tau}+\frac{p\lambda}{q}\right)\mathbb{E}\|x^\star-x^{(t)}\|^2
+\left(\frac{n}{4m\sigma}+\frac{\gamma}{m}\right)\mathbb{E}\|y^\star-y^{(t)}\|^2
\leq\mathbb{E}\Delta^{(t)}
\leq\left(\frac{\theta q}{p}\right)^t\Delta^{(0)}\\
&\leq&\left(1-\frac{1}{\max\left\{\frac{p}{q},\frac{n}{m}\right\}+\frac{pR\sqrt{n}}{q\sqrt{m\lambda\gamma}}}\right)^t
\left[\left(\frac{p}{2q\tau}+\frac{p\lambda}{q}\right)\|x^\star-x^{(0)}\|^2
+\left(\frac{n}{2m\sigma}+\frac{\gamma}{m}\right)\|y^\star-y^{(0)}\|^2\right].
\end{eqnarray*}
\normalsize
%\qed
\end{proof}

\subsection{Convergence of objective gap}
To establish the convergence of primal-dual gap (Theorem \ref{thm:conv_strong_astrsym_switch_gap}), we define the following two functions
%\small
\begin{eqnarray}
\label{eq:Pt}
\tilde P(x)&\equiv&g(x)+\frac{1}{n}(y^\star)^TAx-\left[g(x^\star)+\frac{1}{n}(y^\star)^TAx^\star\right],\\\label{eq:Dt}
\tilde D(y)&\equiv&\frac{1}{n}y^TAx^\star-\frac{1}{n}\sum_{i=1}^n\phi_i^*(y_i)-\left[\frac{1}{n}(y^\star)^TAx^\star-\frac{1}{n}\sum_{i=1}^n\phi_i^*(y_i^\star)\right].
\end{eqnarray}
\normalsize
Note that
%\small
\begin{eqnarray}
\label{eq:tildePD}
\tilde P(x)\geq\frac{\lambda}{2}\|x-x^\star\|^2\text{ and }\tilde D(y)\leq-\frac{\gamma}{2n}\|y-y^\star\|^2
\end{eqnarray}
\normalsize
and
%\small
\begin{eqnarray}
\label{eq:tildePDandPD}
\tilde P(x)-\tilde D(y)\leq P(x)-D(y)
\end{eqnarray}
\normalsize
for any $x\in\mathbb{R}^p$ and any $y\in\mathbb{R}^n$
because of \eqref{eq:def_sp} and the strong convexity of $g(x)$ and $\frac{1}{n}\sum_{i=1}^n\phi_i^*(y_i)$.
%\begin{eqnarray}
%\label{eq:nonnegPD}
%\tilde P(x)\geq0,\quad \tilde D(y)\leq0
%\end{eqnarray}
Then, we provide the following proposition, which is the key to prove Theorem~\ref{thm:conv_strong_astrsym_switch_gap}.
%we introduce a weighted duality gap function
%\begin{eqnarray}
%\label{eq:gap_Q}
%Q(x,y)=\frac{p}{q}\left[P(x)-P(x^\star)\right]+\frac{n}{m}\left[D(y^\star)-D(y)\right].
%\end{eqnarray}
%It is known that $Q(x,y)\leq0$ for any $x\in\mathbb{R}^p$ and $y\in\mathbb{R}^n$
%Using this weighted duality gap function,

\begin{proposition}
\label{thm:mainprop_nostrong}
Let $x^{(t)}$, $x^{(t+1)}$, $y^{(t)}$ and $y^{(t+1)}$ generated as in Algorithm~\ref{alg:aspdc_general} for $t=0,1,\dots$ with the parameters $\tau$ and $\sigma$ satisfying $\tau\sigma=\frac{nmq}{4p\Lambda}$. We have
%\small
\begin{eqnarray}
\nonumber
&&\left(\frac{p}{2q\tau}+\frac{(p-q)\lambda}{2q}\right)\|x^\star-x^{(t)}\|^2
+\left(\frac{n}{2m\sigma}+\frac{(n-m)\gamma}{2mn}\right)\|y^\star-y^{(t)}\|^2\\\nonumber
&&\frac{\theta}{n}\left\langle  A(x^{(t)}-x^{(t-1)}),y^{(t)}-y^\star\right\rangle+\frac{\theta\|x^{(t)}-x^{(t-1)}\|^2}{4\tau }+\frac{p-q}{q}\tilde P(x^{(t)})-\frac{n-m}{m}\tilde D(y^{(t)})\\\nonumber
&\geq&\left(\frac{p}{2q\tau}+\frac{p\lambda}{2q}\right)\mathbb{E}_t\|x^{(t+1)}-x^\star\|^2
+\left(\frac{n}{2m\sigma}+\frac{\gamma}{2m}\right)\mathbb{E}_t\|y^{(t+1)}-y^\star\|^2
\\\nonumber
&&+\left(\frac{p}{2q\tau}-\frac{(n-m) p }{4n\tau q}\right)\mathbb{E}_t\|x^{(t+1)}-x^{(t)}\|^2
+\left(\frac{n}{2m\sigma}-\frac{\theta nq}{4\sigma mp}-\frac{n-m}{4\sigma m}\right)\mathbb{E}_t\|y^{(t+1)}-y^{(t)}\|^2\\\label{eq:prop2_main}
&&+\frac{p}{nq}\mathbb{E}_t\left\langle A^T(y^{(t+1)}-y^\star), x^{(t+1)}-x^{(t)}\right\rangle
+\frac{p}{q}\mathbb{E}_t\tilde P(x^{(t+1)})-\frac{n}{m}\mathbb{E}_t\tilde D(y^{(t+1)}).
\end{eqnarray}
\normalsize
\end{proposition}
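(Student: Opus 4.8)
The plan is to mirror the proof of Proposition~\ref{thm:mainprop_strong_asym} almost line by line, but to invoke the \emph{first} conclusions of Lemma~\ref{thm:ZhangXiaolemma} and Lemma~\ref{thm:ZhangXiaolemma2} instead of the second ones, since it is precisely those first inequalities that retain the function-value terms $g(\hx)-g(x)$ and $\phi_i^*(\hy_i)-\phi_i^*(y_i)$ needed to build the gap. Concretely, I would apply the first conclusion of Lemma~\ref{thm:ZhangXiaolemma} with $x=x^\star$, $\bx=x^{(t)}$, $v=A^T\by^{(t+1)}$ and $\hx=x^{(t+1)}$, and the first conclusion of Lemma~\ref{thm:ZhangXiaolemma2} with $y=y^\star$, $\by=y^{(t)}$, $u=A\bx^{(t)}$ and $\hy=y^{(t+1)}$. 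As in Proposition~\ref{thm:mainprop_strong_asym}, the primal inequality is first taken in conditional expectation $\mathbb{E}_{t+}$ (under which $\by^{(t+1)}$, hence $v$, is deterministic) and then combined with the dual inequality via the tower property $\mathbb{E}_t\mathbb{E}_{t+}=\mathbb{E}_t$.

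Next I would rewrite the function-value terms using the definitions~\eqref{eq:Pt} and~\eqref{eq:Dt}. Since $g(x)-g(x^\star)=\tilde P(x)-\tfrac{1}{n}\langle A^Ty^\star,x-x^\star\rangle$ and $\sum_i(\phi_i^*(y_i)-\phi_i^*(y_i^\star))=\langle Ax^\star,y-y^\star\rangle-n\tilde D(y)$, the quantities $\tfrac{p-q}{q}(g(x^{(t)})-g(x^\star))$, $\tfrac{p}{q}(g(x^{(t+1)})-g(x^\star))$ and the corresponding $\phi^*$ sums turn into exactly the $\tilde P$ and $\tilde D$ terms with the coefficients $\tfrac{p-q}{q},\tfrac{p}{q},\tfrac{n-m}{m},\tfrac{n}{m}$ appearing in~\eqref{eq:prop2_main}, at the cost of spawning several linear-in-$A$ terms of the form $\langle A^Ty^\star,\cdot\rangle$ and $\langle Ax^\star,\cdot\rangle$.

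The crux is to show that these freshly created linear terms combine with the inner products $\tfrac{1}{n}\langle A^T\by^{(t+1)},x^{(t)}+\tfrac{p}{q}(x^{(t+1)}-x^{(t)})-x^\star\rangle$ and $\tfrac{1}{n}\langle A\bx^{(t)},y^{(t)}+\tfrac{n}{m}(y^{(t+1)}-y^{(t)})-y^\star\rangle$ left over from the two lemmas to reconstruct precisely the \emph{shifted} cross terms $\tfrac{1}{n}\langle A^T(\by^{(t+1)}-y^\star),\cdot\rangle$ and $-\tfrac{1}{n}\langle A(\bx^{(t)}-x^\star),\cdot\rangle$ that drove the proof of Proposition~\ref{thm:mainprop_strong_asym}. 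I would verify this on the primal side by collecting the $A^Ty^\star$ contributions (all carrying the factor $\tfrac{1}{n}$): they equal $\tfrac{p-q}{q}\langle A^Ty^\star,x^{(t)}-x^\star\rangle-\tfrac{p}{q}\langle A^Ty^\star,x^{(t+1)}-x^\star\rangle$, which, using $\tfrac{p}{q}-\tfrac{p-q}{q}=1$, simplifies to exactly $-\langle A^Ty^\star,x^{(t)}+\tfrac{p}{q}(x^{(t+1)}-x^{(t)})-x^\star\rangle$, thereby turning $A^T\by^{(t+1)}$ into $A^T(\by^{(t+1)}-y^\star)$ inside the inner product; the dual side is symmetric. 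Once this identity is in place, the remaining cross-term bookkeeping is \emph{verbatim} that of Proposition~\ref{thm:mainprop_strong_asym}: expand $\by^{(t+1)}-y^\star$ and $\bx^{(t)}-x^\star$ through the momentum steps as in~\eqref{eq:prop12}--\eqref{eq:prop13} to reach the expression~\eqref{eq:prop3_3}, then lower-bound the two residual inner products by Young's inequality exactly as in~\eqref{eq:young3}--\eqref{eq:young4}, using $\tau\sigma=\tfrac{nmq}{4p\Lambda}$ together with the sparsity of $x^{(t)}-x^{(t-1)}$ and $y^{(t+1)}-y^{(t)}$.

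I expect the only real obstacle to be this linear-term bookkeeping: one must keep careful track of the coefficients $\tfrac{p-q}{q}$ versus $\tfrac{p}{q}$ (and $\tfrac{n-m}{m}$ versus $\tfrac{n}{m}$) coming from the two distinct function-value terms in each lemma, and confirm they cancel so as to reproduce a clean shift by $(x^\star,y^\star)$. Everything downstream of that identity—the Young steps and the collection of quadratic coefficients—is identical to the already-completed argument for Proposition~\ref{thm:mainprop_strong_asym} and requires no new idea; assembling these pieces yields the claimed inequality~\eqref{eq:prop2_main}.
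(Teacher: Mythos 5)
Your proposal is correct and follows essentially the same route as the paper: invoke the first conclusions of Lemmas~\ref{thm:ZhangXiaolemma} and~\ref{thm:ZhangXiaolemma2} at $(x,y)=(x^\star,y^\star)$, convert the function-value differences into $\tilde P$ and $\tilde D$ via \eqref{eq:Pt}--\eqref{eq:Dt} so that the linear-in-$A$ remainders shift the inner products to $A^T(\by^{(t+1)}-y^\star)$ and $A(\bx^{(t)}-x^\star)$, and then reuse the cross-term estimate \eqref{eq:prop3_4} from Proposition~\ref{thm:mainprop_strong_asym}. The coefficient bookkeeping you flag (the $\tfrac{p}{q}$ versus $\tfrac{p-q}{q}$ cancellation) does work out exactly as you describe, so no gap remains.
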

\begin{proof}
Let $x^{(t)}$, $x^{(t+1)}$ and $\by^{(t+1)}$ generated as in Algorithm~\ref{alg:aspdc_general}. By the first conclusion of Lemma \ref{thm:ZhangXiaolemma} and the tower property $\mathbb{E}_t\mathbb{E}_{t+}=\mathbb{E}_t$, for any $x\in\mathbb{R}^p$,
%\small
\begin{eqnarray}
\label{eq:lemmaoptcondexp1}
&&\left(\frac{p}{2q\tau}+\frac{(p-q)\lambda}{2q}\right)\|x-x^{(t)}\|^2
+\frac{p-q}{q}\left(g(x^{(t)})-g(x)\right)\\\nonumber
&\geq&\left(\frac{p}{2q\tau}+\frac{p\lambda}{2q}\right)\mathbb{E}\|x^{(t+1)}-x\|^2
+\frac{p}{2q\tau}\mathbb{E}\|x^{(t+1)}-x^{(t)}\|^2+\frac{p}{q}\mathbb{E}\left(g(x^{(t+1)})-g(x)\right)\\\nonumber
&&+\frac{1}{n}\mathbb{E}\left\langle A^T\by^{(t+1)},x^{(t)}+\frac{p}{q}(x^{(t+1)}-x^{(t)})-x\right\rangle.
\end{eqnarray}
\normalsize
Let $y^{(t)}$, $y^{(t+1)}$ and $\bx^{(t)}$ generated as in Algorithm~\ref{alg:aspdc_general}. By the first conclusion of Lemma \ref{thm:ZhangXiaolemma2}, we have, for any $y\in\mathbb{R}^n$,
%\small
\begin{eqnarray}
\label{eq:lemmaontcondeyn2}
&&\left(\frac{n}{2m\sigma}+\frac{(n-m)\gamma}{2mn}\right)\|y-y^{(t)}\|^2+\frac{n-m}{mn}\sum_{i=1}^n\left(\phi_i^*(y_i^{(t)})
-\phi_i^*(y_i)\right)\\\nonumber
&\geq&\left(\frac{n}{2m\sigma}+\frac{\gamma}{2m}\right)\mathbb{E}\|y^{(t+1)}-y\|^2
+\frac{n}{2m\sigma}\mathbb{E}\|y^{(t+1)}-y^{(t)}\|^2
+\frac{1}{m}\sum_{i=1}^n\mathbb{E}\left(\phi_i^*(y_i^{(t+1)})
-\phi_i^*(y_i)\right)\\\nonumber
&&-\frac{1}{n}\mathbb{E}\left\langle A\bx^{(t)},y^{(t)}+\frac{n}{m}(y^{(t+1)}-y^{(t)})-y\right\rangle
\end{eqnarray}
\normalsize
Summing up the inequalities \eqref{eq:lemmaoptcondexp1} and \eqref{eq:lemmaontcondeyn2} and setting $(x,y)=(x^\star,y^\star)$ yield
%\small
\begin{eqnarray}
\nonumber
&&\left(\frac{p}{2q\tau}+\frac{(p-q)\lambda}{2q}\right)\|x^\star-x^{(t)}\|^2
+\left(\frac{n}{2m\sigma}+\frac{(n-m)\gamma}{2mn}\right)\|y^\star-y^{(t)}\|^2\\\label{eq:pdgap_1}
&&+\frac{p-q}{q}\left(g(x^{(t)})-g(x^\star)\right)+\frac{n-m}{mn}\sum_{i=1}^n\left(\phi_i^*(y_i^{(t)})-\phi_i^*(y_i^\star)\right)\\\nonumber
&\geq&\left(\frac{p}{2q\tau}+\frac{p\lambda}{2q}\right)\mathbb{E}_t\|x^{(t+1)}-x^\star\|^2
+\frac{p}{2q\tau}\mathbb{E}_t\|x^{(t+1)}-x^{(t)}\|^2
+\left(\frac{n}{2m\sigma}+\frac{\gamma}{2m}\right)\mathbb{E}_t\|y^{(t+1)}-y^\star\|^2\\\nonumber
&&+\frac{p}{q}\mathbb{E}_t\left(g(x^{(t+1)})-g(x^\star)\right)+\frac{1}{m}\sum_{i=1}^n\mathbb{E}_t\left(\phi_i^*(y_i^{(t+1)})
-\phi_i^*(y_i^\star)\right)
+\frac{n}{2m\sigma}\mathbb{E}_t\|y^{(t+1)}-y^{(t)}\|^2\\\nonumber
&&+\frac{1}{n}\mathbb{E}_t\left\langle A^T\by^{(t+1)},x^{(t)}+\frac{p}{q}(x^{(t+1)}-x^{(t)})-x^\star\right\rangle
-\frac{1}{n}\mathbb{E}_t\left\langle A\bx^{(t)},y^{(t)}+\frac{n}{m}(y^{(t+1)}-y^{(t)})-y^\star\right\rangle.
\end{eqnarray}
\normalsize

By the definitions of $\tilde P(x^{(t)})$, $\tilde D(y^{(t)})$, $\tilde P(x^{(t+1)})$, and $\tilde D(y^{(t+1)})$,  \eqref{eq:pdgap_1} is equivalent to
%\small
%\begin{eqnarray}
%\nonumber
%&&\left(\frac{p}{2q\tau}+\frac{(p-q)\lambda}{q}\right)\|x^\star-x^{(t)}\|^2+\left(\frac{n}{2m\sigma}+\frac{(n-m)\gamma}{mn}\right)\|y^\star-y^{(t)}\|^2\\\nonumber
%&\geq&\left(\frac{p}{2q\tau}+\frac{p\lambda}{q}\right)\mathbb{E}_t\|x^{(t+1)}-x^\star\|^2+\left(\frac{n}{2m\sigma}
%+\frac{\gamma}{m}\right)\mathbb{E}_t\|y^{(t+1)}-y^\star\|^2\\\nonumber
%&&+\frac{p}{2q\tau}\mathbb{E}_t\|x^{(t+1)}-x^{(t)}\|^2+\frac{n}{2m\sigma}\mathbb{E}_t\|y^{(t+1)}-y^{(t)}\|^2\\\nonumber
%&&+\frac{1}{n}\mathbb{E}_t\left\langle A^T(\by^{(t+1)}-y^\star),x^{(t)}+\frac{p}{q}(x^{(t+1)}-x^{(t)})-x^\star\right\rangle\\\label{eq:prop11}
%&&-\frac{1}{n}\mathbb{E}_t\left\langle A(\bx^{(t)}-x^\star),y^{(t)}+\frac{n}{m}(y^{(t+1)}-y^{(t)})-y^\star\right\rangle.
%\end{eqnarray}
%\normalsize

%\small
\begin{eqnarray}
\nonumber
&&\left(\frac{p}{2q\tau}+\frac{(p-q)\lambda}{2q}\right)\|x^\star-x^{(t)}\|^2
+\left(\frac{n}{2m\sigma}
+\frac{(n-m)\gamma}{2mn}\right)\|y^\star-y^{(t)}\|^2\\\nonumber
&&
+\frac{p-q}{q}\tilde P(x^{(t)})
+\frac{n-m}{m}\tilde D(y^{(t)})\\\nonumber
&\geq&\left(\frac{p}{2q\tau}+\frac{p\lambda}{2q}\right)\mathbb{E}_t\|x^{(t+1)}-x^\star\|^2
+\frac{p}{2q\tau}\mathbb{E}_t\|x^{(t+1)}-x^{(t)}\|^2\
+\left(\frac{n}{2m\sigma}+\frac{\gamma}{2m}\right)\mathbb{E}_t\|y^{(t+1)}-y^\star\|^2\\\label{eq:pdgap_4}
&&+\frac{p}{q}\mathbb{E}_t\tilde P(x^{(t+1)})
+\frac{n}{m}\mathbb{E}_t\tilde D(y^{(t+1)})
-\frac{1}{n}\mathbb{E}_t\left\langle A(\bx^{(t)}-x^\star),y^{(t)}+\frac{n}{m}(y^{(t+1)}-y^{(t)})-y^\star\right\rangle
\\\nonumber
&&+\frac{1}{n}\mathbb{E}_t\left\langle A^T(\by^{(t+1)}-y^\star),x^{(t)}+\frac{p}{q}(x^{(t+1)}-x^{(t)})-x^\star\right\rangle
+\frac{n}{2m\sigma}\mathbb{E}_t\|y^{(t+1)}-y^{(t)}\|^2.
\end{eqnarray}
\normalsize
which, together with \eqref{eq:prop3_4}, implies
%\small
\begin{eqnarray}
\nonumber
&&\left(\frac{p}{2q\tau}+\frac{(p-q)\lambda}{2q}\right)\|x^\star-x^{(t)}\|^2
+\left(\frac{n}{2m\sigma}+\frac{(n-m)\gamma}{2mn}\right)\|y^\star-y^{(t)}\|^2\\\nonumber
&&
+\frac{p-q}{q}\tilde P(x^{(t)})
-\frac{n-m}{m}\tilde D(y^{(t)})\\\nonumber
&\geq&\left(\frac{p}{2q\tau}+\frac{p\lambda}{2q}\right)\mathbb{E}_t\|x^{(t+1)}-x^\star\|^2
+\frac{p}{2q\tau}\mathbb{E}_t\|x^{(t+1)}-x^{(t)}\|^2
+\frac{p}{q}\mathbb{E}_t\tilde P(x^{(t+1)})
-\frac{n}{m}\mathbb{E}_t\tilde D(y^{(t+1)})\\\nonumber
&&+\left(\frac{n}{2m\sigma}+\frac{\gamma}{2m}\right)\mathbb{E}_t\|y^{(t+1)}-y^\star\|^2
+\frac{n}{2m\sigma}\mathbb{E}_t\|y^{(t+1)}-y^{(t)}\|^2\\\nonumber
&&
+\frac{p}{nq}\left\langle A^T(y^{(t+1)}-y^\star), x^{(t+1)}-x^{(t)}\right\rangle
-\frac{\theta}{n}\left\langle  A(x^{(t)}-x^{(t-1)}),y^{(t)}-y^\star\right\rangle\\\nonumber
&&-\left(\frac{\theta n q}{4\sigma mp}+\frac{n-m}{4\sigma m}\right)\|y^{(t+1)}-y^{(t)}\|^2-\frac{\theta\|x^{(t)}-x^{(t-1)}\|^2}{4\tau }-\frac{(n-m) p \|x^{(t+1)}-x^{(t)}\|^2}{4\tau n q}.
\end{eqnarray}
\normalsize
The conclusion of the proposition is obtained by organizing the terms of the inequality above.
%\qed
\end{proof}

Based on Proposition~\ref{thm:mainprop_nostrong}, we now can prove Theorem~\ref{thm:conv_strong_astrsym_switch_gap}.

\begin{proof}[\textbf{Theorem~\ref{thm:conv_strong_astrsym_switch_gap}}]
%By the definition of the saddle-point $(x^\star,y^\star)$, we can show
%\begin{eqnarray}
%\label{eq:PtDtnonneg}
%\tilde P(x)\geq0 \quad\text{ and }\quad
%\tilde D(y)\geq0,\quad\forall x\in\mathbb{R}^p, y\in\mathbb{R}^n.
%\end{eqnarray}
%To obtain the conclusion of Theorem~\ref{thm:conv_strong_astrsym_switch_gap} from Proposition~\ref{thm:mainprop_nostrong},
We first show that the following inequalities are satisfied according to the choice for $\theta$ in \eqref{eq:thetagap} and the choices for $\tau$ and $\sigma$ in~\eqref{eq:thetatausigma}.
%\small
\begin{eqnarray}
\label{eq:cond9}
\left(\frac{p}{2q\tau}+\frac{p\lambda}{2q}\right)\frac{\theta q}{p}
&\geq&\left(\frac{p}{2q\tau}+\frac{(p-q)\lambda}{2q}\right),\\
\label{eq:cond10}
\left(\frac{n}{2m\sigma}+\frac{\gamma}{2m}\right)\frac{\theta q}{p}
&\geq&\left(\frac{n}{2m\sigma}+\frac{(n-m)\gamma}{2mn}\right),\\
\label{eq:cond11}
\frac{n}{2m\sigma}-\frac{\theta nq}{4\sigma mp}-\frac{n-m}{4\sigma m}&\geq&0,\\
\label{eq:cond12}
\left(\frac{p}{2q\tau}-\frac{(n-m) p }{4n\tau q}\right)\frac{\theta q}{p}&\geq&\frac{\theta}{4\tau},\\
%\end{eqnarray}
%\normalsize
%\small
%\begin{eqnarray}
\label{eq:cond13}
\frac{\theta q}{p}&\geq&\frac{p-q}{p},\\
\label{eq:cond14}
\frac{\theta q}{p}&\geq&\frac{n-m}{n}.
\end{eqnarray}
\normalsize

%In fact, \eqref{eq:cond11} and \eqref{eq:cond12} hold because they are exactly \eqref{eq:cond7} and \eqref{eq:cond8}.
Since $\tau$ and $\sigma$ still satisfy \eqref{eq:thetatausigma} as in Theorem \ref{thm:conv_strong_astrsym_switch}, \eqref{eq:T} and \eqref{eq:S} are still satisfied. Therefore, we have
%\small
\begin{eqnarray*}
\frac{p}{q\lambda\tau}+\frac{p}{q}\leq\frac{n}{m}+\left|\frac{n}{m}-\frac{p}{q}\right|
+\frac{2np\sqrt{\Lambda}}{mq\sqrt{n\lambda\gamma}}
=2\max\left\{\frac{p}{q},\frac{n}{m}\right\}
+\frac{2np\sqrt{\Lambda}}{mq\sqrt{n\lambda\gamma}}
\end{eqnarray*}
\normalsize
according to \eqref{eq:T} so that, by the new choice for $\theta$ in \eqref{eq:thetagap},
%\small
\begin{eqnarray*}
\left(\frac{p}{2q\tau}+\frac{(p-q)\lambda}{2q}\right)/\left(\frac{p}{2q\tau}+\frac{p\lambda}{2q}\right)=
1-\frac{1}{\frac{p}{q\lambda\tau}+\frac{p}{q}}
\leq1-\frac{1}{2\max\left\{\frac{p}{q},\frac{n}{m}\right\}+\frac{2np\sqrt{\Lambda}}{mq\sqrt{n\lambda\gamma}}} =\frac{\theta q}{p}.
\end{eqnarray*}
\normalsize
Similarly, according to~\eqref{eq:S}, we have
%\small
\begin{eqnarray*}
\frac{n^2}{m\gamma\sigma}+\frac{n}{m}\leq\frac{p}{q}+\left|\frac{n}{m}-\frac{p}{q}\right|
+\frac{2np\sqrt{\Lambda}}{mq\sqrt{n\lambda\gamma}}
=2\max\left\{\frac{p}{q},\frac{n}{m}\right\}+\frac{2np\sqrt{\Lambda}}{mq\sqrt{n\lambda\gamma}}
\end{eqnarray*}
\normalsize
so that
%\small
\begin{eqnarray*}
\left(\frac{n}{2m\sigma}+\frac{(n-m)\gamma}{2mn}\right)/\left(\frac{n}{2m\sigma}+\frac{\gamma}{2m}\right)
=1-\frac{1}{\frac{n^2}{m\gamma\sigma}+\frac{n}{m}}
\leq1-\frac{1}{2\max\left\{\frac{p}{q},\frac{n}{m}\right\}
+\frac{2np\sqrt{\Lambda}}{mq\sqrt{n\lambda\gamma}}} =\frac{\theta q}{p}.
\end{eqnarray*}
\normalsize
Therefore, we have shown that \eqref{eq:cond9} and \eqref{eq:cond10} are satisfied.
The inequality \eqref{eq:cond11} holds because
%\small
\begin{eqnarray*}
\frac{n}{2m\sigma}-\frac{\theta nq}{4\sigma mp}-\frac{n-m}{4\sigma m}
\geq\frac{n}{2m\sigma}-\frac{ n}{4\sigma m}-\frac{n}{4\sigma m}=0,
\end{eqnarray*}
\normalsize
where we use the fact that $\frac{\theta q}{p}\leq1$. The inequality \eqref{eq:cond12} holds because
%\small
\begin{eqnarray*}
\left(\frac{p}{2q\tau}-\frac{(n-m) p }{4n\tau q}\right)\frac{\theta q}{p}
=\theta\left(\frac{1}{2\tau}-\frac{(n-m) }{4n\tau}\right)
\geq\theta\left(\frac{1}{2\tau}-\frac{1}{4\tau}\right)=\frac{\theta}{4\tau}.
\end{eqnarray*}
\normalsize
The inequalities \eqref{eq:cond13} and \eqref{eq:cond14} hold because
%\small
\begin{eqnarray*}
\max\left\{\frac{p-q}{p},\frac{n-m}{n}\right\}
\leq1-\frac{1}{2\max\left\{\frac{p}{q},\frac{n}{m}\right\}+\frac{2np\sqrt{\Lambda}}{mq\sqrt{n\lambda\gamma}}} =\frac{\theta q}{p}.
\end{eqnarray*}
\normalsize
Recall that $\tilde P(x)\geq0$ and $\tilde D(y)\leq0$ for any $x\in\mathbb{R}^p$ and any $y\in\mathbb{R}^n$. Therefore, applying the six inequalities \eqref{eq:cond9}, \eqref{eq:cond10}, \eqref{eq:cond11}, \eqref{eq:cond12}, \eqref{eq:cond13} and \eqref{eq:cond14} to the coefficients of \eqref{eq:prop2_main} from Proposition~\ref{thm:mainprop_nostrong} leads to
$\mathbb{E}_t\Delta^{(t+1)}\leq\left(\frac{\theta q}{p}\right)\Delta^{(t)}$ for any $t\geq0$, where
%\small
\begin{eqnarray}
\nonumber
\Delta^{(t)}&=&\left(\frac{p}{2q\tau}+\frac{p\lambda}{2q}\right)\|x^{(t)}-x^\star\|^2
+\left(\frac{n}{2m\sigma}+\frac{\gamma}{2m}\right)\|y^{(t)}-y^\star\|^2
+\frac{p}{q}\tilde P(x^{(t)})-\frac{n}{m}\tilde D(y^{(t)})\\\label{eq:gapDelta}
&&+\left(\frac{p}{2q\tau}-\frac{(n-m) p }{4n\tau q}\right)\|x^{(t)}-x^{(t-1)}\|^2+\frac{p}{nq}\left\langle A^T(y^{(t)}-y^\star), x^{(t)}-x^{(t-1)}\right\rangle.
\end{eqnarray}
\normalsize
Applying this result recursively gives $\mathbb{E}\Delta^{(t)}\leq\left(\frac{\theta q}{p}\right)^t\Delta^{(0)}$, where
%\small
\begin{eqnarray}
\label{eq:thm2delta0}
\Delta^{(0)}&=&\left(\frac{p}{2q\tau}+\frac{p\lambda}{2q}\right)\|x^{(0)}-x^\star\|^2
+\left(\frac{n}{2m\sigma}+\frac{\gamma}{2m}\right)\|y^{(0)}-y^\star\|^2\\\nonumber
&&
+\frac{p}{q}\tilde P(x^{(0)})-\frac{n}{m}\tilde D(y^{(0)})
\end{eqnarray}
\normalsize
because $(x^{(0)},y^{(0)})=(x^{(-1)},y^{(-1)})$.
Applying~\eqref{eq:young5} to the right hand side of \eqref{eq:gapDelta} leads to
%\small
\begin{eqnarray}
\nonumber
\Delta^{(t)}&\geq&\left(\frac{p}{2q\tau}+\frac{p\lambda}{2q}\right)\|x^\star-x^{(t)}\|^2
+\left(\frac{n}{2m\sigma}+\frac{\gamma}{2m}-\frac{1}{4\sigma}\right)\|y^\star-y^{(t)}\|^2\\\nonumber
&&+\left(\frac{p}{2q\tau}-\frac{(n-m) p }{4n\tau q}-\frac{p}{4\tau q}\right)\|x^{(t)}-x^{(t-1)}\|^2
+\frac{p}{q}\tilde P(x^{(t)})-\frac{n}{m}\tilde D(y^{(t)})\\\label{eq:gap Delta_lb}
&\geq&\frac{p}{q}\tilde P(x^{(t)})-\frac{n}{m}\tilde D(y^{(t)}).
\end{eqnarray}
\normalsize
Combining~\eqref{eq:thm2delta0}, $\mathbb{E}\Delta^{(t)}\leq\left(\frac{\theta q}{p}\right)^t\Delta^{(0)}$ and~\eqref{eq:gap Delta_lb} together, we obtain
%\small
\begin{eqnarray}
\label{eq:thm2eq1}
\min\left\{\frac{p}{q},\frac{n}{m}\right\}\mathbb{E}\left(\tilde P(x^{(t)})-\tilde D(y^{(t)})\right)\leq\frac{p\mathbb{E}\tilde P(x^{(t)})}{q}-\frac{n\mathbb{E}\tilde D(y^{(t)})}{m}\leq\mathbb{E}\Delta^{(t)}\leq\left(\frac{\theta q}{p}\right)^t\Delta^{(0)}.
\end{eqnarray}
\normalsize
In the next, we will establish the relationship between $\tilde P(x^{(t)})-\tilde D(y^{(t)})$ and the actual primal-dual objective gap $P(x^{(t)})-D(y^{(t)})$.
%According to Lemma \ref{thm:Lambda}, we have
%\begin{eqnarray}
%\label{eq:2normA}
%\|A\|^2=\max_{y\in\mathbb{R}^n,\|y\|=1}\|A^Ty\|^2\leq\frac{np\Lambda}{mq}.
%\end{eqnarray}

Because $\frac{1}{n}\sum_{i=1}^n\phi_i^*(y_i)$ is a $\frac{\gamma}{n}$-strong convex function of $y$, according to Theorem 1 in \cite{Nes:05}, the function defined as
%\small
\begin{eqnarray}
%\textstyle
\label{eq:hatP}
\hat P(x)\equiv\max_{y\in\mathbb{R}^n}\left\{\frac{1}{n}y^TAx-\frac{1}{n}\sum_{i=1}^n\phi_i^*(y_i)\right\}
\end{eqnarray}
%\normalsize
is a convex and smooth function of $x$. Moreover, its gradient $\nabla \hat P(x)$ is Lipschitz continuous with a Lipschitz constant of $\frac{n\|A\|^2}{n^2\gamma}= \frac{\|A\|^2}{n\gamma}$ and $\nabla \hat P(x^{\star})=\frac{1}{n}A^Ty^\star$. %$\frac{n\|A\|^2}{n^2\gamma}\leq \frac{n^2p\Lambda}{n^2mq\gamma}=\frac{R^2}{\gamma}$
As a result, we have
%\small
\begin{eqnarray}
\label{eq:LiphatP}
\hat P(x^{(t)})\leq\hat P(x^{\star})+\left\langle\nabla \hat P(x^{\star}),x^{(t)}-x^\star\right\rangle+\frac{\|A\|^2}{2n\gamma}\|x^{(t)}-x^\star\|^2
\end{eqnarray}
\normalsize

According to the definition of the primal and dual objective functions \eqref{eq:erm} and \eqref{eq:erm_dual} and their relationship with the saddle-point problem \eqref{eq:sdp}, we have
%\small
\begin{eqnarray}
\nonumber
&&P(x^{(t)})=\max_{y\in\mathbb{R}^n}\left\{ g(x^{(t)})+\frac{1}{n}y^TAx^{(t)}-\frac{1}{n}\sum_{i=1}^n\phi_i^*(y_i)\right\}
=g(x^{(t)})+\hat P(x^{(t)})\\\nonumber
&\leq& g(x^{(t)})+\max_{y\in\mathbb{R}^n}\left\{\frac{1}{n}y^TAx^\star-\frac{1}{n}\sum_{i=1}^n\phi_i^*(y_i)\right\}
+\frac{1}{n}(y^\star)^TA(x^{(t)}-x^\star)+\frac{\|A\|^2}{2n\gamma}\|x^{(t)}-x^\star\|^2\\\label{eq:thm2eq2}
&=& g(x^{(t)})-\frac{1}{n}\sum_{i=1}^n\phi_i^*(y_i^\star)
+\frac{1}{n}(y^\star)^TAx^{(t)}+\frac{\|A\|^2}{2n\gamma}\|x^{(t)}-x^\star\|^2,
\end{eqnarray}
\normalsize
where the inequality is due to \eqref{eq:hatP} and \eqref{eq:LiphatP} and the last equality is due to \eqref{eq:def_sp}.
%by the definition of the saddle-point $(x^\star,y^\star)$.

Similarly, because $g(x)$ is a $\lambda$-strong convex function of $x$, according to Theorem 1 in \cite{Nes:05} again, the function defined as
%\small
\begin{eqnarray}
%\textstyle
\label{eq:hatD}
\hat D(y)\equiv\min_{x\in\mathbb{R}^p}\left\{\frac{1}{n}y^TAx+g(x)\right\}
\end{eqnarray}
%\normalsize
is a concave and smooth function of $y$. Moreover, its gradient $\nabla \hat D(x)$ is Lipschitz continuous with a Lipschitz constant of $\frac{\|A\|^2}{n^2\lambda}$ and $\nabla \hat D(y^{\star})=\frac{1}{n}Ax^\star$. %$\frac{n\|A\|^2}{n^2\gamma}\leq \frac{n^2p\Lambda}{n^2mq\gamma}=\frac{R^2}{\gamma}$
As a result, we have
%\small
\begin{eqnarray}
\label{eq:LiphatD}
\hat D(y^{(t)})\geq\hat D(y^{\star})+\left\langle\nabla \hat D(y^{\star}),y^{(t)}-y^\star\right\rangle-\frac{\|A\|^2}{2n^2\gamma}\|y^{(t)}-y^\star\|^2
\end{eqnarray}
\normalsize

With a derivation similar to \eqref{eq:thm2eq2}, we can show
%\small
\begin{eqnarray}
\nonumber
&&D(y^{(t)})=\min_{x\in\mathbb{R}^p}\left\{ g(x)+\frac{1}{n}(y^{(t)})^TAx-\frac{1}{n}\sum_{i=1}^n\phi_i^*(y_i^{(t)})\right\}
=\hat D(y^{(t)})-\frac{1}{n}\sum_{i=1}^n\phi_i^*(y_i^{(t)})\\\nonumber
&\geq&-\frac{1}{n}\sum_{i=1}^n\phi_i^*(y_i^{(t)})+\min_{x\in\mathbb{R}^p}\left\{ g(x)+\frac{1}{n}(y^\star)^TAx\right\}+\frac{1}{n}(y^{(t)}-y^\star)^TAx^\star-\frac{\|A\|^2}{2\lambda n^2}\|y^{(t)}-y^\star\|^2\\\label{eq:thm2eq3}
&=&-\frac{1}{n}\sum_{i=1}^n\phi_i^*(y_i^{(t)})+g(x^\star)+\frac{1}{n}(y^{(t)})^TAx^\star-\frac{\|A\|^2}{2\lambda n^2}\|y^{(t)}-y^\star\|^2.
\end{eqnarray}
\normalsize

Combining \eqref{eq:thm2eq2} and \eqref{eq:thm2eq3} and using the definitions \eqref{eq:Pt} and \eqref{eq:Dt}, we obtain
%\small
\begin{eqnarray}
\label{eq:thm2eq4}
P(x^{(t)})-D(y^{(t)})-\frac{\|A\|^2}{2n\gamma}\|x^{(t)}-x^\star\|^2-\frac{\|A\|^2}{2\lambda n^2}\|y^{(t)}-y^\star\|^2
\leq
\tilde P(x^{(t)})-\tilde D(y^{(t)}).
\end{eqnarray}
\normalsize

Applying \eqref{eq:thm2eq4} to the left hand side of \eqref{eq:thm2eq1}, we can show
%\small
\begin{eqnarray}
\label{eq:thm2eq5}
&&\min\left\{\frac{p}{q},\frac{n}{m}\right\}\mathbb{E}\left(P(x^{(t)})-D(y^{(t)})\right)\\\nonumber
&\leq&\left(\frac{\theta q}{p}\right)^t\Delta^{(0)}
+\min\left\{\frac{p}{q},\frac{n}{m}\right\}\mathbb{E}\left(\frac{\|A\|^2}{2n\gamma}\|x^{(t)}-x^\star\|^2+\frac{\|A\|^2}{2\lambda n^2}\|y^{(t)}-y^\star\|^2\right)
\end{eqnarray}
\normalsize

The property \eqref{eq:tildePD} of $\tilde P$ and $\tilde D$ and \eqref{eq:gap Delta_lb} imply
%\small
\begin{eqnarray}
%\textstyle
\nonumber
&&\frac{\lambda p\mathbb{E}\|x^\star-x^{(t)}\|^2}{2q}+\frac{\gamma \mathbb{E}\|y^\star-y^{(t)}\|^2}{2m}\\\label{eq:thm2eq6}
&\leq&\frac{p\mathbb{E}\tilde P(x^{(t)})}{q}-\frac{n\mathbb{E}\tilde D(y^{(t)})}{m}
\leq\mathbb{E}\Delta^{(t)}\leq\left(\frac{\theta q}{p}\right)^t\Delta^{(0)}
\end{eqnarray}
\normalsize
which, together with \eqref{eq:thm2eq5}, further implies
%\small
\begin{eqnarray}
%\textstyle
\nonumber
&&\min\left\{\frac{p}{q},\frac{n}{m}\right\}\mathbb{E}\left(P(x^{(t)})-D(y^{(t)})\right)\\
&\leq&\bigg\{1+\frac{\min\left\{\frac{p}{q},\frac{n}{m}\right\}\max\left\{\frac{\|A\|^2}{n\gamma},\frac{\|A\|^2}{\lambda n^2}\right\}}{\min\left\{\frac{\lambda p}{q},\frac{\gamma }{m}\right\}}\bigg\}\left(\frac{\theta q}{p}\right)^t\Delta^{(0)}.
\label{eq:thm2eq7}
\end{eqnarray}
\normalsize
It is from \eqref{eq:tildePDandPD} that
%\small
\begin{eqnarray*}
\frac{p}{q}\tilde P(x^{(0)})-\frac{n}{m}\tilde D(y^{(0)})
\leq\max\left\{\frac{p}{q},\frac{n}{m}\right\}\left(\tilde P(x^{(0)})-\tilde D(y^{(0)})\right)
\leq\max\left\{\frac{p}{q},\frac{n}{m}\right\}\left(P(x^{(0)})-D(y^{(0)})\right).
\end{eqnarray*}
\normalsize
Using this inequality and the definition \eqref{eq:thm2delta0} of $\Delta^{(0)}$,
% and the fact that \eqref{eq:Pt}, \eqref{eq:Dt}
%\small
%\begin{eqnarray*}
%&&\frac{p}{q}\tilde P(x^{(0)})-\frac{n}{m}\tilde D(y^{(0)})\\
%&\leq&\max\left\{\frac{p}{q},\frac{n}{m}\right\}\left(\tilde P(x^{(0)})-\tilde D(y^{(0)})\right)\\
%&=&\max\left\{\frac{p}{q},\frac{n}{m}\right\}\left(\left[g(x^{(0)})
%+\frac{1}{n}(y^\star)^TAx^{(0)}-\frac{1}{n}\sum_{i=1}^n\phi_i^*(y_i^\star)\right]
%-\left[g(x^\star)+\frac{1}{n}(y^{(0)})^TAx^\star
%-\frac{1}{n}\sum_{i=1}^n\phi_i^*(y_i^{(0)})\right]\right)\\
%&\leq&\max\left\{\frac{p}{q},\frac{n}{m}\right\}\left(P(x^{(0)})-D(y^{(0)})\right),
%\end{eqnarray*}
%\normalsize
we obtain
%\small
$$
\Delta^{(0)}\leq\left(\frac{p}{2q\tau}+\frac{p\lambda}{2q}\right)\|x^{(0)}-x^\star\|^2
+\left(\frac{n}{2m\sigma}+\frac{\gamma}{2m}\right)\|y^{(0)}-y^\star\|^2
+\max\left\{\frac{p}{q},\frac{n}{m}\right\}\left(P(x^{(0)})-D(y^{(0)})\right)
$$
\normalsize
Applying this inequality to the right hand size of \eqref{eq:thm2eq7}, we obtain the conclusion of Theorem~\ref{thm:conv_strong_astrsym_switch_gap} by the new definition~\eqref{eq:thetagap} of $\theta$.
%\qed
\end{proof}

\bibliography{pdcg}
\bibliographystyle{plainnat}
%\bibliographystyle{spbasic}      % basic style, author-year citations
%\bibliographystyle{spmpsci}      % mathematics and physical sciences
%\bibliographystyle{spphys}       % APS-like style for physics
%\bibliography{}   % name your BibTeX data base

% Non-BibTeX users please use
%\begin{thebibliography}{}
%%
%% and use \bibitem to create references. Consult the Instructions
%% for authors for reference list style.
%%
%\bibitem{RefJ}
%% Format for Journal Reference
%Author, Article title, Journal, Volume, page numbers (year)
%% Format for books
%\bibitem{RefB}
%Author, Book title, page numbers. Publisher, place (year)
%% etc
%\end{thebibliography}

\end{document}